\newtheorem*{theorem*}{Theorem}
\theoremstyle{plain}
\newtheorem{theorem}{Theorem}
\newtheorem{lemma}[theorem]{Lemma}
\newtheorem{corollary}[theorem]{Corollary}
\newtheorem{claim}[theorem]{Claim}
\newtheorem{prop}[theorem]{Proposition}
\newtheorem{fact}[theorem]{Fact}
\newtheorem{atheorem}{Theorem}
\newtheorem*{remark}{Remark}
\newtheorem*{notation}{Notation}
\newtheorem{example}[theorem]{Example}
\theoremstyle{definition}
\newtheorem{definition}[theorem]{Definition}
\def\X{{\mathcal X}}
\def\H{{\mathcal H}}
\def\Y{{\mathcal Y}}
\def\A{{\mathcal A}}
\def\D{{\mathcal D}}
\def\Z{{\mathcal Z}}
\newcommand{\oig}{$\mathsf{OIG}$ }
\newcommand{\ad}{\mathsf{avd}}
\newcommand{\cx}{\mathcal{{X}}}
\newcommand{\ch}{\mathcal{{H}}}
\newcommand{\cy}{\mathcal{{Y}}}
\newcommand{\G}{\mathcal{G}}
\newcommand{\K}{\mathcal{K}}
\newcommand{\new}[1]{#1}
\newcommand{\dout}{\mathsf{outdeg}}
\newcommand{\Deg}{\mathsf{deg}}
\newcommand{\an}[1]{#1}
\title[A Characterization of Learnability]{A Characterization of Multiclass Learnability}
\author[Brukhim]{Nataly Brukhim}
\address{Department of Computer Science, Princeton} 
\email{nbrukhim@princeton.edu} 
\author[Carmon]{Daniel Carmon}
\address{Department of Mathematics,Technion-IIT} \email{daniel.carmon91@gmail.com}  
\author[Dinur]{Irit Dinur}
\address {Department of Computer Science,
Weizmann Institute}
\email{irit.dinur@weizmann.ac.il}
\author[Moran]{\\Shay Moran}
\address{Department of Mathematics,
Technion--IIT, and Google Research} 
\email{smoran@technion.ac.il}
\author[Yehudayoff]{Amir Yehudayoff}
\address{Department of Mathematics, Technion--IIT}\email{amir.yehudayoff@gmail.com}
\date{ }
\begin{document}
\date{}
\maketitle

\begin{abstract}
A seminal result in {learning theory}
characterizes the PAC learnability of binary classes 
through the Vapnik-Chervonenkis dimension. 
 Extending this characterization to the general multiclass setting
 has been open since the pioneering works on multiclass PAC learning in the late 1980s. 
    {This work resolves this problem: we characterize multiclass PAC learnability} through
the DS dimension, a combinatorial dimension defined by 
Daniely and Shalev-Shwartz (2014).


The classical characterization of the binary case boils down
    to empirical risk minimization. 
    In contrast, our characterization of the multiclass case involves a variety of algorithmic ideas;
    these include a natural setting we call {\it list PAC learning}.
    {In the list learning setting,}
instead of predicting a single outcome for a given 
   unseen input, the goal is to provide a short menu of predictions.


Our second main result concerns the Natarajan dimension, which 
has been a central candidate for characterizing multiclass learnability.
This dimension was introduced by Natarajan (1988) as a barrier for PAC learning. He furthered showed that it is the {only} barrier, provided that the number of labels is bounded. Whether the Natarajan dimension characterizes PAC learnability in general has been posed as an open question in several papers since. This work provides a negative answer:
we construct a non-learnable class with Natarajan dimension $1$.


For the construction, we identify a fundamental connection
between concept classes and topology (i.e., colorful simplicial complexes).
We crucially rely on a deep and involved construction of \emph{hyperbolic pseudo-manifolds}
by Januszkiewicz and {\'S}wi{\k{a}}tkowski.
It is interesting that hyperbolicity 
{is directly related to learning problems that
are difficult to solve
although no obvious barriers exist.}
{This is another demonstration of the fruitful links 
    machine learning has with different areas in mathematics.}



    \end{abstract}
    
\newpage

\section{Introduction}

Many important machine learning tasks require classification into many target classes: 
{in} {image object recognition}, the number of classes is the number of possible objects.
In {language models}, the number of classes scales with the dictionary size.
In {protein folding prediction}, the goal is to predict the 3D structures of proteins based on their 1D amino sequence.
These are real-world tasks that do not admit an a priori reasonable bound on the number of classes.
\new{Multiclass classification problems, therefore, have been attracting
interest both on the theoretical side and on the practical side;
for further reading we refer to the introduction of~\citep*{daniely2014optimal}
and references within.}

The theoretical understanding of multiclass learnability, however, is still lacking:
even in the basic Probably Approximately Correct (PAC) setting~\citep*{Valiant84}, learnability is well-understood only when the number of classes is bounded
(see e.g.~\citep*{natarajan1989learning,bendavid1995characterizations,shalev2014understanding,daniely2015multiclass}).

The fundamental theorem of PAC learning asserts the equivalence between binary classification and finiteness of the Vapnik-Chervonenkis (VC) dimension~\citep*{Vapnik68,vapnik:74,blumer:89}. 
The works of \cite*{NatarajanT88} and \cite*{natarajan1989learning}
extended Valiant's PAC framework to the multiclass setting.
They identified two natural extensions of the VC dimension:
the Natarajan dimension and the Graph dimension.
The Natarajan dimension serves as a lower bound on the sample
complexity of PAC learning,
and the Graph dimension serves as an upper bound~\citep*{NatarajanT88,natarajan1989learning}.
When the number of classes is bounded ($\lvert \Y\rvert<\infty$), 
both dimensions characterize PAC learnability.
In the unbounded case, however,
\cite*{Natarajan88up}  
showed that finite Graph dimension does not characterize PAC learnability; 
he identified a PAC learnable class 
with infinite Graph dimension
\an{(see also Example~\ref{ex:tree} below)}.
\cite*{natarajan1989learning} asked whether the Natarajan dimension characterizes learnability, and explained why standard uniform convergence techniques are not sufficient to resolve this question.

In the 1990s, \cite*{bendavid1995characterizations} and \cite*{HausslerL95} introduced a rich combinatorial framework for defining dimensions in the multiclass setting.
This framework captures as special cases many other dimensions,
including the Natarajan and Graph dimensions, the Pseudo-dimension~\citep*{Pollard90,Haussler92}, and Vapnik's dimension~\citep*{Vapnik89}.
Within this framework, \cite*{bendavid1995characterizations} exactly identified those dimensions (called {\it distinguishers}) that characterize PAC learnability when the number of classes is bounded.
This framework, however, does not capture learnability when the number of classes is unbounded, and they left this as an open problem.

More recently, a sequence of works studied general principles that guide learning in the multiclass setting \citep*{rubinstein2006shifting,DanielySS12,daniely2014optimal,daniely2015multiclass,daniely2015inapproximability}.
    These works \an{revealed} a stark contrast between 
    a bounded and an unbounded number of labels.
   One important example is that the celebrated empirical risk minimization (ERM) principle
    ceases to apply when the number of labels is unbounded~\citep*{daniely2014optimal}.

Algorithmic ideas of
\cite*{haussler1994predicting} and \cite*{rubinstein2006shifting}
lead \cite*{daniely2014optimal} to identify a {\em universal} family of transductive learning rules called  {\it one-inclusion graph} {($\mathsf{OIG}$)} algorithms.
Universality means that every learnable class can be learned by 
\oig algorithms. 
This universality and 
the combinatorial structure of \oig algorithms 
guided them to a new dimension.
We call this new dimension the {\em Daniely-Shalev-Shwartz (DS)} dimension.
They proved that finite DS dimension is a necessary condition for PAC learnability. 
But they too left the full characterization of learnability open.

\begin{remark}
We use standard terminology
from PAC learning and standard measurability assumptions (see e.g.~the textbook~\citep*{shalev2014understanding}
and references within).
All the relevant dimensions are defined and discussed in Section~\ref{sec:bg}.
\end{remark}

\subsection{Results}

{Our main result is that}
the DS dimension characterizes PAC learnability in the multiclass setting.

\begin{atheorem}[Learnability $\equiv$ Finite DS Dimension]
\label{atm:ub}
The following are equivalent for a concept class $\H\subseteq \Y^\X$:
\begin{itemize}
    \item[--] The class $\H$ is PAC learnable.
    \item[--] The DS dimension of $\H$ is finite.
\end{itemize}
\end{atheorem}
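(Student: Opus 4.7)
The direction that PAC learnability implies finite DS dimension was established by \cite*{daniely2014optimal} via the universality of one-inclusion graph ($\oig$) algorithms, so I focus on the forward direction: finite DS dimension implies PAC learnability. My plan is to construct the PAC learner in two stages, mediated by the list learning relaxation highlighted in the abstract.

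\emph{Stage 1: from finite DS dimension to list PAC learning.} Let $d$ be the DS dimension of $\H$, assumed finite. I would try to prove that $\H$ is list PAC learnable with a constant list size $k=k(d)$. The vehicle would be an $\oig$-style algorithm adapted to list prediction: one orients (or multi-labels) the edges of the one-inclusion graph so that every vertex is assigned a subset of $\Y$ of size at most $k$ that is compatible with the local combinatorial structure. A leave-one-out analysis would then bound the expected list-error on an i.i.d.\ sample in the standard way. The combinatorial heart of this stage is to argue that any obstruction to a low-list-degree orientation forces the presence of a pseudo-cube of dimension exceeding $d$; I would attempt this by induction on $d$, mirroring the binary case where bounded out-degree orientations of the $\oig$ follow from finite VC dimension via Haussler-type shifting arguments.

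\emph{Stage 2: from list PAC learning to standard PAC learning.} Given a list PAC learner producing at most $k$ candidate labels per query, I would boost it into a single-label PAC learner. A natural scheme is aggregation: train many independent list learners on random subsamples of the training set and, at a test point $x$, output the label appearing in the largest fraction of the produced lists. The generalization guarantee should come either from a sample compression scheme tailored to list-valued hypotheses, or from a local uniform convergence argument at each query $x$, exploiting that the list restricts attention at $x$ to a bounded subset of $\Y$ regardless of how large $\Y$ is. Either route converts bounded-size list predictions into a finite ``effective'' label space per query, which makes standard concentration tools applicable.

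\emph{Main obstacle.} Both stages are nontrivial, but I expect Stage~2 to be the principal difficulty. Classical boosting and the usual sample-compression-to-PAC conversion operate on single-valued hypotheses on $\X$; extending them to set-valued predictors while keeping the sample complexity independent of $\lvert\Y\rvert$ requires a genuinely new generalization principle for list learners, since naively aggregating over an infinite label space offers no uniform control. Stage~1 is also delicate---it asks for a structural theorem about $\oig$s that goes strictly beyond the VC-based shifting arguments of the binary case---but once the right notion of ``list orientation'' is isolated, the combinatorial induction on DS dimension looks more tractable than the list-to-single boosting step, which is where I anticipate most of the conceptual work.
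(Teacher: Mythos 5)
Your high-level decomposition---route through list PAC learning as an intermediate stage---matches the paper's strategy, and you correctly attribute the reverse direction to Daniely and Shalev-Shwartz. But both stages have genuine gaps beyond the "nontrivial" flags you raised.

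In Stage 1, the mechanism you propose (a ``multi-labeled'' orientation of the one-inclusion graph, proved by induction on the DS dimension) is not what the paper does and has no concrete realization. The paper's list learner (Algorithm~\ref{algo:list-pac}) is much more elementary: it runs the ordinary single-valued $\mathsf{OIG}$ learner on all $\binom{d+t}{t}$ subsamples of size $d$ and unions the outputs. The guarantee follows directly from the leave-one-out symmetrization plus the one fact already available (Claim~\ref{claim:good_pt}), with no new combinatorial induction required. Also note that the list size is $\binom{d+t}{t}$ and the success probability $\tfrac{t+1}{d+t+1}$, so there is a tradeoff; it is not a constant-size list with high success probability as you suggest.

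Stage 2 is where the real gap lies, and your proposed routes would not work as stated. Aggregating the lists directly by frequency offers no per-$x$ uniform control when $\Y$ is infinite, and the paper never forms a hypothesis class of set-valued predictors. More importantly, you do not confront the central obstacle the paper identifies (Section~\ref{subsec:poverview}): even given an excellent menu $\mu$, the subclass $\H|_\mu$ of $\H$ consistent with $\mu$ can be \emph{empty} while the distribution remains realizable by both $\H$ and $\mu$, so one cannot simply ``reduce the label space'' and run ERM or a bounded-alphabet learner on $\H|_\mu$. The paper's resolution uses three ideas you are missing. First, the \emph{locality} of the $\mathsf{OIG}$ algorithm: to predict at $x$ one only needs $\H$ restricted to the realized unlabeled sample together with $x$, intersected with the menu there, so global emptiness of $\H|_\mu$ is irrelevant (Algorithm~\ref{algo:weak-pac}). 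Second, a bound on the maximum out-degree of orientations of such menu-restricted $\mathsf{OIG}$s in terms of the Natarajan dimension and $\log p$; this requires a new shifting argument via the exponential dimension (Section~\ref{sec:shift}) because the na\"ive multiclass shifting does not preserve the relevant dimensions, and it exploits $d_N \leq d_{DS}$. Third, the menu-restricted weak learner is turned into a sample compression scheme via von Neumann's minimax theorem and a plurality vote (Lemma~\ref{lem:compressionfromdict}), and generalization then comes from the standard compression-implies-generalization bound of~\citet{david2016supervised}. Without the locality observation and the Natarajan/shifting bound in particular, the step from a menu to a single-label PAC learner does not go through.
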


We complement this result by 
refuting the conjecture that the Natarajan dimension characterizes learnability.

\begin{atheorem}[Learnability $\not\equiv$ Finite Natarajan Dimension]
\label{atm:lb}
Finite Natarajan dimension does not characterize PAC learnability.
\end{atheorem}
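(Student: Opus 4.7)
The strategy, invoking Theorem~A, is to exhibit a single concept class $\H$ with Natarajan dimension $1$ and infinite DS dimension; the latter precludes PAC learnability, so this refutes the Natarajan characterization. I would build $\H$ from colorful (``balanced'') pseudo-manifolds. Given a $d$-dimensional colorable pseudo-manifold $\K$ with color classes $V_0,\ldots,V_d$, define $\H_\K$ on domain $\X=\{0,\ldots,d\}$ and label set $\Y=V_0\sqcup\cdots\sqcup V_d$ by letting each facet $\sigma\in\K$ contribute the hypothesis $h_\sigma(i):=$ the unique color-$i$ vertex of $\sigma$. Taking a disjoint union of such $\H_\K$ across all $d$ (with disjoint domains, disjoint label sets, and any fixed default label elsewhere) produces the single class $\H$ I analyze.

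The DS bound is combinatorial. In a pseudo-manifold every codimension-one face lies in exactly two facets, so for any facet $\sigma$ and color $i$, removing the color-$i$ vertex of $\sigma$ yields a codim-one face completed to $\sigma$ and to exactly one other facet $\sigma'$; colorfulness forces $\sigma'$ to differ from $\sigma$ only in its color-$i$ vertex. Hence $\H_\K$ restricted to $\X$ is itself a pseudo-cube on $\X$, certifying DS dimension $\geq d+1$, and letting $d\to\infty$ yields infinite DS dimension for the disjoint-union class. The Natarajan dimension, in contrast, translates into a forbidden-configuration condition: a pair of colors $\{i,j\}$ is Natarajan-shattered exactly when there exist $a_0\neq a_1\in V_i$, $b_0\neq b_1\in V_j$, and four facets realizing each of the four combinations $(a_s,b_t)$. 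Forbidding every such ``$K_{2,2}$'' across every pair of color classes keeps the Natarajan dimension at $1$, since singletons remain shattered as soon as $|V_i|\geq 2$.

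To instantiate this in every dimension I would invoke the Januszkiewicz--\'{S}wi\k{a}tkowski construction, which produces, for every $d$, a (Gromov) hyperbolic pseudo-manifold. Its negative curvature obstructs flat rectangles in the universal cover, which I would convert into the combinatorial statement that no pair of color classes admits the above $K_{2,2}$. Combined with the DS computation, this yields a class with Natarajan dimension $1$ and infinite DS dimension, and Theorem~A then gives non-learnability.

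The main obstacle is precisely the last translation: extracting the no-$K_{2,2}$ guarantee on pairs of color classes from the geometric CAT$(-1)$ hyperbolicity of the J--\'{S} complexes. This requires fixing a compatible coloring (perhaps after a barycentric-type subdivision) and showing that a Natarajan-shattering quadruple of facets would assemble into a closed flat rectangular loop in the universal cover, contradicting hyperbolicity. Once this combinatorial corollary of hyperbolicity is in hand, the disjoint-union packaging, the pseudo-cube bound, and the appeal to Theorem~A are essentially formal.
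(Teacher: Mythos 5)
Your proposal is correct and takes essentially the same route as the paper: translate pseudo-cubes into properly colored simplicial complexes (the paper's ``good complexes,'' your balanced pseudo-manifolds), reduce Natarajan dimension $\le 1$ to a no-$K_{2,2}$ / no-empty-square condition on pairs of color classes, invoke Januszkiewicz--\'Swi\k{a}tkowski for each dimension $d$, and then take a disjoint union with a padding label $\star$ to package the examples into a single class with infinite DS dimension. One clarification on your flagged ``main obstacle'': you need not extract the no-$K_{2,2}$ property from CAT$(-1)$ geometry via a universal-cover argument. The flag-no-square condition is precisely the combinatorial criterion J--\'S design their construction to satisfy (they build in \emph{extra retractibility}, which implies it, and the complexes are coset complexes that carry a canonical proper coloring, so no subdivision is needed). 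The paper simply cites the relevant proposition from J--\'S rather than re-deriving a combinatorial consequence of hyperbolicity; once you do the same, the rest of your argument (pseudo-manifold $\Rightarrow$ pseudo-cube, disjoint union, appeal to Theorem~A) goes through exactly as you describe.
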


The two theorems follow from more informative
results as we describe next.
Because \cite*{daniely2014optimal} proved that finite DS dimension is a necessary condition for PAC learnability,
Theorem~\ref{atm:ub} follows from the following algorithmic result.

\begin{theorem}
\label{thm:ub}
    Let $\H\subseteq\Y^\X$ be an hypothesis class with DS dimension $d<\infty$.
    \begin{description}
        \item[Realizable Case] 
There is a learning algorithm $A^{real}$ for $\H$
    with the following guarantees.
For every $\H$-realizable distribution $\D$, 
        every $\delta > 0$ and every integer $n$,
        given an input sample $S\sim \D^n$, the algorithm $A^{real}$ outputs an hypothesis $h=A^{real}(S)$ such that\footnote{The $\tilde O$ notation conceals $\mathrm{polylog}(n, d)$ factors. Logarithms in this text are always in base two.}
        $$\Pr_{(x,y) \sim \D}[h(x) \neq y]
\leq \tilde O\Bigg(\frac{d^{3/2} + \log(1/\delta)}{n}\Bigg)$$ with probability at least $1-\delta$ over $S$.
    \item[Agnostic Case] There is a learning algorithm $A^{agn}$ for $\H$
    with the following guarantees.
For every distribution $\D$, 
            every $\delta > 0$ and integer $n$,
    given an input sample $S\sim \D^n$, the algorithm $A^{agn}$ outputs an hypothesis $h=A^{agn}(S)$ such that
        $$ \Pr_{(x,y) \sim \D}[h(x) \neq y] \leq L_\D(\H) + \tilde O\Biggl(\sqrt{\frac{d^{3/2} + \log(1/\delta)}{n}}\Biggr)$$ with probability at least $1-\delta$, where $L_\D(\H) = \inf_{g \in \H}\Pr_{(x,y) \sim \D}[g(x) \neq y]$.
    \end{description}
\end{theorem}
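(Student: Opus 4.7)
The plan is to route the proof through an intermediate notion of \emph{list PAC learning}, as flagged in the abstract, and then convert a list learner into an ordinary PAC learner. The first and most important step is to construct a transductive list-learning algorithm for classes of finite DS dimension: given a training sample together with an unlabelled test point $x$, the algorithm outputs a short list $L(x)\subseteq\Y$ of candidate labels, of size bounded by a function of $d$, such that the true label falls outside $L(x)$ with probability only $\tilde O(d/n)$. The natural candidate is built from the one-inclusion graph machinery of \cite*{daniely2014optimal}: restricting $\H$ to the sample together with $x$ yields a graph whose combinatorics is governed by pseudo-cube structure, and the DS dimension controls how the label at $x$ can vary across neighbouring vertices. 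I would prove, by induction on $d$, a multiclass analogue of the Haussler--Littlestone--Warmuth one-inclusion argument, showing that the set of labels that can appear at $x$ inside minimal sub-structures has size depending only on $d$; feeding a uniformly random permutation of the sample through this procedure then yields the list learner with the desired error rate.

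To convert the list learner into a PAC learner, I would either (i) extract a sample compression scheme of size $\poly(d)$ by recording the examples that certify which label in the list is correct, and then invoke the standard generalization bound for compression schemes, or (ii) run a boosting argument that repeatedly calls the list learner on fresh sub-samples and aggregates the resulting lists via a weighted plurality vote. Combining the list length with the compression/boosting overhead should yield the realizable rate of $\tilde O(d^{3/2}/n)$ claimed in the theorem. For the agnostic case, I would apply the standard realizable-to-agnostic reduction: a realizable learner with excess error $\tilde O(d/n)$ yields, via the textbook transformation (symmetrization and uniform convergence on the image of the compression scheme, or equivalently agnostic boosting of the realizable learner), an agnostic learner with rate $\tilde O(\sqrt{d/n})$ on top of $L_\D(\H)$, matching the second bullet of the theorem.

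The hard part is the first step: translating the combinatorial finiteness of the DS dimension into a bounded list of candidate labels that is correct with high probability. Finite Natarajan dimension provably does not suffice for this (by Theorem~\ref{atm:lb}), so the argument must exploit the full pseudo-cube structure underlying the DS dimension and not just plain shattering. In particular, standard Sauer--Shelah-style growth-function bounds are unavailable, and one must reason directly about local symmetries in the one-inclusion graph. Getting the right $d^{3/2}$ exponent, rather than a cruder polynomial in $d$, will likely require a two-stage approach: first extract a coarse list learner whose list length grows mildly in $d$ from the pseudo-cube structure, and then shrink the list to size $1$ via a boosting or compression step that costs an additional $\sqrt{d}$ factor.
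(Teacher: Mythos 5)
Your high-level plan does match the paper's: route through a list learner built from the one-inclusion graph (your iterated leave-one-out argument producing $t+1$ good indices out of $d+t+1$ is exactly Proposition~\ref{prop:weak-list-pac}), and then convert the menu into a single prediction via compression, balancing two stages to get $d^{3/2}$. The gap is in the conversion step, and it is not a small one. Your option (ii), boosting, is explicitly ruled out by the paper's discussion in Section~\ref{subsec:poverview}: the weak learner's advantage $\frac{1}{d+1}$ is below the $\frac12$ threshold of classical boosting, and multiclass boosters that tolerate small advantage have sample complexity scaling with $\log\lvert\Y\rvert$, which is unbounded here. Your option (i), ``recording examples that certify which label in the list is correct,'' is not a well-defined procedure: the menu carries no certificates, and (crucially) one \emph{cannot} simply pass to the subclass $\H|_\mu$ of hypotheses consistent with the menu and learn it, because $\H|_\mu$ can be empty even when $\D$ is realizable by both $\H$ and $\mu$ — the paper gives the example $\H\subseteq\{0,1,2\}^{\mathbb N}$ of functions with finite support in $\{1,2\}$ and $\mu\equiv\{1,2\}$.

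What actually makes the second stage work, and what your proposal does not supply, is that the menu serves as an \emph{alphabet reduction} rather than a candidate list: the OIG algorithm is rerun on the \emph{local} class $\H'\subseteq\Y^{n+1}$ of patterns over the sample that agree with both $\H$ and $\mu$, exploiting the locality of OIG so that the possibly-empty global $\H|_\mu$ never enters. The error of this second OIG is then controlled by the out-degree of an orientation of $\G(\H')$, which the paper bounds by $O(d_N\log p)$ via Lemma~\ref{lem:wl:rub}. Proving that orientation bound is the hard technical step, and it does not follow from binary shifting: Examples~\ref{ex:1} and~\ref{ex:2} show that multiclass shifting can \emph{increase} the Natarajan/DS/Pollard dimensions and \emph{decrease} the number of edges, breaking the classical argument. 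The paper introduces the exponential dimension $d_E$ and the modified density $\ad'$ precisely because these are monotone under shifting, and only then routes through a Haussler--Long Sauer bound to get $d_E\lesssim d_N\log p$. Finally, the compression scheme itself (Lemma~\ref{lem:compressionfromdict}) is not a one-shot extraction but a von~Neumann minimax argument yielding a distribution over size-$m$ subsamples whose plurality vote classifies $S$. Without the locality observation, the shifting-with-exponential-dimension machinery, and the minimax compression, the plan as written does not close.
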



%

Because finite DS dimension is a necessary condition for learnability,
Theorem~\ref{atm:lb} boils down to the following statement.

\begin{theorem}
\label{thm:lb}
There exists a concept class~$\H\subseteq \Y^\X$ with Natarajan dimension~$1$
and an infinite DS dimension. 
\end{theorem}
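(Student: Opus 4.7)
My plan is to exploit a correspondence between concept classes and colorful simplicial complexes. Given a pure $(d-1)$-dimensional simplicial complex $K$ whose vertices are properly colored by a set $\X_d$ of $d$ colors, with every top simplex using each color exactly once, associate with $K$ a concept class $\H(K)$ on domain $\X_d$ whose hypotheses are indexed by top simplices: for a top simplex $\sigma$ and a color $x \in \X_d$, set $h_\sigma(x)$ to be the vertex of color $x$ inside $\sigma$. To obtain a class on a common domain I will then take a disjoint union over $d$ and extend each hypothesis by a dummy label $\star$ outside its own $\X_d$.

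The central observation is that both dimensions translate into clean conditions on $K$. First, if $K$ is a \emph{pseudo-manifold}, i.e.\ every codimension-$1$ face lies in exactly two top faces, then the color set $\X_d$ is DS-shattered: the collection of top simplices, viewed as functions $\X_d \to V(K)$, is a pseudo-cube, because a properly colored pseudo-manifold provides, for each top simplex $\sigma$ and each color $x$, a unique neighboring top simplex differing from $\sigma$ only in the vertex of color $x$. Second, the Natarajan dimension of $\H(K)$ is at most $1$ precisely under a local girth condition: a Natarajan-shattered pair of colors $(x_i,x_j)$ with label pairs $\{y_i,z_i\}$ and $\{y_j,z_j\}$ would yield four edges forming a $4$-cycle in the bipartite graph of edges in $K$ whose endpoints have colors $x_i$ and $x_j$; conversely, if every such bipartite color graph is $4$-cycle-free, no Natarajan pair exists.

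With this dictionary in hand, the proof reduces to producing, for every $d$, a colorful $(d-1)$-dimensional pseudo-manifold $K_d$ whose bipartite color graphs have no $4$-cycles. These are exactly the combinatorial features of the hyperbolic pseudo-manifolds constructed by Januszkiewicz and \'Swi\k{a}tkowski via hyperbolic Coxeter systems: hyperbolicity forces links to be combinatorially sparse, preventing short two-colored cycles, while the pseudo-manifold structure can be made of arbitrarily high dimension. Setting $\X = \bigsqcup_d \X_d$ and $\H = \bigcup_d \H(K_d)$, with each $h \in \H(K_d)$ extended by $\star$ on $\X \setminus \X_d$, the DS dimension is infinite because each $\X_d$ is DS-shattered inside $\H(K_d) \subseteq \H$, whereas the Natarajan dimension remains $1$: cross-$d$ interactions cannot introduce a $4$-pattern, since any two points living in different $\X_d$'s have their joint restriction confined to $(\{\star\} \times \Y) \cup (\Y \times \{\star\})$, in which no Natarajan square can fit.

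The main obstacle will be the simultaneous existence of the $K_d$'s with \emph{both} the pseudo-manifold property in arbitrarily high dimension \emph{and} the girth condition on every pair of colors, two requirements that are in direct tension, since pseudo-manifolds naturally create many short cycles in their links. It is precisely this tension that is resolved by the hyperbolicity in the Januszkiewicz-\'Swi\k{a}tkowski construction, and most of the real effort will lie in extracting from their topological construction the specific colorfulness and bipartite girth properties demanded by the dictionary above.
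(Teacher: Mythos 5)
Your proposal follows essentially the same route as the paper: the same dictionary between colorful complexes and concept classes (your ``pseudo-manifold'' condition is a mildly stronger form of the paper's ``replacement,'' and your ``no $4$-cycle in any bipartite color graph'' is precisely equivalent to the paper's ``no alternating-color square'' characterization in Proposition~\ref{prop:goodnat1}), the same disjoint-union-with-$\star$ gluing, and the same appeal to the Januszkiewicz--\'Swi\k{a}tkowski complexes. The one place you leave genuine work undone is exactly the one the paper also flags as nontrivial: verifying that the J--\'S construction delivers, in every dimension, a finite, pure, properly colorable complex satisfying replacement and the no-square condition, which the paper derives from their results on extra retractible complexes of groups and coset complexes in Theorem~\ref{thm:polish}, Proposition~\ref{prop:polishreduce}, and Appendix~\ref{app:polish}.
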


\subsection{Roadmap}
In Section~\ref{sec:bg}, 
we define the Natarajan dimension and the DS dimension.
We also introduce the reader to the DS dimension
and its basic properties.
The central goal is to explain the important links between 
the three fundamental concepts:
learnability, one-inclusion graphs, and the DS dimension.

In Section~\ref{sec:shift}, we review the shifting mechanism.
\new{Shifting is a combinatorial technique used by~\citet{haussler1995sphere} 
to analyze \oig algorithms {in the binary setting}.
\citet*{rubinstein2006shifting} later extended shifting to analyze \oig algorithms in the multiclass setting.
The multiclass setting introduces subtleties and difficulties
compared to the binary setting (see Examples~\ref{ex:1} and~\ref{ex:2} below).
To overcome these difficulties, we introduce a new combinatorial dimension, 
the {\em exponential} dimension, which might be interesting in its own right.}

Section~\ref{sec:ub} contains the proof of
the equivalence between finite DS dimension and PAC learnability.
\new{The section begins with an overview of the main challenges that arise
and the algorithmic ideas used to overcome them.
Specifically, we introduce and discuss the notion of {\em list} PAC learning,
which we believe should be of independent interest.}

In Section~\ref{sec:lb} we prove that the Natarajan dimension does not characterize PAC learnability.
This section has two parts.
One part describes a general and basic connection
between concept classes and properly colored simplicial complexes.
The second part uses a deep and beautiful construction
by~\cite*{januszkiewicz2003hyperbolic} of a simplicial complex that 
exactly meets our needs.
We provide a simplified and high-level exposition to their construction.

\section{The DS Dimension and One-inclusion Graphs} \label{sec:bg}

The prime purpose of this section is to build the bridge between 
the DS dimension and learnability.
We start with an introduction to the DS dimension,
and a description of some of its simple properties.
We continue with a description of the {\em one-inclusion graph} algorithm.
The section concludes with the story of the ``duality''
that links between the two.

\subsection{Dimensions and Pseudo-cubes}
All dimensions we consider follow a similar mechanism.
The main part is defining a notion of ``shattering''
that captures some local complexity of $\H \subseteq \Y^\X$.
{For $S \in \X^n$, 
we consider the projection $\H|_S$ of $\H$ to $S$,
and say that $\H$ shatters $S$ if $\H|_S$ is ``complex''
in some appropriate sense.
The dimension is then defined as the maximum size
($n$) of a shattered sequence
(if $\H$ shatters arbitrarily large sets then the dimension is defined to be $\infty$).}

\begin{notation}
{We consider sequences in $\X^n$ instead of subsets of $\X$,
because typically inputs to learning problems
are sequences not sets.
For $h:\X \to \Y$ and $S = (x_1,\ldots,x_n) \in \X^n$,
the projection $h|_S$ of $h$ to $S$ is thought of as 
the map from $[n]$ to $\Y$ defined by $i \mapsto h(x_i)$.
The projection of $\H$ to $S$ is
$$\H|_S = \{h|_S : h \in \H\} \subseteq \Y^n.$$
We sometimes think of $\Y^n$ as words
of length $n$ over the alphabet $\Y$.}
\end{notation}

The first and most well-known dimension is
the VC dimension.
It is defined only for binary classification problems.

\begin{definition} [VC dimension \citep*{Vapnik68}]
We say that $S \in \X^n$ is \emph{VC-shattered} by $\H \subseteq \{0,1\}^\X$ if 
$\H|_S = \{0,1\}^n$.
The VC dimension $d_{VC}(\H)$ is the maximum size of a VC-shattered sequence.
\end{definition}

When $|\Y| > 2$, there are many ways to extend the VC dimension.
One of the first extensions of the VC dimension to the multiclass
setting is the Natarajan dimension.
The relevant shattering is ``containing a copy of the Boolean cube''.

\begin{definition} [Natarajan dimension \citep*{natarajan1989learning}]
{We say that $S \in \X^n$ is \emph{N-shattered} by $\H \subseteq \Y^\X$ if there exist $f,g: [n] \rightarrow \Y$ such that for every $i \in [n]$ we have
$f(i) \neq g(i)$, and }
$$\H|_S \supseteq \{f(1),g(1)\}
\times \{f(2),g(2)\} \times \ldots
\times \{f(n),g(n)\}  .$$
The Natarajan dimension $d_{N}(\H)$ is the maximum size of an N-shattered sequence.

%
\end{definition}

%
%
%

%

What is the ``correct analog'' of the Boolean cube
for larger alphabet sizes?
There are many possible answers.
The starting point of the definition of the DS dimension is viewing the Boolean cube as a graph.
The vertex-set of the graph is $\{0,1\}^d$.
The edges of the graph are defined as follows.
For every vertex $v \in \{0,1\}^d$
and for every direction $i \in [d]$,
there is a (single) neighbor $u$ of $v$ in direction $i$
(that is, $u(i) \neq v(i)$ and $u(j)=v(j)$ for all $j \neq i$).
    This perspective can be naturally applied to non-binary concept classes.

\begin{definition}[Pseudo-cube]
A class $\H\subseteq \Y^d$ is called a {\em pseudo-cube} 
of dimension~$d$ if it is non-empty, finite and for every $h\in \H$ and $i \in [d]$,
there is an $i$-neighbor~$g \in \H$ of $h$
(i.e., $g(i)\neq h(i)$ and $g(j)=h(j)$ for all $j \neq i$). 
\end{definition}

When $\Y = \{0,1\}$, 
the two notions ``Boolean cube'' and ``pseudo-cube'' coincide:
The Boolean cube $\{0,1\}^d$ is of course a pseudo-cube. 
Conversely, every pseudo-cube $\H\subseteq \{0,1\}^d$ is the entire 
Boolean cube $\H=\{0,1\}^d$. 
When $\lvert \Y\rvert >2$,
the two notions do not longer coincide.
Every copy of the Boolean cube is a pseudo-cube,
but there are pseudo-cubes that are not Boolean cubes; see Figure~\ref{fig:2dim} for an example.
\an{The example in the figure uses a dual perspective.
The functions (words) in the class are the edges of the graph,
and the alphabet symbols are the vertices of the graph.
This dual perspective is important and useful.
We discuss it in more detail in Section~\ref{sec:lb}.}

\begin{figure}[h!]
\centering
\begin{subfigure}{.3\textwidth}
  \centering
  
  \medskip
  
  \medskip
  
  \medskip
  
\begin{tikzpicture}
 \draw (0,0) -- (1,1) -- (2,0) -- (1,-1) -- (0,0);
 
 \tikzset{bluecirc/.style={circle, draw=black, fill=teal!50, inner sep=0pt,minimum size=13pt}}
\tikzset{violsqur/.style={draw=black, fill=blue!50, inner sep=0pt,minimum size=13pt}}

\node[bluecirc] at (0,0) {$a$};
\node[bluecirc] at (2,0) {$c$};

\node[violsqur] at (1,1) {$b$};
\node[violsqur] at (1,-1) {$d$};

%
%
%
%

\end{tikzpicture}
\begin{align*}
\ \\
ab \\
cb \\
cd \\
ad \\
\ \\
\end{align*}

\end{subfigure}%
\begin{subfigure}{.3\textwidth}
  \centering
\begin{tikzpicture}

\tikzset{bluecirc/.style={circle, draw=black, fill=teal!50, inner sep=0pt,minimum size=13pt}}
\tikzset{violsqur/.style={draw=black, fill=blue!50, inner sep=0pt,minimum size=13pt}}

 \draw (0,0) -- (1,0.6) -- (2,0) ;
 \draw (0,0) -- (0,-1) -- (1,-1.6) -- (2,-1) -- (2,0) ;


\node[bluecirc] at (0,0) {$1$};
\node[bluecirc] at (2,0) {$3$};
\node[bluecirc] at (1,-1.6) {$5$};

\node[violsqur] at (1,0.6) {$2$};
\node[violsqur] at (0,-1) {$6$};
\node[violsqur] at (2,-1) {$4$};

%
%
%
%

\end{tikzpicture}
\begin{align*}
12 \\
32 \\
34 \\
54 \\
56 \\
16
\end{align*}

\end{subfigure}%
 \caption{A $2$-dimensional pseudo-cube (on the right) 
 that is not isomorphic to the $2$-dimensional Boolean cube (on the left). 
The labels $\Y$ are the vertices
($4$ label on the left, and $6$ labels on the right).
The words in $\H \subset \Y^2$ are the edges ($4$ words on the left,
and $6$ words on the right).
For each word, the circle vertex appears as the {first symbol,
and the square appears as the second symbol}. 
 }\label{fig:2dim} 
\end{figure}
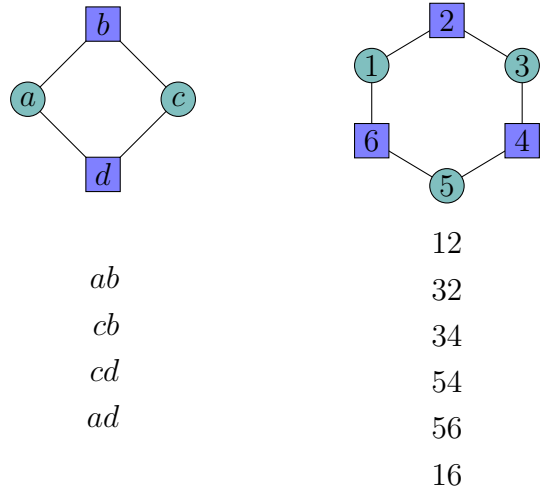

The DS dimension is defined by containing pseudo-cubes
(the original definition uses a slightly different language,
but it is equivalent).

\begin{definition}[DS dimension \citep*{daniely2014optimal}]
We say that $S \in \X^n$ is
{\em $DS$-shattered} by $\H\subseteq \Y^\X$ 
if $\H|_S$ contains an $n$-dimensional
pseudo-cube.
The DS dimension $d_{DS}(\H)$ is the maximum size of a DS-shattered sequence.
\end{definition}


How different are pseudo-cubes than Boolean cubes? 
Or, more formally, are there $d$-dimensional pseudo-cubes with Natarajan dimension $<d$?
The hexagon in Figure~\ref{fig:2dim} is a $2$-dimensional pseudo-cube whose Natarajan dimension is $1$. There are, in fact, many other such constructions, even in the $2$-dimensional case.

The following \an{example} provides a complete description of $2$-dimensional pseudo-cubes 
with Natarajan dimension $1$ using the language of graph theory.
We omit the proof because in Section~\ref{sec:lb} we derive generalizations 
to arbitrary dimensions.

\begin{example}\label{obs:2dim}
For every bipartite graph $G=(L , R,E)$ the set 
\[B(G):=\bigl\{(u,v) \in L \times R : \{u,v\}\in E\bigr\}\]
is a 2-dimensional pseudo-cube if and only if $G$ contains no leaves.
Conversely, for every $B\subseteq \Y^2$,
the bipartite graph 
\[G(B):=\bigl(L=\Y\times\{0\},R=\Y\times\{1\}, E=\bigl\{\{(y_0,0),(y_1,1)\} : (y_0,y_1)\in B\bigr\}\bigr)\]
contains no leaves if and only if $B$ is a pseudo-cube.
\an{The claim is that}
a $2$-dimensional pseudo-cube $B\subseteq \Y^2$ has Natarajan dimension~$2$ iff the corresponding bipartite graph $G(B)$ has a $4$-cycle.
\end{example}

The above demonstrates that $2$-dimensional pseudo-cubes are rather simple combinatorial objects.
The landscape in higher dimensions is significantly richer. Figure~\ref{fig:3dimpc} depicts a $3$-dimensional pseudo-cube with Natarajan dimension~$1$. This pseudo-cube arises from a triangulation of the 
plane; a hint towards the topology that is used in Section~\ref{sec:lb} to prove Theorem~\ref{thm:lb}.

\begin{figure}[h!]
\centering

\begin{tikzpicture} 
\tikzset{bluecirc/.style={circle, draw=black, fill=blue!50, inner sep=0pt,minimum size=8pt}}
\tikzset{violsqur/.style={draw=black, fill=violet!50, inner sep=0pt,minimum size=8pt}}
\tikzset{tealtri/.style={isosceles triangle,
    isosceles triangle apex angle=60,rotate=90, draw=black, fill=teal!50, inner sep=0pt,minimum size=8pt}}

\draw[thick] (0.5,{3 *sqrt(3)/2}) -- (2,{6 *sqrt(3)/2}) -- (5,{6 *sqrt(3)/2}) --
(6.5,{3 *sqrt(3)/2}) -- (5, 0) -- (2, 0) -- (0.5,{3 *sqrt(3)/2});


\node [rotate=60] at (1.25,{4.5 *sqrt(3)/2}) {$\gg$};
\node [rotate=60] at ({5.75},{1.5 *sqrt(3)/2}) {$\gg$};

\node [rotate=60] at (1.25,{4.5 *sqrt(3)/2}) {$\gg$};
\node [rotate=60] at ({5.75},{1.5 *sqrt(3)/2}) {$\gg$};

\node [rotate=-60] at ({5.75},{4.5 *sqrt(3)/2}) {$\supset$};
\node [rotate=-60] at ({1.25},{1.5 *sqrt(3)/2}) {$\supset$};

\node at (3.5,{6 *sqrt(3)/2}) {$>$};
\node at (3.5,0) {$>$};

\path[clip] (0.5,{3 *sqrt(3)/2}) -- (2,{6 *sqrt(3)/2}) -- (5,{6 *sqrt(3)/2}) --
(6.5,{3 *sqrt(3)/2}) -- (5, 0) -- (2, 0) -- (0.5,{3 *sqrt(3)/2});
 
\foreach \x in {0.5,...,6.5}{
    \foreach \y in {0,2,...,6}{
      \pgfmathsetmacro\xx{\x - 0.5*Mod(\y+1,2)};
      \pgfmathsetmacro\yy{\y*sqrt(3)/2};
      \pgfmathsetmacro\yyy{(\y+1)*sqrt(3)/2};
      
     \draw (\xx,\yy) -- ++ (60:1) -- ++ (-60:1) -- cycle;
    \draw (\x,\yyy) -- ++ (60:1) -- ++ (-60:1) -- cycle;
    
                \pgfmathparse{int(mod(\x,3))} 
             \ifnum0=\pgfmathresult\relax
                     \node[violsqur]  at (\xx,\yy) {};                           
                     \node[bluecirc]  at (\x,\yyy) {};
                \else  
                 \ifnum1=\pgfmathresult\relax
                     \node[tealtri]   at (\xx,\yy) {};
                     \node[violsqur]  at (\x,\yyy) {};
                \else  
                    \node[bluecirc]  at (\xx,\yy) {};
                    \node[tealtri]  at (\x,\yyy) {};
                \fi
               \fi
    }
}

\end{tikzpicture}
\caption{An example of a $3$-dimensional pseudo-cube with Natarajan dimension $1$. The words in the pseudo-cube are the triangles
(there are 54 of them).
The labels are the vertices
(there are $\tfrac{3 \cdot 54}{6} = 27$ of them).
The vertices are colored by $3$ colors: circle, triangle and square.
In each word, the circle vertex appears as the first symbol,
the triangle vertex as the second,
and the square vertex as the third.
Opposite sides of the hexagon are identified,
as the picture indicates.
The pseudo-cube property holds because each triangle
has three neighboring triangles
that are obtained by switching one vertex from each color.
The Natarajan dimension is $1$ because
there is no square (a cycle of length four) in the graph
so that its vertices have alternating colors.
For more details,
see Section~\ref{sec:lb}.}\label{fig:3dimpc}
\end{figure}
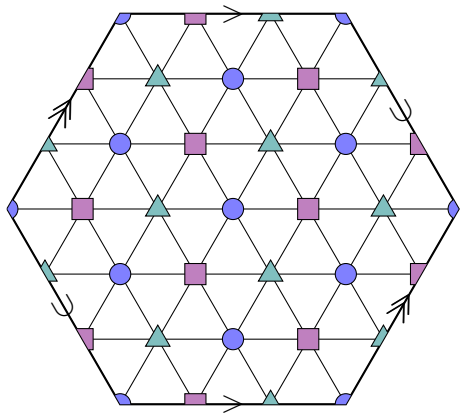

The condition that pseudo-cubes are {\em finite} is surprisingly important.
Without it, the DS dimension does not characterize learnability,
as the following example shows.

\begin{example}
\label{ex:tree}
There is an infinite learnable class $\H^{tree}$ over $\X = \mathbb{N}$
so that for each $x \in \X$ and $h \in \H^{tree}$,
there is $g \in \H^{tree}$ that agrees with $h$ on all points besides~$x$.
{But the DS dimension of this class is $1$,
so it is learnable \an{(by our main result)}.}
This class can be thought of as a directed tree
whose edges are directed towards the root;
\an{Figure~\ref{fig:2dim} illustrates a similar class for the case
$\X=[3]$.}
The root is the all-zeros function.
Each $h$ in the tree has $\X$ in-going edges;
for each $x \in \X$,
there is an edge towards $h$ from the function that is equal to $h$
on $\X \setminus \{x\}$,
and is equal to a new and unique alphabet symbol at~$x$.
Every alphabet symbol $y \in \Y$
has a {\em depth}; it is the minimum distance from the root
of a word that $y$ appears in.
{The DS dimension is less than two for the following reason.}
For every $S \in \X^2$ and every finite $\H_0 \subset \H^{tree}|_S$, we can choose $y \in \Y$ with maximum depth 
among all symbols that appear in $\H_0$.
Let $h_0$ be an element in $\H_0$ that contains $y$.
The vertex $h_0$ does not have two neighbors,
so $\H_0$ is not a pseudo-cube.
\end{example}

\begin{figure}
\centering
\begin{tikzpicture}
\node (empty) [scale=.9] {$(0,0,0)$};

\node (1) [below left = .7cm and 1.2cm of empty,scale=.8] {$(1,0,0)$};
\node (2) [right=1.2cm of 1,scale=.8] {$(0,2,0)$};
\node (3) [right=1.2cm of 2,scale=.8] {$(0,0,3)$};

\draw (empty)--(1);
\draw (empty)--(2);
\draw (empty)--(3);

\node (00) [below left = .7cm and 1.2cm of 1,scale=.8] {$(4,0,0)$};
\node (01) [right=.3cm of 00,scale=.6] {$(1,5,0)$};
\node (02) [right=.3 cm of 01,scale=.6] {$(1,0,6)$};

\draw (1)--(00);
\draw (1)--(01);
\draw (1)--(02);

\node (10) [below left = .7cm and .1cm of 2,scale=.6] {$(7,2,0)$};
\node (11) [right=.1cm of 10,scale=.6] {$(0,8,0)$};
\node (12) [right=.1cm of 11,scale=.6] {$(0,2,9)$};

\draw (2)--(10);
\draw (2)--(11);
\draw (2)--(12);

\node (31) [below  = .7cm  of 3,scale=.6] {$(10,0,3)$};
\node (32) [right=.1cm of 31,scale=.6] {$(0,11,3)$};
\node (33) [right=.1cm of 32,scale=.6] {$(0,0,12)$};

\draw (3)--(31);
\draw (3)--(32);
\draw (3)--(33);

\node (000) [below  left = .4cm and .5cm of 00,scale=.6] {$(13,0,0)$};
\node (001) [right=.4cm of 000,scale=.6] {$(4,14,0)$};
\node (002) [right=.4cm of 001,scale=.6] {$(4,0,15)$};

\draw (00)--(000);
\draw (00)--(001);
\draw (00)--(002);



\node (vdots3) [below =.3cm of 32] {$\vdots$};
\node (vdots11) [below =.3cm of 11] {$\vdots$};
\node (vdots02) [below =.3cm of 02] {$\vdots$};
\node (vdots001) [below =.1cm  of 001] {$\vdots$};
\end{tikzpicture} \caption{An example of an infinite class $\H^{tree} \subseteq \Y^\X$ with  $\X=[3]$  and $ \Y =  \mathbb{N}$.}\label{fig:2dim}
\end{figure}

%

\subsection{The One-Inclusion Graph} \label{sec:orientations}
This subsection introduces an important combinatorial 
abstraction of learning algorithms. 
The idea is to translate a learning problem
to the language of graph theory.

\begin{definition}[One-inclusion Graph~\citep*{haussler1994predicting,rubinstein2006shifting}]
The {\it one-inclusion} graph  
of $\H\subseteq \Y^n$ is a hypergraph
$\mathcal{G}(\H)=(V,E)$ that is defined as follows.\footnote{We use
the term ``one-inclusion graph'' although it is actually a hypergraph.}
The vertex-set is $V=\H$.
For each  $i\in [n]$ and $f:{[n]\setminus\{i\}} \to \Y$,
let $e_{i,f}$ be the set of all $h\in \H$ that agree with $f$ on $[n]\setminus \{i\}$.
{The edge-set is
\begin{equation} \label{eq:edge_set}
 E = \big\{(e_{i,f},i) : i\in [n], f:{[n]\setminus\{i\}} \to \Y,
 e_{i,f} \neq \emptyset  \big\} .
\end{equation}
We say that the edge $(e_{i,f},i) \in E$ contains the vertex $v$,
and write $v \in (e_{i,f},i)$,
if $v \in e_{i,f}$.
The size of the edge $(e_{i,f},i)$ is defined to be
$|(e_{i,f},i)| := |e_{i,f}|$.}
\end{definition}

\begin{remark}
{The edge-set consists of pairs $(e,i)$,
where $e$ is a set of vertices and~$i$ is the {\em direction} of the edge.
\an{It is convenient that the ``name'' of an edge
also tells us its direction.}
Edges could be of size one,
and each vertex $v$ is contained in exactly $n$ edges.
This is not the standard structure of edges in hypergraphs,
but we use this notation because it provides a better model
for learning problems.}
\end{remark}

The one-inclusion graph 
leads to a simple but useful toy model for 
{transduction in} machine learning.

\begin{example}[Toy Model]\label{ex:toyM}
The learning game is played over a one-inclusion graph $(V,E)$.
The input to the problem is an edge.
The input edge $e$ is generated by first choosing a vertex $v_*$ 
from some unknown distribution over $V$,
and then choosing $e$ to be a uniformly random edge containing $v_*$.
The goal is to output a vertex $u$ that is equal to $v_*$
with as high probability as possible.
\end{example}

Learning algorithms in this toy model are orientations.
\begin{definition}
An {\em orientation} of the hypergraph $(V,E)$ is a mapping $\sigma :E\to V$
such that $\sigma(e)\in e$ for each edge $e\in E$. 
\end{definition}

Every (deterministic) learning algorithm
defines an orientation, and vice versa.
The success probability of the algorithm is determined
by the out-degrees of the orientation.
The {\em out-degree} of $v \in V$ {in} $\sigma$ is 
\begin{equation} \label{eq:out_deg}
\dout(v;\sigma) = \lvert\{e: v \in e \text{ and } \sigma(e)\neq v\}\rvert.    
\end{equation} 
The maximum out-degree of $\sigma$ is 
\begin{equation} \label{eq:max_out_deg}
    \dout (\sigma) = \sup_{v\in V} \ \dout(v;\sigma).
\end{equation} 
{There is a certain 
duality between orientations and the DS dimension,
as the following two lemmas demonstrate.
This duality is the basic link between the DS dimension
and learnability.}

\begin{lemma}
\label{lem:PCisLB}
If $\H\subseteq \Y^d$ has DS dimension $d$,
then $\dout(\sigma)\geq \tfrac{d}{2}$ for every orientation $\sigma$ of $\G(\H)$.
\end{lemma}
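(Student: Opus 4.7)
The plan is to reduce the statement to the case where $\H$ itself is a $d$-dimensional pseudo-cube, and then establish the bound via a double-counting / averaging argument on the one-inclusion graph of the pseudo-cube.

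First, since $d_{DS}(\H)=d$ and $\H\subseteq \Y^d$, the sequence $(1,\ldots,d)$ is DS-shattered by $\H$, so $\H$ contains a $d$-dimensional pseudo-cube $P\subseteq\H$. Given an arbitrary orientation $\sigma$ of $\mathcal{G}(\H)$, I will induce an orientation $\sigma'$ of $\mathcal{G}(P)$ that is ``dominated'' by $\sigma$ in the sense that $\dout(v;\sigma)\geq \dout(v;\sigma')$ for every $v\in P$. The construction is natural: each edge $e'=(e_{i,f}\cap P,\,i)$ of $\mathcal{G}(P)$ sits inside the edge $e=(e_{i,f},i)$ of $\mathcal{G}(\H)$; define $\sigma'(e'):=\sigma(e)$ whenever $\sigma(e)\in e'$, and otherwise choose $\sigma'(e')\in e'$ arbitrarily. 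A quick case analysis on the two scenarios shows that whenever an edge $e'\ni v$ of $\mathcal{G}(P)$ contributes to $\dout(v;\sigma')$, the corresponding edge $e\ni v$ of $\mathcal{G}(\H)$ already contributes to $\dout(v;\sigma)$, giving the claimed domination. Consequently it suffices to prove $\dout(\sigma')\geq d/2$, and in fact even the weaker statement that the \emph{average} out-degree of $\sigma'$ over $P$ is at least $d/2$.

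The averaging argument goes as follows. Each edge $e'$ of $\mathcal{G}(P)$ contributes $|e'|-1$ to the sum $\sum_{v\in P}\dout(v;\sigma')$, since exactly the $|e'|-1$ vertices of $e'$ other than $\sigma'(e')$ count this edge in their out-degree. Now fix a direction $i\in[d]$. The edges of $\mathcal{G}(P)$ in direction $i$ partition $P$ (each $h\in P$ lies in the unique edge determined by $h|_{[d]\setminus\{i\}}$), so if $m_i$ denotes their number, $\sum_{e'\text{ in direction }i}|e'|=|P|$. The pseudo-cube property forces every such edge to have size at least two, so $m_i\leq |P|/2$, and therefore
\[
\sum_{e'\text{ in direction }i}(|e'|-1)=|P|-m_i\;\geq\;\tfrac{|P|}{2}.
\]
Summing over the $d$ directions yields $\sum_{v\in P}\dout(v;\sigma')\geq d|P|/2$, so some $v\in P$ achieves $\dout(v;\sigma')\geq d/2$, and by domination $\dout(v;\sigma)\geq d/2$ as well.

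The only genuinely subtle point is the reduction step: one has to resist the temptation to simply ``restrict $\sigma$ to $P$'', because the edges of $\mathcal{G}(\H)$ are in general strictly larger than those of $\mathcal{G}(P)$, and $\sigma(e)$ might fall outside $P$. The domination construction above handles this cleanly, and after that the combinatorics is an immediate consequence of the defining property of pseudo-cubes (every vertex has an $i$-neighbor, forcing every direction-$i$ edge to have size $\geq 2$). I expect this to be the main place where care is needed; the rest is a transparent counting computation.
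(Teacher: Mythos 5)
Your proof is correct and takes essentially the same route as the paper's: reduce to a pseudo-cube and average out-degrees over vertices, using that every edge in a pseudo-cube has size at least two. The one place you add detail is the explicit domination construction $\sigma\mapsto\sigma'$ when passing from $\G(\H)$ to $\G(P)$; the paper simply proves the assertion in the case that $\H$ itself is a pseudo-cube and leaves the reduction from a general class with DS dimension $d$ to its sub-pseudo-cube implicit, so your argument fills in that small step cleanly without changing the substance.
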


\begin{lemma} \label{lem:orientation}
If $\H\subseteq \Y^{d+1}$ has DS dimension $d$,
then there exists an orientation $\sigma$ of $\G(\H$) with 
$\dout(\sigma) \leq d$.
\end{lemma}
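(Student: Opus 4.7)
The plan is to induct on $|\H|$, assuming $\H$ is finite; the infinite case follows from a standard compactness argument on the space of orientations. The key observation, coming from the strict gap $d_{DS}(\H)=d<d+1$, is that $\H=\H|_{[d+1]}$ itself cannot be a $(d+1)$-dimensional pseudo-cube---otherwise the sequence $(1,\dots,d+1)$ would be DS-shattered, contradicting $d_{DS}(\H)=d$. Because $\H$ is finite and non-empty, failure of the pseudo-cube property produces a concrete ``defective'' vertex: there exist $h_0\in\H$ and $i_0\in[d+1]$ such that $h_0$ has no $i_0$-neighbor in $\H$. Translated to the one-inclusion graph, the edge $e_0$ in direction $i_0$ incident to $h_0$ is the singleton $\{h_0\}$, so orienting $e_0$ towards $h_0$ is forced and ``for free'' gives $h_0$ an incoming edge.

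After this I would remove $h_0$ and invoke induction. Set $\H':=\H\setminus\{h_0\}$; DS dimension is monotone under inclusion, so $d_{DS}(\H')\leq d$, and the induction hypothesis yields an orientation $\sigma'$ of $\G(\H')$ with $\dout(\sigma')\leq d$. I would then extend $\sigma'$ to an orientation $\sigma$ of $\G(\H)$ by three rules: (i) if the vertex set of $e$ equals $\{h_0\}$, set $\sigma(e):=h_0$; (ii) if $h_0\notin e$, then the identical edge appears in $\G(\H')$ and I set $\sigma(e):=\sigma'(e)$; and (iii) if $h_0\in e$ and $|e|>1$, set $\sigma(e):=\sigma'(e\setminus\{h_0\})$, which is a well-defined element of $e\setminus\{h_0\}\subseteq e$.

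The verification is then a short accounting. Vertex $h_0$ has exactly $d+1$ incident edges and at least one of them---namely $e_0$---is targeted at $h_0$, so $\dout(h_0;\sigma)\le d$ regardless of the other choices. For any $h\in\H'$, its incident edges in $\G(\H)$ are in natural coordinate-indexed bijection with its incident edges in $\G(\H')$, and rules (ii) and (iii) ensure that $\sigma$ targets the same vertex on each corresponding pair of edges as $\sigma'$ does; hence $\dout(h;\sigma)=\dout(h;\sigma')\le d$. The main obstacle I anticipate is arranging these rules so that, simultaneously, (a) $h_0$ gains an incoming edge, (b) no edge is accidentally forced to target $h_0$ against our wishes, and (c) the orientation of $\G(\H')$ transfers verbatim to each surviving vertex. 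The singleton structure of $e_0$, which hinges precisely on the \emph{strict} inequality between the DS dimension and the number of coordinates, is exactly the combinatorial leverage that makes these three requirements mutually compatible.
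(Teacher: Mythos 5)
Your proof is correct and follows essentially the same route as the paper's: both identify a vertex $h_0$ with an incident singleton edge (forced by the gap between $d_{DS}(\H)=d$ and the $d+1$ coordinates, hence failure of the pseudo-cube property), delete it, apply induction to $\H\setminus\{h_0\}$, and extend the resulting orientation by transferring each surviving edge's target and pointing the singleton(s) at $h_0$. The bookkeeping you give for rules (i)--(iii) just spells out in more detail the paper's phrase ``every edge in $E$ that has a counterpart in $E'$ is directed as in $\sigma'$.''
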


\begin{proof}[Proof of Lemma~\ref{lem:PCisLB}]
We prove the stronger assertion that
if $\H\subseteq \Y^d$ is a pseudo-cube then every orientation $\sigma$ of $\H$ satisfies that $\dout(\sigma)\geq \tfrac{d}{2}$.
In a pseudo-cube, each $h$ has a neighbor in each of the $d$ directions,
and every edge $e\in E$ has size $\lvert e\rvert \geq 2$
so that $|e|-1 \geq \tfrac{|e|}{2}$.
Even the average out-degree is at least~$\tfrac{d}{2}$:
for every orientation $\sigma$,
\begin{align*}
\frac{1}{\lvert V\rvert}\sum_{v\in V}\dout(v;\sigma) &=
\frac{1}{\lvert V\rvert}\sum_{e\in E}\lvert e\rvert -1 \\ 
&\geq 
\frac{1}{\lvert V\rvert}\sum_{e\in E}\frac{\lvert e\rvert}{2} \\
&= \frac{1}{2\lvert V\rvert}\cdot d\lvert V\rvert = \frac{d}{2}. 
\end{align*}
{This finishes the proof because the maximum is at least the average.}
\end{proof}

\begin{proof}[Proof of Lemma~\ref{lem:orientation}]
We start by analyzing the case that $\H$ is finite
(similarly to~\citep*{daniely2014optimal}).
The orientation is constructed inductively and greedily as follows.
The base of the induction is the case $|\H|=1$.
In this case, {all edges are oriented towards the single vertex,}
so the claim trivially holds.
For the inductive step, assume $|\H|>1$.
Let $\G=(V,E)$ be the {one-inclusion graph} associated with $\H$.
Because the DS dimension of $\H$ is $d$, 
there must be $h\in \H$ 
so that {the size of $\{e \in E: h \in e, |e|>1\}$
is at most $d$.}
Let $\H'$ be $\H$ after deleting~$h$.
By definition, the DS dimension of $\H'$ is at most $d$.
Let $\G' = (V',E')$ be the hyper-graph associated with $\H'$.
{Edges in $E'$ are obtained from edges in $E$
by deleting $h$.
There is at least one singleton edge of size $1$ that contains $h$
in $E$.
This edge does not appear in $E'$.}
By \an{the induction hypothesis}, there is an orientation $\sigma' : E' \to V'$
with maximum out-degree at most $d$.
{Every edge in $E'$ corresponds to an edge in $E$.
The only edges in $E$ that do not have
counterparts in $E'$ are the singleton edges that contain $h$.
Let $\sigma: E \to V$ extend $\sigma'$ as follows. 
Every edge in $E$ that has a counterpart in $E'$
is directed in $\sigma$ as in $\sigma'$.
All other (singleton) edges are directed towards $h$.
The out-degree of vertices in $\G'$ does not change,}
and the out-degree of $h$ is at most $d$.
So, the out-degree of $\sigma$ is at most $d$ as required.

The case when $\H$ is infinite is handled using a compactness argument. 
Because we could not find a proper reference,
we provide the short (but not entirely trivial) proof
in Appendix~\ref{app:comapct}.

\end{proof}

%
%


%
    
\subsection{The One-Inclusion Graph Algorithm} \label{subsec:oi}
The one-inclusion graph captures a model for {transduction in machine} learning
(Example~\ref{ex:toyM}).
A key observation of~\citet*{haussler1994predicting}
is that this model captures an essential ingredient of general PAC learnability;
see also~\citep*{rubinstein2006shifting,daniely2014optimal,alon2021theory}.
In a nutshell, good orientations of the one-inclusion graph 
yield good learning algorithms.

\begin{algorithm}
\caption{The one-inclusion algorithm $\A_\H$ for $\ch\subseteq\cy^{\cx}$} \label{algo:one_inc}
\begin{flushleft}
  {\bf Input:} An $\H$-realizable sample $S = \big((x_1, y_1),\ldots,(x_n, y_n)\big)$. \\
{\bf Output:} A hypothesis $\A_{\H}(S)=h_S:\X \to \Y$. \\
\ \\
For each $x \in \X$, the value $h_S(x)$ is computed as follows.
\end{flushleft}
\begin{algorithmic}[1]
\STATE Consider the class of all patterns over the \emph{unlabeled data}
{$\H|_{(x_1,\ldots,x_n,x)} \subseteq \Y^{n+1}$}.
\STATE Find an orientation $\sigma$ of $\mathcal{G}(\H|_{(x_1,\ldots,x_n,x)})$ that minimizes the maximum out-degree.
\STATE Consider the edge in direction $n+1$ defined by $S$;
let
\[e =\{ h \in \H|_{(x_1,\ldots,x_n,x)} :  \forall i \in [n]  \ \ 
h(i) =y_i\}.\]
\STATE Let $h' = \sigma((e,n+1))$.
\STATE Set $h_S(x) = h'(n+1)$.
\end{algorithmic}
\end{algorithm}

The one-inclusion graph ($\mathsf{OIG}$) algorithm $\A_{\H}$ is presented in Algorithm~\ref{algo:one_inc}.
The algorithm gets as input a realizable training sample
$S = ((x_1,y_1),\ldots,(x_n,y_n))$ as well as
an additional test point $x$.
Its goal is to provide a good prediction for the label of $x$.
The main idea is to translate this problem to the toy model.
Use the unlabelled data $x_1,\ldots,x_n$ and $x$
to build the one-inclusion graph of $\H|_{(x_1,\ldots,x_n,x)}$.
The labels $y_1,\ldots,y_n$ now define
an edge in the graph.
An orientation of the graph provides the prediction 
for the label of $x$.

The crucial point is that an orientation 
    with small maximum out-degree yields small error.
    This follows by a simple and clever \emph{leave-one-out} argument (see e.g.\ \citep*{haussler1994predicting}).
    The argument exploits the underlying symmetry as 
    we now explain.

    Let $S \sim \D^n$ be the input sample 
    and let $(x,y) \sim \D$ be the test point (chosen independently of $S$).
    We can generate the joint distribution of $(S,(x,y))$ in a different way.
    We can choose $S' \sim \D^{n+1}$ and independently choose $I$ from the uniform 
    distribution $U(n+1)$ on $[n+1]$. 
    Let 
  $$S'_{-I}=((x'_1,y'_1),\ldots,(x'_{I-1},y'_{I-1}),(x'_{I+1},y'_{I+1}),\ldots ,
  (x'_{n+1},y'_{n+1}))$$ be the sample $S'$ after deleting its $I$ entry.
    The distribution of $(S'_{-I},(x'_I,y'_I))$ is identical to that of $(S,(x,y))$.

\begin{fact}[Leave-one-out] \label{fact:sym}
Let $\D$ be a distribution over a set $\mathcal{Z}$ and let $n>0$ be an integer.
For every event $E\subseteq \mathcal{Z}^{n+1}$,
\[
\Pr_{(S,Z) \sim \D^{n+1}}\Bigl[(S, Z) \in E\Bigr] = 
\Pr_{(S',I) \sim \D^{n+1} \times U(n+1)}  \Bigl[ (S'_{-I}, S'_I)\in E \Bigr] .
\]
\end{fact}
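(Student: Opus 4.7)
The plan is to exploit the exchangeability of the i.i.d.\ product measure $\D^{n+1}$. The key observation is that for each fixed index $i \in [n+1]$, the map $S' \mapsto (S'_{-i}, S'_i)$ from $\mathcal{Z}^{n+1}$ to $\mathcal{Z}^{n+1}$ is nothing more than a coordinate permutation: it sends the $i$-th coordinate to position $n+1$ and shifts the remaining coordinates into positions $1,\ldots,n$ while preserving their relative order. Since $\D^{n+1}$ is a product measure, it is invariant under every permutation of coordinates, so the pushforward of $\D^{n+1}$ under this map is again $\D^{n+1}$.

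Consequently, for every fixed $i \in [n+1]$,
$$\Pr_{S' \sim \D^{n+1}}\bigl[(S'_{-i}, S'_i) \in E\bigr] = \Pr_{(S,Z) \sim \D^{n+1}}\bigl[(S, Z) \in E\bigr],$$
where on the right we identify $\mathcal{Z}^{n+1}$ with $\mathcal{Z}^n \times \mathcal{Z}$ in the natural way. The claim then follows by averaging this identity over $i$ drawn uniformly from $[n+1]$: by Fubini (the independence of $I$ and $S'$),
$$\Pr_{(S',I) \sim \D^{n+1} \times U(n+1)}\bigl[(S'_{-I}, S'_I) \in E\bigr] = \frac{1}{n+1} \sum_{i=1}^{n+1} \Pr_{S' \sim \D^{n+1}}\bigl[(S'_{-i}, S'_i) \in E\bigr],$$
and every summand equals the common value above.

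There is essentially no substantive obstacle here; the statement is a clean consequence of the symmetry of product measures. The only point requiring care is bookkeeping: one must verify that the identification of $(S'_{-i}, S'_i)$ as an element of $\mathcal{Z}^{n+1}$ (first $n$ coordinates being $S'_{-i}$ in order, last coordinate being $S'_i$) corresponds to a genuine permutation of the coordinates of $S'$, so that product-measure invariance applies verbatim.
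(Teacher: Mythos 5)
Your proof is correct and matches the reasoning the paper itself uses: the paper states this as a Fact without a formal proof, only explaining beforehand that the two ways of generating $(S,(x,y))$ yield the same distribution by the symmetry of the i.i.d.\ product measure, which is exactly the exchangeability-plus-averaging argument you spell out.
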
 

The one-inclusion graph together with
the leave-one-out argument lead to a formal connection
between good orientations and PAC prediction error.

\begin{prop}
Let $\D$ be an $\H$-realizable distribution and let $n>0$
be an integer.
Let $M$ be an upper bound
on the maximum out-degree 
of all orientations chosen by $\A_\H$.
The prediction error can be bounded as
$$\Pr_{(S,(x,y)) \sim \D^{n+1}} \bigl[h_S(x) \neq y \bigr] \leq \frac{M}{n+1},$$
where $h_S = \A_\H(S)$.
\end{prop}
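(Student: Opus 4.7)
The plan is to combine the leave-one-out symmetry (Fact~\ref{fact:sym}) with the fact that errors of the $\mathsf{OIG}$ algorithm correspond exactly to out-going edges at a single vertex of a single one-inclusion graph. First I would apply Fact~\ref{fact:sym} with $\mathcal{Z}=\X\times\Y$ and the event $E=\{(S,(x,y)):\A_\H(S)(x)\neq y\}$ to rewrite
\[
\Pr_{(S,(x,y))\sim \D^{n+1}}\bigl[h_S(x)\neq y\bigr]
= \Pr_{S'\sim \D^{n+1},\,I\sim U(n+1)}\bigl[\A_\H(S'_{-I})(x'_I)\neq y'_I\bigr],
\]
and then condition on the full $(n{+}1)$-sample $S'=((x'_1,y'_1),\ldots,(x'_{n+1},y'_{n+1}))$, bounding the inner probability over the uniform index $I$.

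Next I would interpret the conditional error event inside a single one-inclusion graph. Fix $S'$ and set $\G=\G(\H|_{(x'_1,\ldots,x'_{n+1})})$. Because $\D$ is $\H$-realizable, the tuple $v=(y'_1,\ldots,y'_{n+1})$ is a vertex of $\G$. When the algorithm is called on $S'_{-I}$ with test point $x'_I$, its Step~1 builds (up to moving coordinate $I$ to the last position) exactly the graph $\G$, and Step~2 picks an orientation minimising the maximum out-degree. To make the argument clean I would fix a canonical, permutation-equivariant orientation rule depending only on the unordered multiset of the $n+1$ points, so that a single orientation $\sigma$ of $\G$ is used for all $n+1$ choices of $I$; by assumption $\dout(\sigma)\le M$.

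Finally I would count the erroneous indices. The edge selected in Step~3 is $(e_{I,v_{-I}},I)$, which contains $v$, and Steps~4--5 output $h'(I)$ for $h'=\sigma((e_{I,v_{-I}},I))$. Since every vertex of $e_{I,v_{-I}}$ agrees with $v$ on every coordinate other than $I$, we have $h'(I)\neq v(I)$ iff $h'\neq v$, i.e.\ iff the unique edge containing $v$ in direction $I$ is oriented away from $v$. The number of such indices $I\in[n+1]$ is exactly $\dout(v;\sigma)\le M$, so
\[
\Pr_{I\sim U(n+1)}\bigl[\A_\H(S'_{-I})(x'_I)\neq y'_I\,\big|\,S'\bigr]
= \frac{\dout(v;\sigma)}{n+1}\le \frac{M}{n+1},
\]
and taking expectation over $S'$ yields the claimed bound.

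The one point I would be careful about — and the only real obstacle — is the permutation-equivariance of the orientation selection. The algorithm as written places the test coordinate last, whereas the leave-one-out rewriting treats all $n+1$ coordinates symmetrically; the gap vanishes once the orientation is chosen by a canonical rule depending only on the unordered set of test points, but this convention has to be made explicit for the counting argument above to apply to a single $\sigma$ rather than to $n+1$ possibly different orientations.
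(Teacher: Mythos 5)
Your proof follows the paper's argument exactly: apply the leave-one-out symmetrization (Fact~\ref{fact:sym}), fix the full sample $S'$, identify the conditional error count with the out-degree of the realized vertex $y'$ in the one-inclusion graph $\G(\H|_{(x'_1,\ldots,x'_{n+1})})$, and conclude. The one point you flag --- permutation-equivariance of the orientation choice --- is a genuine subtlety that the paper does not spell out: its displayed chain replaces $h_{S'_{-i}}(x'_i) \neq y'_i$ by $\sigma(e_i) \neq y'_i$ for a single $\sigma$ across all $i$, which is valid only once the orientation rule is fixed to depend on the unordered multiset $\{x'_1,\ldots,x'_{n+1}\}$ rather than on which point occupies the distinguished last coordinate. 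Without such a convention the $n+1$ leave-one-out calls could produce $n+1$ different orientations $\sigma_1,\ldots,\sigma_{n+1}$, each with $\dout(\sigma_i)\le M$, yet the count $\sum_{i=1}^{n+1} 1\bigl[\sigma_i(e_i)\neq y'\bigr]$ need not be bounded by $M$, since each $\sigma_i$ could disagree with $y'$ precisely on its own edge $e_i$. So your added care tightens an implicit assumption in the paper rather than taking a detour.
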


\begin{proof}
By Fact~\ref{fact:sym},
\[
\Pr\bigl[h_S(x) \neq y \bigr] = 
\Pr  \bigl[ h_{S'_{-I}}(x'_I) \neq y'_I \bigr] .
\]
The prediction error is small,
as long as the maximum out-degree is small:
for every fixed $S' = ((x'_1,y'_1),\ldots,(x'_{n+1},y'_{n+1}))$,
\begin{align*}
\Pr_{I }\bigl[h_{S'_{-I}}(x'_I) \neq y'_I \bigr] 
    &=
    \frac{1}{n+1}\sum_{i=1}^{n+1} 1\bigl[ h_{S'_{-i}}(x'_i) \neq y'_i \bigr]\\
   &=
   \frac{1}{n+1}\sum_{i=1}^{n+1} 1\bigl[ \sigma(e_i) \neq y'_i \bigr] \\
   &=   \frac{\dout(y';\sigma)}{n+1} ,
\end{align*}
    where $y' = (y'_1,\ldots,y'_{n+1})$
    is a vertex the one-inclusion graph,
    and $e_i$ is the edge {in the $i$'th direction
    containing $y'$.}
    \end{proof}

The final piece we present in this section is
that a bound on the DS dimension leads to
non-trivial prediction guarantees for PAC learning.
This \an{rather} weak prediction capability 
is the starting point of our general PAC learning algorithm.
It will be significantly enhanced in Section~\ref{sec:ub} below.

\begin{claim} \label{claim:good_pt}
Let $\H \subseteq \Y^\X$ be so that $d=d_{DS}(\H)  < \infty$.
\an{Let $\A_\H$ be Algorithm~\ref{algo:one_inc}.}
For every $\H$-realizable sample $S'=((x'_1,y'_1),\ldots,(x'_{d+1},y'_{d+1}))$,
there exists $i \in [d+1]$ such that $h_{S'_{-i}}(x'_i) =y'_i$,
where $h_{S'_{-i}} = \A_{\H}(S'_{-i})$.
\end{claim}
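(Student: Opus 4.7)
My approach combines Lemma~\ref{lem:orientation} with a pigeonhole count on the out-degree of the ``ground truth'' vertex $y' := (y'_1,\ldots,y'_{d+1})$ inside a single one-inclusion graph. Let $V := \H|_{(x'_1,\ldots,x'_{d+1})} \subseteq \Y^{d+1}$. Since DS dimension is monotone under projection to subsequences of $\X$, $V$ has DS dimension at most $d$, and Lemma~\ref{lem:orientation} then yields an orientation $\sigma^*$ of $\G(V)$ with $\dout(\sigma^*) \leq d$. By $\H$-realizability of $S'$, the word $y'$ belongs to $V$, and is therefore a vertex of $\G(V)$.

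The heart of the argument is pigeonhole on $y'$. This vertex is contained in exactly $d+1$ edges of $\G(V)$, one per direction: for each $i \in [d+1]$ let $e_i$ be the edge in direction $i$ whose coordinates outside $i$ agree with $y'$. Since $\dout(y'; \sigma^*) \leq d < d+1$, at least one of these edges is fixed by $\sigma^*$, i.e.\ $\sigma^*(e_i) = y'$ for some $i \in [d+1]$. This is the index we will exhibit.

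It remains to translate this into a statement about $\A_\H(S'_{-i})$. When that call runs on test point $x'_i$, it builds $\G(\H|_{(x'_1,\ldots,\widehat{x'_i},\ldots,x'_{d+1},x'_i)})$, which is canonically isomorphic to $\G(V)$ via the permutation sending $x'_i$ to the last coordinate. I would fix the tie-breaking rule of $\A_\H$ so that, under this isomorphism, it returns the same $\sigma^*$ across all $d+1$ calls --- any permutation-equivariant canonical rule works, and $\sigma^*$ is a valid minimizer since its maximum out-degree is already at most $d$. Under this choice, the sample-defined edge in the final (test) direction is exactly the image of $e_i$, and therefore the algorithm outputs $h_{S'_{-i}}(x'_i) = \sigma^*(e_i)(i) = y'(i) = y'_i$, which is the desired equality.

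The main subtlety --- and essentially the only real obstacle --- is precisely this cross-call consistency: if different calls were allowed to pick different legitimate minimizers, each individually valid, then each $\sigma_i$ could satisfy $\sigma_i(e_i) \neq y'$ (the surplus direction in the out-degree budget of $y'$ could land on a different edge for each $i$), and no $i$ would predict correctly. Fixing a permutation-equivariant tie-breaking for $\A_\H$ once and for all eliminates this, and the pigeonhole conclusion on $\dout(y'; \sigma^*)$ then finishes the proof.
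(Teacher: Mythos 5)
Your proof is correct and takes essentially the same approach as the paper: restrict $\H$ to the $d+1$ sample points, apply Lemma~\ref{lem:orientation} to obtain an orientation with maximum out-degree at most~$d$, and pigeonhole on the $d+1$ edges of the one-inclusion graph containing the ground-truth vertex $y'$. The cross-call consistency issue you flag is genuine and is exactly what the paper's phrasing ``let $\sigma$ denote the orientation \ldots chosen by $\A_\H$'' implicitly presupposes; fixing a permutation-equivariant tie-breaking rule, as you do, is the intended reading.
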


\begin{proof}
Let $\H'  = \H|_{(x'_1,\ldots, x'_{d+1})}$. 
Think of $y'=(y'_1,\ldots,y'_{d+1})$ as a vertex in $\G(\H')$.
Let $\sigma$ denote the orientation 
that minimizes the maximum out-degree of $\G(\H')$ chosen by $\A_\H$.
Lemma~\ref{lem:orientation} implies that the maximum out-degree
of $\sigma$ is at most~$d$.
Let $e_i$ be the edge {in the $i$'th direction containing $y'$.}
\an{For every $i \in [d+1]$,
we have $h_{S'_{-i}}(x'_i) \neq y'_i \ \Leftrightarrow \
\sigma(e_{i}) \neq y'$.
So,
\[
\sum_{i=1}^{d+1} 1\bigl[ h_{S'_{-i}}(x'_i) \neq y'_i \bigr]=\sum_{i=1}^{d+1} 1\bigl[ \sigma(e_{i}) \neq y' \bigr] =   \dout(y';\sigma)\leq d .
\]
It follows that} there must exist $i$ such that $h_{S'_{-i}}(x'_i) = y'_i$.
\end{proof}

\section{Shifting and Orientations}
\label{sec:shift}

\new{In this section we use a powerful combinatorial technique called shifting to derive good orientations.
This links the general discussion of one-inclusion graphs from the previous section,
with the learning algorithm we use to prove Theorem~\ref{atm:ub} in the next section.}
The main result of this section is that
the out-degree of optimal orientations can be
controlled by the Natarajan dimension and the 
number of labels.\footnote{Here and below
we did not attempt to optimize the constants.}

\begin{lemma}\label{lem:wl:rub}
Let $\H \subseteq [p]^n$ be a class with Natarajan dimension $d_N < \infty$.
Then, there exists an orientation $\sigma$ of $\G(\H)$ with maximum out-degree \[\dout(\sigma)\leq 20 d_N \log p .\]
\end{lemma}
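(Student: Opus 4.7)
The plan is to adapt the shifting machinery that Haussler used in the binary case, extended to the multiclass setting by Rubinstein, Bartlett and Rubinstein. The idea is to replace $\H$ by a structurally simpler class $\H^*$ that (i) has the same size, (ii) has Natarajan dimension at most $d_N$, and (iii) carries a canonical orientation whose out-degree can be controlled directly; an orientation of $\G(\H^*)$ then lifts back to one on $\G(\H)$ with the same out-degree bound.

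The first step is to fix an arbitrary total order on $[p]$ and, for every direction $i \in [n]$ and every pair $a < b$ of labels, introduce the elementary shift $S_{i,a,b}$: for each $h \in \H$ with $h(i) = b$, replace $h$ by the ``lowered'' function $h'$ that agrees with $h$ off coordinate $i$ and has $h'(i) = a$, \emph{provided} $h' \notin \H$; otherwise leave $h$ alone. Two standard properties are immediate: $|S_{i,a,b}(\H)| = |\H|$, and the Natarajan dimension is non-increasing. Iterating these shifts in any fair order yields a fixed point $\H^*$ that is \emph{stable}: for every $h \in \H^*$, coordinate $i$, and label $a < h(i)$, the lowered function is either already in $\H^*$ or its insertion would collide with an existing element.

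On $\G(\H^*)$ define the canonical orientation $\sigma^*$ that sends each edge $(e_{i,f}, i)$ to the unique element of $e_{i,f}$ whose $i$-th label is minimal. The out-degree $\dout(h; \sigma^*)$ then equals the number of directions $i$ in which $h$ is \emph{not} the bottom of its $i$-th edge, so the task reduces to showing this count is at most $20 d_N \log p$ for every $h \in \H^*$.

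The main obstacle is this final counting step; this is where the $\log p$ factor must come out. My approach would combine a dyadic partition of $[p]$ into $O(\log p)$ label bands with an N-shattering extraction inside the stable class $\H^*$. If $\dout(h; \sigma^*)$ exceeded $20 d_N \log p$, then by pigeonhole one of the bands would host more than $20 d_N$ directions in which $h$ has a ``lower'' neighbor in $\H^*$ whose label lies in that band; stability under the elementary shifts then allows one to extract from this configuration an N-shattered sequence of length exceeding $d_N$, contradicting $d_N(\H^*) \leq d_N$. Exactly this extraction is the subtle point—the warnings in Examples~\ref{ex:1} and~\ref{ex:2} are precisely about why the naive multiclass extraction fails—and the stability of $\H^*$ is what rescues it. Finally, unwinding the sequence of elementary shifts transports $\sigma^*$ back to an orientation $\sigma$ of $\G(\H)$ with the same maximum out-degree, completing the proof.
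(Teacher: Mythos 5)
Your proposal has two genuine gaps, and the first one is fatal to the approach as stated.

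The key claim you label ``immediate''---that the Natarajan dimension is non-increasing under shifting---is false, even for elementary pair shifts. Example~\ref{ex:1} in the paper is precisely a counterexample: before shifting the class $\{(1,1),(1,0),(0,1),(2,0),(0,2)\}$ has Natarajan dimension $1$, and after shifting (which you can realize as the composition $S_{1,0,1}$, $S_{1,0,2}$, $S_{1,1,2}$) it contains $\{0,1\}\times\{0,1\}$ and has Natarajan dimension $2$. The entire motivation for the paper's exponential dimension $d_E$ (Definition~\ref{def:dexp}) is exactly this difficulty: $d_E$ is the quantity that provably does not increase under shifting (Corollary~\ref{cor:exp_dim_shifting}), while Natarajan, DS, and Pollard dimensions all can. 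Your later extraction argument would then be trying to derive a contradiction from $d_N(\H^*)\le d_N(\H)$, but that inequality just is not true. The paper's actual route is: control $d_E$ through shifting, bound the average degree of a fixed point by $2 d_E$ (Proposition~\ref{prop:mu<d}), push that bound back to $\H$ via the monotone quantity $\ad'$, and only at the very end convert $d_E$ into $d_N \log p$ via the Haussler--Long multiclass Sauer bound (Lemma~\ref{lem:dedn}).

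The second gap is the closing ``unwinding'' step. An orientation of $\G(\H^*)$ does not lift back to an orientation of $\G(\H)$ with the same out-degrees, because the edge structure changes during shifting: as Example~\ref{ex:2} shows, edges can merge and their number and total size can decrease. There is no bijection between edges of $\G(\H)$ and $\G(\H^*)$ to transport $\sigma^*$ along. What the paper does instead is never lift an orientation at all: it proves that the \emph{average degree} of $\H$ is bounded (using that $\ad'$ can only go up under shifting while $\ad'(\H_*)\le\ad(\H_*)\le 2d_E(\H_*)$), then builds the orientation directly on $\G(\H)$ by the greedy procedure of Lemma~\ref{lem:orientation}: repeatedly pick a vertex of low degree, orient all its edges away from it, delete it, and recurse. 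Any fix of your argument would need to be restructured around an average-degree bound rather than an orientation transport.
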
   

\newcommand{\ppd}{\Psi_Pdim}
\renewcommand{\S}{\mathbb{S}}

The key technique we use is shifting~\citep*{haussler1995sphere,rubinstein2006shifting}.
Shifting is a way to simplify the structure of a hypothesis class, while 
controlling important properties.
\an{Intuitively, it is the operation of ``pushing a concept class downward''.
Think of $[p]$ as totally ordered by the standard order on $\mathbb{N}$.
The set $[p]^n$ becomes a poset with the partial order
$h \leq g$ iff $h(i) \leq g(i)$ for all $i$.}

\begin{definition}[Shifting]
\label{def:shift}
{Let $\H\subseteq [p]^n$ and let $i \in [n]$.
The shifting operator in the $i$'th direction $\S_i$
maps $\H$ to its shifted version $\S_i(\H)$ as follows.
Shifting is first defined on edges.
For $f : [n] \setminus \{i\} \to [p]$,
let $e_{f}$ be the collections of $h \in \H$
that agree with $f$ on $[n] \setminus \{i\}$.
The shifting $\S_i(e_f)$ is obtained by ``pushing
$e_f$ downward''; namely,
$\S_i(e_f)$ is the collection of all
$g \in [p]^n$ that agree with $f$ on $[n] \setminus \{i\}$
and $1 \leq g(i) \leq |e_f|$.
The shifting of $\H$ is the union of all shifted edges}
$$\S_i(\H) = \bigcup_{f} \S_i(e_f) \subseteq [p]^n.$$
\end{definition}

\an{Let us provide a different view point on this important operation.
Fix $j \neq i$, and 
partition all edges in the $j$'th direction according
to their projection to $[n] \setminus \{i,j\}$.
Fix $f \in \H|_{[n] \setminus \{i,j\}}$,
and consider all vertices that agree with $f$ on $[n] \setminus \{i,j\}$.
Encode this data by the $p\times p$ Boolean matrix $M_{f}$
defined by $M_f(a,b) = 1$
iff adding $a,b$ to $f$ in positions $i,j$ leads to a word in $\H$.
The $1$-entries in the matrix correspond to words in $\H$ that agree with $f$.
Every row in the matrix corresponds to the (possibly empty or singleton) set of words that differ in the $j$'th coordinate. 
Rows with at least {one $1$-entry} correspond to edges in the 
one-inclusion graph.
The matrix offers a nice viewpoint on shifting.
Shifting is performed by pushing all the $1$-entries ``upwards''.
Here is an example of shifting six words
over an alphabet of size four:}
\begin{align*}
\left[ \begin{array}{cccc}
 1 & \ &1& \\
 & & & \\
 &&1& \\
1  &&1& 1\\
\end{array} \right]
\Longrightarrow 
\left[ \begin{array}{cccc}
 1 & \  &1& 1\\
1 & &1 & \\
 &&1& \\
  &&& \\
\end{array} \right]
\end{align*}

Repeatedly applying the shifting operator in various
directions leads to a fixed point $\H_*$ of these operators;
\an{that is, $\S_i(\H_*) = \H_*$ for all $i$.}
This must happen in a finite number of steps,
because when a change is made
the total sum of all entries strictly decreases.
The fixed points of shifting are classes 
that are closed downwards
\an{(that is, if $h$ is in a fixed-point $\H_* \subseteq [p]^n$ 
and $g \leq h$ then $g \in \H_*$).}

In the binary setting, \citet{haussler1995sphere}
proved that shifting does not increase the VC dimension,
and that it does not decrease the number of edges in the one-inclusion graph.
This allows to elegantly bound from above the edge density by the VC dimension.

\an{In the multiclass setting, 
\citet*{rubinstein2006shifting} used
the Pollard dimension~\citep{Pollard90} to control the behavior of multiclass shifting;
the Pollard dimension
provides a natural mechanism for moving from the multiclass setting
to the binary setting.
But the Pollard dimension and other standard dimensions can grow during shifting;
see Example~\ref{ex:1} below.
In addition, the number of edges
and their total size can decrease; see Example~\ref{ex:2}.}

\begin{example}[Dimensions Increase]
\label{ex:1}
$$\begin{array}{ccc}
(1, 1) && (1,1) \\
 (1, 0)&& (0,0) \\
  (0, 1) &\Longrightarrow & (0,1) \\
   (2, 0) & \S_1 & (1,0)\\
   (0, 2) && (0,2)\\
   \end{array}$$
   Before shifting all three dimensions---Natarajan, DS and Pollard---are $1$.
   After shifting they are $2$.
   \end{example}
   
 \begin{example}[Edges Decrease]
 \label{ex:2}
 $$\begin{array}{ccc}
 (2, 2) && (0,2) \\
 (1, 1) & \Longrightarrow & (0,1)\\
 (1, 0) & \S_1 & (0,0)\\
 (2, 0) && (1,0) \\
 \end{array}$$
Before shifting, the three {non-singleton} edges are $\{(2, 2), (2, 0)\}$, $\{(1, 1), (1, 0)\}$, and $\{(1, 0), (2, 0)\}$, and the sum of their sizes is $6$. 
After shifting, there are two {non-singleton} edges $\{(0, 0), (0, 1), (0, 2)\}$ and $\{(0, 0), (1, 0)\}$,
and the sum of their sizes is $5$. 
\an{In the binary case,
the sum of the sizes of edges is equivalent to
the average degree, and it does not decrease during shifting.} 
\end{example}

These examples show that the analysis of multiclass shifting is 
not a direct extension of the arguments in the binary case.
We now identify two quantities that are similar to VC dimension and average degree, 
but can be controlled during shifting.

Because multiclass shifting is ``complex'',
we seek the simplest possible dimension so that we can keep
track of it.
\begin{definition} [Exponential Dimension]  \label{def:dexp}
We say that $S \in \X^n$ is $E$-shattered 
by $\H \subseteq \Y^\X$ if $\lvert \H|_S \rvert \geq 2^{n}$.
The exponential dimension $d_{E}(\H)$ is the maximum size
of an $E$-shattered sequence.
\end{definition}

The following claim shows that the exponential dimension 
is not increased during shifting.

\begin{claim} [Shifting Does Not Increase Projections]
Let $\H\subseteq [p]^n$ and let $i \in [n]$.
For every $S \in [n]^k$,
$$\big|\S_i(\H)|_S \big| \leq \big| \H|_S \big|.$$
\end{claim}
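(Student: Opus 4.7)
The plan is to split into two cases based on whether the direction $i$ appears in the sequence $S$. First, since projection sizes are unchanged when duplicate coordinates in $S$ are removed (the values at repeated positions are forced to coincide), I may assume $S$ has distinct entries and view it as a subset of $[n]$.

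\textbf{Case 1: $i \notin S$.} I would argue that $\S_i(\H)|_S = \H|_S$ (equality, not just $\leq$). For $\supseteq$: given $h \in \H$, the fiber $e_{f}$ for $f = h|_{[n]\setminus\{i\}}$ is nonempty, so the word $h'$ that agrees with $f$ on $[n]\setminus\{i\}$ and has $h'(i) = 1$ lies in $\S_i(\H)$ and satisfies $h'|_S = f|_S = h|_S$ since $i \notin S$. The reverse containment $\subseteq$ is analogous, using that every word in $\S_i(\H)$ lies in some shifted edge $\S_i(e_f)$ with $e_f \neq \emptyset$ in $\H$.

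\textbf{Case 2: $i \in S$.} Let $T = S \setminus \{i\}$. I would stratify both projections by patterns on $T$: for each $g \in [p]^T$ set
\[ N(g) = \bigl|\{y \in [p] : (g,y) \in \H|_S\}\bigr|, \qquad N'(g) = \bigl|\{y \in [p] : (g,y) \in \S_i(\H)|_S\}\bigr|,\]
so that $|\H|_S| = \sum_g N(g)$ and $|\S_i(\H)|_S| = \sum_g N'(g)$. Let $F(g)$ be the set of extensions $f:[n]\setminus\{i\}\to[p]$ of $g$ with $e_f \neq \emptyset$, and set $E(f) = \{h(i) : h \in e_f\} \subseteq [p]$, so $|E(f)| = |e_f|$. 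Unpacking definitions,
\[N(g) = \bigl| \textstyle\bigcup_{f \in F(g)} E(f) \bigr|, \qquad N'(g) = \bigl| \textstyle\bigcup_{f \in F(g)} \{1,2,\ldots,|e_f|\} \bigr| = \max_{f \in F(g)} |e_f|.\]
The key step is then the trivial set-theoretic bound $|\bigcup_f E(f)| \geq \max_f |E(f)|$, which immediately gives $N(g) \geq N'(g)$ for every $g$. Summing over $g \in [p]^T$ yields $|\H|_S| \geq |\S_i(\H)|_S|$, as desired.

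There is no real obstacle here beyond unpacking the definition of $\S_i$ carefully; once one recognizes that the shifted fiber over $f$ uses the canonical label set $\{1,\ldots,|e_f|\}$ rather than the original $E(f)$, the inequality reduces to the fact that a union of sets is at least as large as the largest one. The mild subtlety is the two-case split and the reduction from sequences to subsets, which is why I would handle the $i \notin S$ case separately.
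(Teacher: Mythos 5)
Your proof is correct and follows essentially the same structure as the paper's: reduce to distinct entries, handle $i\notin S$ separately, and for $i\in S$ decompose the projection by its restriction to $S\setminus\{i\}$ and compare fiber sizes, with the key observation being that the shifted fiber size is the maximum original edge size while the unshifted fiber is a union containing each such edge's set of $i$-coordinates. You are somewhat more explicit than the paper about why $|e_f|\geq |e'_f|$ (stating it as $\lvert\bigcup_f E(f)\rvert\geq\max_f\lvert E(f)\rvert$), but this is the same argument, not a different route.
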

\begin{proof}
Without loss of generality, assume that 
$S$ has $k$ distinct entries.
Recall that 
$\H|_{S}$ is a subset of $[p]^{k}$.
We assume that $k>1$; the proof when $k=1$ is similar.
If~$i$ does not appear in $S$, shifting does not change the projection.
If~$i$ appears in $S$, argue as follows.
Let $S'$ be $S$ after deleting $i$, so
that $\H|_{S'}$ is a subset of $[p]^{k-1}$.
For each $f \in \H|_{S'}$,
let $e_f$ be the set of $h \in \H|_S$ that agree with $f$ on $[k-1]$,
so that
$$|\H|_{S}| = \sum_f |e_f|.$$
Let $e'_f$ be the set of $h \in \S_i(\H)|_S$ that agree with $f$ on $[k-1]$.
Because $\H|_{S'}$ is equal to $\S_i(\H)|_{S'}$,
we similarly have
$$|\S_i(\H)|_{S}| = \sum_f |e'_f|.$$
For fixed $f$,
\an{the size $|e'_f|$
is equal to the maximum size of an edge in the $i$'th direction of $\H$
whose elements agree with $f$.
This holds 
because there is an edge in the $i$'th direction of $\S_i(\H)$ of size $|e'_f|$
whose elements agree with $f$, and
the sizes of edges do not change during shifting.}
It follows that $|e_f| \geq |e'_f|$.

%
\end{proof}
\begin{corollary}
\label{cor:exp_dim_shifting}
For every $\H \subseteq [p]^n$ and $i \in [n]$,
$$d_E(\S_i(\H)) \leq d_E(\H).$$
\end{corollary}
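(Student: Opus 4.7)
The corollary is an immediate consequence of the preceding claim, so the plan is extremely short: apply the projection bound pointwise to any $E$-shattered sequence.

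Concretely, suppose $S \in \X^k$ is $E$-shattered by $\S_i(\H)$, so $|\S_i(\H)|_S| \geq 2^k$. The previous claim gives $|\S_i(\H)|_S| \leq |\H|_S|$, hence $|\H|_S| \geq 2^k$, meaning $S$ is $E$-shattered by $\H$ as well. Taking $k = d_E(\S_i(\H))$ yields $d_E(\H) \geq k = d_E(\S_i(\H))$, which is the claimed inequality. There is no real obstacle here — the content is entirely in the preceding claim, which already does the work of comparing projections before and after shifting.

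The only minor subtlety worth noting is the handling of sequences with repeated entries, but the definition of $d_E$ automatically restricts attention to sequences with distinct coordinates (since $|\H|_S| \leq p^{k'} < 2^k$ when $S$ has only $k' < k$ distinct entries and $p \leq 2^{k/k'}$ fails), so applying the claim on $S$ in the form it is stated is legitimate. Thus the proof is just a one-line specialization of the projection-monotonicity claim to the particular sequence size witnessing the exponential dimension.
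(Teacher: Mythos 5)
Your main argument is correct and is exactly the (unstated) one the paper has in mind: the corollary is an immediate consequence of the preceding claim, applied to any sequence witnessing $d_E(\S_i(\H))$. The parenthetical aside about repeated entries, however, contains a false statement: the definition of $d_E$ does \emph{not} restrict to sequences with distinct coordinates, and if $p$ is large enough a sequence with repeats can be $E$-shattered. This is harmless here because the projection-monotonicity claim is stated for arbitrary $S \in [n]^k$ (and its proof's reduction to the distinct case is legitimate, since repeated coordinates contribute nothing to $|\H|_S|$), so the specialization you make goes through without any extra care about repeats.
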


We would like to control the 
{structure of edges during shifting.
The most obvious measure to keep track of is the average degree
(with respect to non-singleton edges).}

\begin{definition}[Average Degree]
Let $\G(\H) = (V,E)$ be the one-inclusion graph of $\H\subseteq [p]^n$. 
The average degree of $\H$ is
$$\ad(\H) = \frac{1}{|V|} \sum_{v \in V} \Deg(v)
= \frac{1}{|V|} \sum_{e \in E: |e|>1} |e|,$$
where $\Deg(v) = |\{e \in E : v \in e, |e|>1\}|$.
\end{definition}
Example~\ref{ex:2}
shows that the 
{number of edges and average degree may}
decrease during shifting (which is bad for our purposes).
{The correct measure to keep track of turns} out to be the following.

\begin{definition}[Shifting Average Degree] 
Let $\G(\H) = (V,E)$ be the one-inclusion graph of $\H\subseteq [p]^n$. 
Define 
$$\ad'(\H) = \frac{1}{|V|} \sum_{e \in E} (\vert e \vert - 1).$$
\end{definition}

\begin{claim}[Shifting Does Not Decrease $\ad'$]
For every $\H\subseteq [p]^n$ and $i \in [n]$,
$$\ad'(\S_i(\H)))\geq \ad'(\H).$$
\end{claim}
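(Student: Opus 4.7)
The plan is to reduce the claim to a clean inequality on the edge count of the one-inclusion graph. First, $|V|=|\H|$ is preserved by shifting, since $\S_i$ bijects each fiber $e_f$ onto $\S_i(e_f)$ and the fibers are pairwise disjoint. Second, $\sum_{e\in E}|e|$ counts incidences $(v,e)$ with $v\in e$, and in any one-inclusion graph every vertex lies in exactly $n$ edges (one per direction, singletons included), so $\sum_{e\in E}|e| = n|V|$ is shifting-invariant. Hence
$$\ad'(\H) \;=\; n - \frac{|E|}{|V|},$$
and the desired inequality $\ad'(\S_i(\H))\geq \ad'(\H)$ is equivalent to $|E(\G(\S_i(\H)))|\leq |E(\G(\H))|$. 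So it suffices to show that the total number of (nonempty) edges does not grow under $\S_i$.

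Next I would partition the edges by their direction. For direction $i$ itself, $\S_i$ sends each nonempty fiber $e_f$ to a nonempty fiber $\S_i(e_f)$ of the same size, so the number of direction-$i$ edges is unchanged. For a direction $j \neq i$, I would use the matrix picture introduced just after Definition~\ref{def:shift}: fix $f\colon [n]\setminus\{i,j\}\to[p]$ and form the $p\times p$ Boolean matrix $M_f$ with $M_f(a,b)=1$ iff $f$ extended by $i\mapsto a$, $j\mapsto b$ lies in $\H$. Nonempty direction-$i$ edges extending $f$ correspond to nonempty \emph{columns} of $M_f$, and nonempty direction-$j$ edges extending $f$ correspond to nonempty \emph{rows}.

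The shift $\S_i$ acts on $M_f$ by sorting each column upward: column $b$ keeps its number of $1$'s, call it $c_b$, but these $1$'s now occupy rows $1,\ldots,c_b$. Consequently, after shifting, row $a$ is nonempty iff some column has $c_b\geq a$, so the number of nonempty rows of $M_f$ becomes exactly $\max_b c_b$. Before shifting, the $c_b$ ones in column $b$ lie in $c_b$ distinct nonempty rows, so $c_b$, and therefore $\max_b c_b$, is at most the number of nonempty rows of $M_f$ before shifting. Summing this pointwise bound over all choices of $f$ and over all $j \neq i$ yields $|E(\G(\S_i(\H)))| \leq |E(\G(\H))|$, which, combined with the first paragraph, gives the claim.

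The main conceptual step is the reformulation $\ad'(\H) = n - |E|/|V|$, which is what converts the problem from tracking a weighted edge sum into a plain edge-count inequality; once that is in place, the column-sorting argument on $M_f$ is routine and is in fact the natural multiclass analogue of the classical binary shifting estimate. The only bookkeeping point to watch is to include singleton edges in $E$, so that the identity $\sum_{e\in E} |e| = n|V|$ holds exactly; this is harmless since singletons contribute $0$ to $\ad'$.
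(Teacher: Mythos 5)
Your proof is correct and essentially matches the paper's argument: both reduce the claim to a statement about the matrix $M_f$ and use the observation that column-sorting cannot increase the number of non-zero rows. Your reformulation $\ad'(\H)=n-|E|/|V|$ (via the double-count $\sum_{e\in E}|e|=n|V|$) is a small but clean repackaging — the paper instead decomposes $\sum_e(|e|-1)$ inside each $M_f$ as $(\text{number of $1$'s})-(\text{number of non-zero rows})$ and notes the number of $1$'s is invariant, which amounts to the same invariance you use globally.
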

\begin{proof}
Since $\vert V \vert$ does not change during shifting, we only need to record the changes in the edges.
Let $i$ denote the direction of shifting.
The sizes of all edges in the $i$'th direction do not change during shifting.
We need to understand the behavior in the other directions. 
\an{We shall use the perspective explained after Definition~\ref{def:shift}.}
Fix $j \neq i$, and partition all edges in the $j$'th direction according
to their projection to $[n] \setminus \{i,j\}$.
Fix $f \in \H|_{[n] \setminus \{i,j\}}$,
and consider all vertices that agree with $f$ on $[n] \setminus \{i,j\}$.
Encode this data by the $p\times p$ Boolean matrix $M_{f}$
defined by $M_f(a,b) = 1$
iff adding $a,b$ to $f$ in positions $i,j$ leads to a word in $\H$.
\an{The example we saw earlier helps to digest
the scenario we are operating in (shifting six words
over an alphabet of size four):}
\begin{align*}
\left[ \begin{array}{cccc}
 1 & \ &1& \\
 & & & \\
 &&1& \\
1  &&1& 1\\
\end{array} \right]
\Longrightarrow 
\left[ \begin{array}{cccc}
 1 & \  &1& 1\\
1 & &1 & \\
 &&1& \\
  &&& \\
\end{array} \right]
\end{align*}
The sum of $|e|-1$ over all edges $e$ in the $j$'th direction that agree with $f$
can be expressed as the total number of $1$-entries in the matrix
minus the number of non-zero rows.
This is true for $\H$ as well for $\S_i(\H)$.
The total number of $1$-entries remains fixed.
The number of non-zero rows can not increase during shifting,
because the number of non-zero rows after shifting
is equal to the maximum number of $1$-entries in a single column.
It follows that this sum over $|e|-1$ can not decrease, as claimed.
%
%
%
%
%
\end{proof}

The control of the exponential dimension and of $\ad'$
allows to bound the average degree.

\begin{prop}[Average Degree is Bounded by Exponential Dimension]
\label{prop:mu<d}
For every $\H \subseteq [p]^n$,
$$\ad(\H) \leq 4 d_E(\H).$$
\end{prop}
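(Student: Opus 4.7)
The plan is to pass to a downward-closed fixed point of shifting and then run a direct sub-cube argument. The first step is a simple reduction from $\ad$ to $\ad'$: every non-singleton edge satisfies $|e| \leq 2(|e|-1)$, so summing over edges yields $\ad(\H) \leq 2\,\ad'(\H)$. It therefore suffices to prove $\ad'(\H) \leq 2\,d_E(\H)$.

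Next, I would iteratively apply the shifting operators $\S_1, \ldots, \S_n$ until reaching a class $\H_*$ that is a fixed point of each $\S_i$; this terminates because the sum of entries strictly decreases with each nontrivial shift. Corollary~\ref{cor:exp_dim_shifting} gives $d_E(\H_*) \leq d_E(\H)$, and the ``Shifting Does Not Decrease $\ad'$'' claim gives $\ad'(\H) \leq \ad'(\H_*)$. Hence it suffices to show $\ad'(\H_*) \leq 2\,d_E(\H_*)$ for the fixed point.

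Being a fixed point of every $\S_i$ forces $\H_*$ to be downward-closed in the coordinate-wise order on $[p]^n$: each edge $e_{f,i}$ has $\{h(i) : h \in e_{f,i}\} = \{1,\ldots,|e_{f,i}|\}$, so lowering any coordinate of an $h \in \H_*$ to any value in $\{1, \ldots, h(i)\}$ keeps us inside $\H_*$. This downward-closedness rewrites the numerator of $\ad'$ cleanly: since $|e_{f,i}| - 1$ counts those $h \in e_{f,i}$ with $h(i) \geq 2$, summing over all edges gives
$$\sum_e (|e|-1) = \sum_{h \in \H_*} |T(h)|, \qquad T(h) := \{i \in [n] : h(i) \geq 2\},$$
so $\ad'(\H_*)$ is the average of $|T(h)|$ over $h \in \H_*$.

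The crux, where the exponential dimension enters, is to show $|T(h)| \leq d_E(\H_*)$ for each individual $h \in \H_*$. The idea is that downward-closedness embeds a sub-cube: for every $A \subseteq T(h)$, the vector $h_A$ defined by $h_A(i) = h(i)$ if $i \in A \cup T(h)^c$ and $h_A(i) = 1$ if $i \in T(h) \setminus A$ satisfies $h_A \leq h$, hence lies in $\H_*$; the $2^{|T(h)|}$ restrictions $h_A|_{T(h)}$ are pairwise distinct because $h(i) \geq 2$ on $T(h)$. Thus $T(h)$ is $E$-shattered, and $|T(h)| \leq d_E(\H_*)$. Averaging yields $\ad'(\H_*) \leq d_E(\H_*)$, and chaining the inequalities gives $\ad(\H) \leq 2\,d_E(\H) \leq 4\,d_E(\H)$. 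I expect no serious obstacle; the critical insight is recognizing the sub-cube structure hidden inside a downward-closed fixed point, which is precisely what the exponential dimension is tailored to count.
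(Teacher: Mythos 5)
Your proof is correct. It shares the same first two stages with the paper's argument --- reducing $\ad$ to $\ad'$ and applying the two shifting lemmas to pass to a downward-closed fixed point $\H_*$ --- but the final stage diverges. The paper bounds $\ad(\H_*) \leq 2\,d_E(\H_*)$ by induction on $|\H_*|$: peel off a maximal element $h_0$, bound $\Deg(h_0)$ by $|h_0| \leq d_E$ via the sub-cube under $h_0$, and observe that the total degree sum drops by at most $2\Deg(h_0)$; this loses an extra factor of 2 in the removal step. You instead prove the exact identity
\[
\sum_{e} (|e|-1) \;=\; \sum_{h \in \H_*} |T(h)|, \qquad T(h) = \{\,i : h(i) \ge 2\,\},
\]
which holds because each nonempty edge in a downward-closed class is an interval $\{1,\dots,|e|\}$ in its direction, and then apply the same sub-cube observation pointwise to conclude $|T(h)| \le d_E(\H_*)$ for \emph{every} $h$, not just a maximal one. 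This avoids the induction entirely and in fact proves the sharper bound $\ad(\H) \le 2\,d_E(\H)$ (of which $4\,d_E$ is a weakening). The crucial insight --- a downward-closed fixed point hides a Boolean sub-cube under each element, which is precisely what $d_E$ counts --- is common to both, but your double-counting packaging is cleaner and quantitatively tighter.
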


\begin{proof}
Apply shifting on $\H$ until a fixed point $\H_*$ is reached.
Because
$$\ad(\H) \leq 2 \ad'(\H)
\leq 2 \ad'(\H_*) \leq 2 \ad(\H_*),$$
it suffices to prove that $\ad(\H_*) \leq 2 d_E(\H_*)$.
This holds by induction.
The induction base $|\H_*|=1$ trivially holds.
The induction step is justified as follows.
{Let $h_0$ be the concept in $\H_*$ so that
no concept in $\H_*$ is larger than $h_0$
with respect to the natural partial order. 
Let $|h_0|$
be the number of entries that are larger than $1$ in $h_0$.
The fixed point property of $\H_*$ implies that it is closed downwards.
There are $2^{|h_0|}$ words under $h_0$ in $\H_*$.
It follows that $\Deg(h_0) \leq |h_0| \leq d_E(\H_*)$.}
Remove $h_0$ from $\H_*$.
This removal does not increase the exponential dimension,
and the resulting class is still closed downwards.
The inductive assumption completes the proof,
because the sum
of the degrees is reduced by at most $2 \Deg(h_0)$.
\end{proof}

The bound on the average degree immediately 
allows to build good orientations.

\begin{corollary}[Exponential Dimension Leads to Orientations]
\label{cor:orient}
For every $\H \subseteq [p]^n$,
there is an orientation of $\G(\H)$ with maximum out-degree
at most~$4 d_E(\H)$.
\end{corollary}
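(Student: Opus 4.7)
The plan is to combine Proposition~\ref{prop:mu<d}, which bounds the average degree by $4 d_E(\H)$, with a standard peeling argument: repeatedly delete a vertex of small degree, orient by induction on the shrunk class, and lift the orientation back by directing the deleted vertex's incident edges according to the recursive choice.

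First I would observe that the bound on $\ad$ is hereditary: for any subclass $\H' \subseteq \H$, the exponential dimension satisfies $d_E(\H') \leq d_E(\H)$ (since every $E$-shattered sequence for $\H'$ is one for $\H$), so by Proposition~\ref{prop:mu<d} we have $\ad(\H') \leq 4 d_E(\H)$. In particular, in $\G(\H)$ itself the average value of $\Deg(v)$ is at most $4 d_E(\H)$, so some vertex $v \in \H$ has $\Deg(v) \leq 4 d_E(\H)$, where $\Deg(v)$ counts only non-singleton edges containing $v$.

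Next I would argue by induction on $|\H|$ (the base case $|\H|=1$ is trivial, since all edges are singletons and must be oriented to the unique vertex). For the inductive step, pick $v$ as above and apply the induction hypothesis to $\H \setminus \{v\} \subseteq [p]^n$ to obtain an orientation $\sigma'$ of $\G(\H \setminus \{v\})$ with $\dout(\sigma') \leq 4 d_E(\H \setminus \{v\}) \leq 4 d_E(\H)$. Now I would extend $\sigma'$ to an orientation $\sigma$ of $\G(\H)$ as follows. Every edge $(e_{i,f}, i) \in E(\G(\H))$ with $v \notin e_{i,f}$ is unchanged by deletion, and I orient it as $\sigma'$ does. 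Every edge with $v \in e_{i,f}$ and $|e_{i,f}| > 1$ corresponds to the non-empty edge $(e_{i,f} \setminus \{v\}, i) \in E(\G(\H \setminus \{v\}))$, and I orient it to $\sigma'\bigl((e_{i,f} \setminus \{v\}, i)\bigr) \in e_{i,f} \setminus \{v\}$. Singleton edges containing only $v$ are oriented to $v$.

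Finally I would check that $\dout(\sigma) \leq 4 d_E(\H)$. For any $w \neq v$, the edges of $\G(\H)$ containing $w$ are in bijection with edges of $\G(\H\setminus\{v\})$ containing $w$ (matching on the same direction and same restriction), and the new orientation agrees with $\sigma'$ on whether $w$ is chosen, so $\dout(w; \sigma) = \dout(w; \sigma') \leq 4 d_E(\H)$. For $v$, singleton edges containing only $v$ contribute $0$, while every non-singleton edge at $v$ is oriented to a vertex distinct from $v$, contributing $1$; thus $\dout(v;\sigma) \leq \Deg(v) \leq 4 d_E(\H)$. The main (minor) obstacle is verifying that the lift preserves the out-degrees of the surviving vertices, which boils down to the observation that a non-singleton edge $e$ of $\G(\H)$ with $v \in e$ and its deletion $e \setminus \{v\}$ in $\G(\H\setminus\{v\})$ are directed to the same target, so the only vertex whose out-degree can change is $v$ itself.
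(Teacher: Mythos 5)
Your proof is correct and follows essentially the same approach as the paper, which also invokes Proposition~\ref{prop:mu<d} to find a low-degree vertex and then runs the same greedy peeling/lifting argument from the proof of Lemma~\ref{lem:orientation}. You have simply spelled out the inductive bookkeeping (the bijection between edges at a surviving vertex before and after deletion, and the counting at the deleted vertex) that the paper leaves implicit.
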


\begin{proof}
Proposition~\ref{prop:mu<d} implies that
every induced sub-graph of $\G(\H)$ has a vertex of degree
at most $4d_E(\H)$.
The beginning of the proof of Lemma~\ref{lem:orientation}
produces the needed orientation;
the orientation is constructed ``greedily''
\an{by picking a vertex of degree at most $4 d_E(\H)$,
removing it from the graph and proceeding recursively.}
\end{proof}

The last piece of the puzzle is to relate the 
exponential dimension to the Natarajan dimension.
This is achieved via a generalization of Sauer's lemma
by~\citet*{HausslerL95}.

\begin{lemma}[Controlling the Exponential Dimension]
\label{lem:dedn}
For every $\H \subseteq [p]^n$ with $d_N = d_N(\H)$
and $d_E = d_E(\H) < \infty$,
\[d_E \leq  5 d_N \log(  p) .\]
\end{lemma}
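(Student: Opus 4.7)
The plan is to invoke the generalized Sauer-Shelah inequality of Haussler-Long, which states that any $\H \subseteq [p]^n$ with Natarajan dimension at most $d$ satisfies
\[ |\H| \leq \sum_{i=0}^{d} \binom{n}{i}\binom{p}{2}^i . \]
Once this bound is in hand, the lemma reduces to elementary arithmetic: an $E$-shattered set of size $d_E$ forces $|\H|_S| \geq 2^{d_E}$, which must fit inside the Haussler-Long bound, and this is impossible when $d_E$ is too large relative to $d_N \log p$.

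First I would let $S \in \X^{d_E}$ be an $E$-shattered sequence, so that $\H|_S \subseteq [p]^{d_E}$ has cardinality at least $2^{d_E}$. Since projecting can only decrease the Natarajan dimension, $d_N(\H|_S) \leq d_N$. Applying the Haussler-Long estimate to $\H|_S$ and using the crude bounds $\sum_{i \leq d_N}\binom{d_E}{i} \leq \bigl(\tfrac{e\, d_E}{d_N}\bigr)^{d_N}$ and $\binom{p}{2} \leq p^2/2$, I obtain
\[ 2^{d_E} \;\leq\; \Bigl(\tfrac{e\, d_E}{d_N}\Bigr)^{d_N} \cdot \Bigl(\tfrac{p^2}{2}\Bigr)^{d_N} . \]
Taking logarithms (base two) yields
\[ d_E \;\leq\; d_N\bigl(\log(d_E/d_N) + 2\log p + \log(e/2)\bigr) . \]

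Now I would argue by contradiction: suppose $d_E > 5\, d_N \log p$. Substituting into the inequality above gives $\log(d_E/d_N) > \log(5\log p) = \log 5 + \log\log p$, and the right-hand side becomes at most $d_N\bigl(2\log p + \log\log p + C\bigr)$ for a small absolute constant $C$. Combined with the assumption $p \geq 2$, so that $\log p \geq 1$ dominates $\log\log p$, a direct comparison shows $5\, d_N \log p$ exceeds this right-hand side, contradicting the inequality. Hence $d_E \leq 5\, d_N \log p$ as claimed.

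I expect the only nontrivial step to be the careful arithmetic that squeezes the constant down to $5$. The Haussler-Long input is a black box, and the contradiction argument is standard; the subtlety is that the bound contains a $\log(d_E/d_N)$ term that must be handled self-referentially, which is why one substitutes the target value $5\, d_N \log p$ and uses $\log p \geq 1$ to absorb the $\log\log p$ slack.
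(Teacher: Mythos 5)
Your proposal matches the paper's proof exactly in its key ingredient (Corollary~5 of Haussler--Long applied to the $E$-shattered projection, followed by the standard $(em/d)^d$ bound on the binomial sum) and in the resulting self-referential inequality $2^{d_E}\leq (p^2 e\,d_E/2d_N)^{d_N}$; the paper likewise dispatches the remaining arithmetic with only the remark ``this implies the lemma because $p\geq 2$.'' One step in your write-up is logically inverted, though: from $d_E>5d_N\log p$ you correctly conclude $\log(d_E/d_N)>\log 5+\log\log p$, but this is a \emph{lower} bound on a term appearing in the right-hand side of $d_E\leq d_N\bigl(\log(d_E/d_N)+2\log p+\log(e/2)\bigr)$, so it cannot yield the ``right-hand side becomes at most $d_N(2\log p+\log\log p+C)$'' you assert. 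The intended argument is a monotonicity one: write $x=d_E/d_N$ and note that the inequality says $x-\log x\leq 2\log p+\log(e/2)$; since $t\mapsto t-\log t$ is increasing for $t>1/\ln 2$ and $5\log p>1/\ln 2$, it suffices to verify $5\log p-\log(5\log p)\geq 2\log p+\log(e/2)$, i.e.\ $3\log p\geq\log(5e\log p/2)$, which holds for all $p\geq 2$ (at $p=2$ the inequality reads $3\geq\log(5e/2)\approx 2.76$, and the left side then grows much faster). You should also note that the $(em/d)^d$ bound on the binomial sum requires $d_N\leq d_E$, but the case $d_N>d_E$ is trivial since $d_E\leq d_N\leq 5d_N\log p$. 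With those two small repairs your argument is complete and is the same proof as the paper's.
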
   

\begin{proof}
Corollary~5 in~\citep*{HausslerL95}
says that for every $S \subseteq [n]$ of size $m$,
the size of $\H|_S$ is at most
$$\sum_{i=1}^{d_N} {m \choose i}{p \choose 2}^i
\leq \Big( \frac{p^2 em}{2 d_N} \Big)^{d_N}.$$
\an{By definition,} the exponential dimension satisfies
$$2^{d_E} \leq \Big( \frac{p^2 e d_E}{2 d_N} \Big)^{d_N}.$$
This implies the lemma because $p \geq 2$.
\end{proof}

 \begin{remark}
 Corollary~\ref{cor:orient}
 and Lemma~\ref{lem:dedn}
 imply Lemma~\ref{lem:wl:rub}.
 \end{remark}

\section{Learnability $\equiv$ Finite DS Dimension}
\label{sec:ub}

Here we prove the characterization of multiclass PAC learnability via the DS dimension (Theorem~\ref{thm:ub}).
Our main contribution is algorithmic. 
We develop a learning algorithm
for any class $\H$ with finite DS dimension.

\subsection{Outline}\label{subsec:poverview}
    
    The starting point is 
    the \oig algorithm by \citet*{haussler1994predicting};
    see Section~\ref{subsec:oi} above for a reminder.  
    The finiteness of the DS dimension translates
    to a non-trivial guarantee \an{on} the \oig algorithm (as we saw in Claim~\ref{claim:good_pt}).
    The output hypothesis of this algorithm has expected prediction error at most $1-\frac{1}{d+1}$.
    This error is pretty high, but
    the crucial point is that it is uniformly bounded
    away from $1$.
\an{The \oig algorithm forms a kind of a (very) weak PAC learner.}
    
It is tempting to try to improve the error by boosting.
But standard boosting turns out to be useless in our context.
\an{The traditional assumption for boosting in the binary setting requires error below $\tfrac{1}{2}$.}
The above error guarantee is too weak and does not meet the minimal requirements for boosting.
And even if multiclass boosting \an{was} available,
known techniques have sample complexity 
that scales with $\log \lvert \Y\rvert$; see~\citep*{schapire2012boosting,brukhim2021multiclass}.
This factor could be infinite in \an{our} setting.
\an{To circumvent this obstacle we introduce the framework of list PAC learning.}

\subsection*{List PAC learning}
In the standard PAC setting,
the goal is to provide a single prediction on an unseen data point.
In list PAC learning,
the goal is to provide a short menu of predictions.
Given a sample $S \sim \D^n$ from a realizable $\D$,
the goal is to output a menu $\mu$ that maps elements of $\X$
to a small subset of $\Y$
so that $y \in \mu(x)$ with high probability 
over a new test point $(x,y) \sim \D$.
List PAC learning is discussed in greater detail in Section~\ref{sec:listpac}.

%

\an{Rather than boosting the weak \oig algorithm to a strong PAC learner, 
    we use it to derive a list PAC learning algorithm.
We show that every class $\H$
    with a finite DS dimension admits a list PAC learner}
(see Algorithm~\ref{algo:list-pac}).
This list-learner gathers information from several \oig algorithms
to produce a good menu.
\an{Its analysis is based on the leave-one-out symmetrization argument}.
The list-learner allows to eliminate the vast majority
of a priori possible labels.
Instead of all of $\Y$, we can safely use the menu $\mu(x)$
as the ``local alphabet for $x$''.
\an{Menus can be thought of as tools for alphabet reduction.
Once we have a list PAC learner, it is natural to try to reduce 
the learning task to one in which the number of labels is bounded.}

Did we just reduce the infinite alphabet case to the finite case?
The short answer is no.
Even with a good menu $\mu$,
the subclass
$\H\vert_\mu=\{h\in \H : \forall x \in \X \  h(x)\in \mu(x)\}$
of $\H$ may be completely useless.
For example, let $\H\subseteq\{0,1,2\}^\mathbb{N}$ be the set of all functions $h$ such that $\lvert h^{-1}(\{1,2\})\rvert < \infty$, and let $\mu$ be the menu
such that $\mu(x)=\{1,2\}$ for all $x\in\mathbb{N}$. 
The menu-subclass $\H\vert_\mu$ is just empty.
At the same time,
every finitely supported distribution $\D$ with labels in $\{1,2\}$ 
    is both realizable by~$\H$ and consistent with~$\mu$.
    This simple example indicates that 
    in order to restrict to a subclass of $\H$ without losing essential information,
    at least some knowledge on the support of the target distribution is needed.
    Learning the support of a distribution, however,
    is a much harder task than PAC learning.

\an{Let us make a quick comment.
In this work, list PAC learning serves as a tool for proving Theorem~\ref{thm:ub}.
However, we think it is a natural setting and interesting in its own right
(and we prove further motivation in Section~\ref{sec:listpac}).}

 
\subsection*{List PAC Learning $\Rightarrow$ PAC Learning} 
Our \an{solution is based on the fact} that 
the \oig algorithm is exactly suitable for situations
in which the learning task is not defined by a concept class, 
but by a set of allowable samples.
The main property of \oig algorithms is their {\em locality}.
To make a prediction on $x$, 
they just use the part of $\H$
that is relevant to the training data $S$,
and do not require any global access to $\H$.

An alternative way to model learning with a menu 
    is via partial concept classes~\citep*{alon2021theory}.
    Instead of all maps in $\H$ that are consistent with the menu $\mu$,
    we can consider all partial maps
    that agree with both the class $\H$ and the menu $\mu$. 
    We chose not to use this formalism \an{here}
    in order to use as standard language as possible.
 The partial concept class perspective does not really
 help to solve the problem.   
    The focus of \citet*{alon2021theory} was on binary-classification, 
    which is significantly simpler than the multiclass setting.
    Generalizing the analysis of the one-inclusion graph
    from the binary setting to the multiclass setting turns out to be a subtle
    (and somewhat confusing) task.
    Natural attempts to do so fail;
    see Section~\ref{sec:shift} for a full discussion.

\begin{figure}
\centering
\includegraphics[width = 13cm]{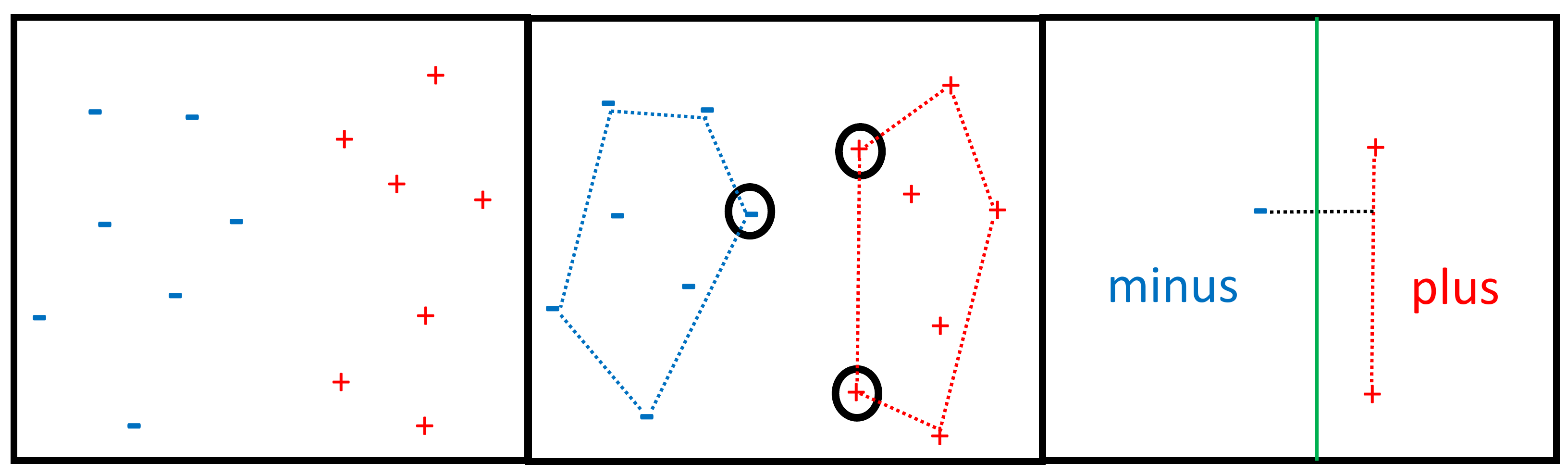}
\caption{An illustration of Support Vector Machine in 2D as a sample compression
scheme.
The realizable sample $S$ consists of negative and positive points.
The algorithm outputs a separating line that maximizes the margin. 
This line is determined by the support vectors
(circled).}\label{fig:svmcomp}
\end{figure}

\subsection*{Sample Compression Schemes}
The algorithm we develop is best thought of as a {\em sample compression scheme}~\citep*{littlestone1986relating}.
A sample compression scheme (Definition~\ref{def:scs}) is an abstraction
of a common property to many learning algorithms;
see Figure~\ref{fig:svmcomp}.
It can be viewed as a two-party protocol 
    between a {\it compresser} and a {\it reconstructor}.
    Both players know the underlying concept class $\H$.
The compresser gets as input an $\H$-realizable sample $S$.    
The compresser picks a small subsample $S'$ of $S$ and sends it to the reconstructor.
The reconstructor outputs an hypothesis~$h$.
    The correctness criteria is that $h$ needs to 
    correctly classify \emph{all} examples in the input sample $S$.

    One advantage of using the sample compression schemes framework
    is that the proofs are typically cleaner, because
    in contrast to the probabilistic nature of the PAC framework,
    sample compression is a deterministic task.
    At the same time, sample compression schemes are known
    to represent good PAC learning algorithms~\citep*{littlestone1986relating}.

Classical sample compression algorithms usually boil down to
a simple one-shot encoding scheme (e.g.\ Figure~\ref{fig:svmcomp}).
Our compression scheme is more involved and is comprised
of two main components.
The first component is a variant of sample compression
that fits into the list-learning framework (Definition~\ref{def:lscs}).
The second component incorporates the menu derived by the first component 
together with a minimax-based sample compression 
as in~\citep*{david2016supervised}.
All of this is described in Section~\ref{subsec:cs}.

\an{A high-level outline of the full algorithm is presented in Figure~\ref{fig:outline},
after all the needed ingredients are prepared
and the subtleties are discussed.}





\subsection{List PAC learning}\label{sec:listpac}
List PAC learning is a model 
for providing a short menu of likely predictions.
It extends the standard PAC model
by allowing the learning algorithm more freedom.

\an{Relaxing the demand of a single output to a list of outputs
is a common and useful paradigm in computer science.
One notable example is the notion of {\em list-decoding}
in coding theory, which is important both as a tool and as a goal.}

Let us start with a few examples \an{for list learning}.
{In medical contexts, list-learning can offer physicians 
a menu of likely diagnoses.
In technical contexts, list-learning can 
provide a short menu of possible solutions that are meant to assist
clients.
List-learning can also provide
the menu of preferences of consumers.
One can easily imagine other scenarios where list-learning is useful.}

Our main motivation for developing this model
is reasoning on the first component of our multiclass learning algorithm.
But this basic model naturally fits into many scenarios,
and we plan to investigate it further in future works.

The goal of list PAC learning is to compute good menus.

\begin{definition} [$p$-menu]\label{def:dict}
A menu of size $p \in \mathbb{N}$ is a function $\mu:  \X \rightarrow  \{Y \subseteq\Y : \lvert Y\rvert\leq p\}$.
 \end{definition}

List PAC learning is the following natural version
of standard PAC learning.

\begin{definition}[List PAC Learner]
    An algorithm $A$ with sample size $n$ and list size $p$
    is a {\em list PAC learner} with success probability $\alpha >0$
    for the concept class $\H \subseteq \Y^\X$ if
    for every $\H$-realizable distribution $\D$,
\[\Pr_{(S,(x,y)) \sim \D^{n+1}}\bigl[y\in \mu_S(x)\bigr]\geq \alpha,\]
    where $\mu_S = A(S)$ is always a $p$-menu.
\end{definition}

\begin{remark}
\an{In the ``noisy'' case, when
the label $y$ has entropy given $x$,
list learning can potentially lead to zero error
even though in the standard PAC setting
zero error is not achievable.
}
\end{remark}

The main result of this section is the development of a
list PAC learner for every class of finite DS dimension
(see Algorithm \ref{algo:list-pac}).
The list PAC learner can be thought of as a brute-force
\an{extension} of the one-inclusion learning rule.

\begin{algorithm}
\caption{List PAC learner $\mathcal{L}_{\H,t}$ for $\H \subseteq \Y^\X$ with $d_{DS}(\H) =d$ and $t \in \mathbb{N}$} \label{algo:list-pac}
\begin{flushleft}
  {\bf Input:} Data $S \in (\X \times \Y)^n$ where $n=d+t$.  \\
{\bf Output:} A $p$-menu $\mu_S$ for $p = {n \choose t}$. \\
\end{flushleft}
\begin{algorithmic}[1]
\STATE Let $S_1,\ldots,S_p$ denote all subsamples of $S$ of size $d$.
\STATE Let $h_{S_i}=\A_{\H}(S_i)$ denote the hypothesis \an{output of} Algorithm \ref{algo:one_inc} on input sample $S_i$.
\STATE Return the menu defined by
\[
\mu_S(x) = \bigl\{h_{S_1}(x),\ldots, h_{S_p}(x)\bigr\}.
\]
\end{algorithmic}
\end{algorithm}

\begin{prop}[Finite DS Dimension implies List PAC Learning]  \label{prop:weak-list-pac}
Let $\H \subseteq \Y^\X$ be a class with DS dimension $d<\infty$ and let $t\in\mathbb{N}$.
The algorithm $\mathcal{L}_{\H,t}$ is a
list PAC learner for $\H$ with sample size $n = d+t$,
list size $p = {n \choose t}$ and success probability $\alpha=\tfrac{t+1}{d+t+1}$.
\end{prop}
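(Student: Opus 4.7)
The plan is to use the leave-one-out symmetrization (Fact~\ref{fact:sym}) together with the weak-prediction guarantee of Claim~\ref{claim:good_pt}. By Fact~\ref{fact:sym}, sampling $S \sim \D^n$ followed by an independent test point $(x,y)\sim\D$ is distributionally equivalent to sampling $S'\sim \D^{n+1}$ together with an independent uniform $I \in [n+1]$ and forming the pair $(S'_{-I}, (x'_I,y'_I))$. So it suffices to show that for \emph{every} fixed $\H$-realizable $S' = ((x'_1,y'_1),\ldots,(x'_{n+1},y'_{n+1}))$, the set
\[
G(S') := \bigl\{ i \in [n+1] : y'_i \in \mu_{S'_{-i}}(x'_i) \bigr\}
\]
has size at least $t+1$. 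Averaging over $I$ and then over $S'$ then yields the required probability $\tfrac{t+1}{n+1} = \tfrac{t+1}{d+t+1}$.

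First I would connect Claim~\ref{claim:good_pt} to the list-learner. Let $T \subseteq [n+1]$ be any subset of size $d+1$, and consider the sub-sample $(S')|_T$ of size $d+1$. Claim~\ref{claim:good_pt} applied to $(S')|_T$ gives some index $i \in T$ such that $h_{(S')|_{T\setminus\{i\}}}(x'_i) = y'_i$. The key point is that $T\setminus\{i\}$ is a subset of $[n+1]\setminus\{i\}$ of size exactly $d$, so it is one of the subsamples $S_1,\dots,S_p$ that $\mathcal{L}_{\H,t}$ enumerates when run on input $S'_{-i}$. Therefore $h_{(S')|_{T\setminus\{i\}}}(x'_i)$ is one of the entries appearing in $\mu_{S'_{-i}}(x'_i)$, and hence $y'_i \in \mu_{S'_{-i}}(x'_i)$, i.e.\ $i \in G(S')$.

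The conclusion is then a counting argument. Suppose for contradiction that $\lvert G(S')\rvert \leq t$. Then the complement $[n+1]\setminus G(S')$ has size at least $n+1-t = d+1$, so it contains some subset $T$ of size exactly $d+1$. Applying the previous paragraph to this $T$ produces an index $i \in T \subseteq [n+1]\setminus G(S')$ with $i \in G(S')$, a contradiction. Hence $\lvert G(S')\rvert \geq t+1$ for every $S'$, and
\[
\Pr_{(S,(x,y)) \sim \D^{n+1}}\bigl[y \in \mu_S(x)\bigr]
= \E_{S'}\!\left[\frac{\lvert G(S')\rvert}{n+1}\right]
\geq \frac{t+1}{d+t+1}.
\]

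The main (and essentially only) technical hurdle is making sure that the sub-sample $S'|_{T\setminus\{i\}}$ that witnesses a correct prediction in Claim~\ref{claim:good_pt} is indeed one of the subsamples enumerated by the list learner on $S'_{-i}$; this is automatic because the list learner considers \emph{all} size-$d$ subsamples. Everything else is bookkeeping with the symmetrization identity.
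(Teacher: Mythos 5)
Your proposal is correct and follows essentially the same route as the paper: both reduce via the leave-one-out symmetrization (Fact~\ref{fact:sym}) to lower-bounding the number of ``good'' indices in a fixed $S'$ of size $n+1$, and both obtain $t+1$ good indices from repeated/implicit use of Claim~\ref{claim:good_pt} together with the observation that the size-$d$ subsample arising in that claim is among those the list learner enumerates. The only difference is cosmetic: the paper constructs the good indices $i_1,\dots,i_{t+1}$ greedily by applying the claim to the nested sets $[d+1]$, $[d+2]\setminus\{i_1\}$, $\dots$, whereas you run a single pigeonhole/contradiction argument on the complement of $G(S')$; you also spell out the step connecting Claim~\ref{claim:good_pt} to the enumerated subsamples, which the paper leaves implicit.
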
 

\begin{proof}
Let $\mu_S = \mathcal{L}_{\H,t}(S)$ be the menu generated by the algorithm
with data $S$.
By the leave-one-out symmetrization argument (Fact \ref{fact:sym}), 
\begin{align*}
\Pr_{(S,(x,y)) \sim \D^{n+1}}\bigl[y\in \mu_S(x)\bigr] &= 
\Pr_{(S',I) \sim \D^{n+1} \times \textit{U}([n+1]])}\bigl[y'_I\in \mu_{S'_{-I}}(x'_I)\bigr]. 
\end{align*}
It hence suffices to show that every realizable sample $S'$ of size $n+1$ satisfies
\begin{equation} \label{eq:td_}
    \Pr_{I \sim \textit{U}([n+1]])}\bigl[y'_I\in \mu_{S'_{-I}}(x'_I )\bigr]
\ge \frac{t+1}{d+t+1}.
\end{equation}
Let us call an index~$i\in [n+1]$ {\it good} if $y'_i\in \mu_{S'_{-i}}(x'_i)$. 
We need to show that there are at least $t+1$ good indices. 
By Claim~\ref{claim:good_pt}, 
at least one of the indices in $[d+1]$ is good.
Denote this good index by $i_1$. 
Again, by Claim~\ref{claim:good_pt}, 
at least one of the indices in $[d+2] \setminus \{i_1\}$ is good.
Denote this good index by $i_2$. 
Repeat this process to obtain the needed $t+1$ good indices.
\end{proof}

\subsection{Learning Natarajan Classes From Menus} \label{subsec:img}
We now move towards the second component of our algorithm.
The objective is to use the good menu that was generated
by the first component to effectively reduce the number of labels.
The algorithm we develop in this sub-section is a weak PAC learner,
but under a strong assumption.
Several such weak learners will be combined later on
to get the full sample compression scheme.

The learning algorithm now has two pieces of knowledge:
the underlying class $\H$ and the menu $\mu$.
Trusting that the first component delivered on its promise,
it assumes that the data is consistent with the menu.
This is captured by the following definition.

\begin{definition}[Menu Realizability]
A sample $S \in (\X \times \Y)^n$
is \emph{realizable} by the menu $\mu$ if $y\in \mu(x)$ for every $(x,y)$ in $S$. 
A distribution $\D$ over $\X \times \Y$ is {\em realizable} by $\mu$ if for every $m \in \mathbb{N}$,
a random sample $S\sim \D^m$ is realizable by $\mu$ with probability~$1$. 
\end{definition}

\an{This definition captures the ideal scenario
that we have a menu that completely captures 
the unknown distribution $\D$.
It is basically impossible to generate a single menu
that captures all of $\D$.
Nevertheless, this idealization is a useful sub-goal
that we need to deal with later on.}

\begin{algorithm}
\caption{One-inclusion algorithm $\A_{\H,\mu}$ 
for a class $\ch$ and menu $\mu$} 
\label{algo:weak-pac}
\begin{flushleft}
  {\bf Input:} A sample $S = \big((x_1, y_1),\ldots,(x_n, y_n)\big)$ realizable by $\H$ and $\mu$. \\
{\bf Output:} A hypothesis $h_{S}: \X \to \Y$. \\
\ \\
For each $x \in \X$, the value $h_S(x)$ is computed as follows.
\end{flushleft}
\begin{algorithmic}[1]
\STATE Consider the class $\H' \subseteq \Y^{n+1}$ of all patterns on 
the unlabelled data that are realizable by both $\H$ and $\mu$.
That is, it is the set of all {$h \in \H|_{(x_1,\ldots,x_n,x)}$
so that $h(n+1) \in \mu(x)$ and $h(i) \in \mu(x_i)$ for $i \in [n]$.}
\STATE Find an orientation $\sigma$ of $\mathcal{G}(\H')$ that minimizes the maximum out-degree.
\STATE Consider the edge in direction $n+1$
that is consistent with $S$. Let
\[e=\bigl\{ h \in \H' :  \forall i \in [n] \ h(i) =y_i \bigr\}.\]
\STATE Let $h' = \sigma((e,n+1))$.
\STATE Set $h_{S}(x)=h'(n+1)$.
\end{algorithmic}
\end{algorithm}

The main result of this sub-section is a PAC learning algorithm
for menu-realizable distributions
(Algorithm~\ref{algo:weak-pac}).
The sample complexity is controlled by the
size of the menu $\mu$ as well the Natarajan dimension of $\H$.
This is pretty good news because
we controlled the size of the menu, and
the Natarajan dimension is the smallest among all dimensions.

\begin{prop}[PAC Learning Given a Menu]  \label{prop:weak-learner}
Let $\H\subseteq \Y^\X$ be a class with Natarajan dimension $d_N <\infty$ and let $\mu$ be a~$p$-menu. 
For every distribution $\D$ over $\X \times \Y$ that is realizable by both $\H$ and by~$\mu$, and for all integers $n>0$,
\[\Pr_{(S,(x,y))\sim \D^{n+1}}\bigl[h_S(x)\neq y\bigr]\leq \frac{20 d_N \log(p)}{n},\]
where $h_S = \A_{\H,\mu}(S)$.
\end{prop}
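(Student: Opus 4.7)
\bigskip

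\noindent\textbf{Proof proposal.} The plan is to combine the leave-one-out symmetrization (Fact~\ref{fact:sym}) with the shifting-based orientation bound of Lemma~\ref{lem:wl:rub}, exactly as in the unconstrained one-inclusion analysis, but applied to the menu-restricted one-inclusion graph built inside Algorithm~\ref{algo:weak-pac}.

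First I would rewrite the error probability using symmetrization. By Fact~\ref{fact:sym},
\[
\Pr_{(S,(x,y))\sim \D^{n+1}}\bigl[h_S(x)\neq y\bigr] \;=\; \Pr_{(S',I)\sim \D^{n+1}\times U([n+1])}\bigl[h_{S'_{-I}}(x'_I)\neq y'_I\bigr].
\]
So it suffices to show, for \emph{every} fixed sample $S'=((x'_1,y'_1),\ldots,(x'_{n+1},y'_{n+1}))$ that is realizable by both $\H$ and $\mu$, that the fraction of indices $i\in[n+1]$ with $h_{S'_{-i}}(x'_i)\neq y'_i$ is at most $\tfrac{20 d_N\log p}{n+1}\leq \tfrac{20 d_N \log p}{n}$.

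Next I would fix such a sample $S'$ and examine the class $\H'\subseteq \Y^{n+1}$ used in Algorithm~\ref{algo:weak-pac} at input $S'_{-i}$ with test point $x'_i$: namely, the restriction to the unlabeled points $x'_1,\ldots,x'_{n+1}$ of all $h\in\H$ that respect the menu, i.e.\ $h(x'_j)\in\mu(x'_j)$ for all $j$. The vertex $y'=(y'_1,\ldots,y'_{n+1})$ lies in $\H'$ by the joint realizability assumption. Exactly as in the proof of Claim~\ref{claim:good_pt}, for every $i$ one has $h_{S'_{-i}}(x'_i)\neq y'_i$ if and only if the chosen orientation $\sigma$ of the one-inclusion graph $\G(\H')$ directs the $i$-th edge through $y'$ away from $y'$; hence the number of ``bad'' indices equals $\dout(y';\sigma)$, which is bounded by $\dout(\sigma)$.

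The heart of the argument is to bound $\dout(\sigma)$ using Lemma~\ref{lem:wl:rub}. Here the key observation is that, because of the menu constraint, $\H'$ effectively lives over an alphabet of size $p$: for each coordinate $j\in[n+1]$ only the $\leq p$ symbols of $\mu(x'_j)$ appear, so we may fix an arbitrary injection $\mu(x'_j)\hookrightarrow[p]$ per coordinate and identify $\H'$ with a subclass of $[p]^{n+1}$. This coordinate-wise relabeling preserves the one-inclusion graph, and it preserves the Natarajan dimension, since an N-shattering of $\H'$ yields an N-shattering of $\H$ of the same size (the witnessing maps $f,g$ pull back unchanged). Thus $d_N(\H')\leq d_N(\H)=d_N$, and Lemma~\ref{lem:wl:rub} gives an orientation of $\G(\H')$ with maximum out-degree at most $20 d_N\log p$. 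Since the orientation $\sigma$ chosen by Algorithm~\ref{algo:weak-pac} \emph{minimizes} the maximum out-degree, it also satisfies $\dout(\sigma)\leq 20 d_N\log p$, which completes the bound.

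The only nontrivial obstacle I anticipate is justifying the alphabet reduction cleanly: one must check that the one-inclusion hypergraph, its orientations, and the Natarajan dimension are all invariant under coordinate-wise relabelings into $[p]$, so that Lemma~\ref{lem:wl:rub} (stated for subsets of $[p]^n$) truly applies to $\H'$. Once that bookkeeping is in place, the remainder is a direct assembly of Fact~\ref{fact:sym}, Claim~\ref{claim:good_pt}-style counting, and Lemma~\ref{lem:wl:rub}.
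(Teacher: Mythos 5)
Your proposal is correct and follows essentially the same route as the paper: leave-one-out symmetrization, reduction to bounding $\dout(y';\sigma)$ on the menu-restricted one-inclusion graph $\G(\H')$, and an application of Lemma~\ref{lem:wl:rub} via the observation that $d_N(\H')\le d_N$. You are slightly more explicit than the paper about the coordinate-wise relabeling that identifies $\H'$ with a subclass of $[p]^{n+1}$, which the paper takes for granted; this is a harmless bookkeeping step, so the two arguments are the same.
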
  

The proposition is not the end of the story.
The menu $\mu$ generated by the first component
allows to make good list-predictions,
but it has no chance to capture all of the unknown distribution $\D$.
It is extremely unlikely that $\D$ is realizable by $\mu$.
The removal of this realizability assumption is postponed to the next section.

The high-level idea behind the proof 
of the proposition is to use the $p$-menu to reduce 
the label-set from the unbounded $\Y$
to a label-set of size $p$. 
This is beneficial because PAC learning with $p$ many labels
can be achieved with sample complexity
order $d_N\log(p)$. In fact, any proper ERM algorithm with this sample complexity is a PAC learner.

Trying to implement this strategy raises a subtle challenge.
The task of learning a distribution realizable by a class $\H$ and a menu $\mu$ 
\emph{cannot} be reduced to PAC learning the sub-class $\H|_\mu$ of $\H$ 
that is consistent with the menu.
The sub-class $\H|_\mu$ could even be empty;
see Section~\ref{subsec:poverview} for a simple example.
    
The solution is based on the unique locality feature of
the \oig algorithm.
To make a prediction on an unseen data point $x \in \X$,
the \oig algorithm
just uses $x$ and the unlabelled part of the sample~$S$.
This local view of $\X$ suffices to make a prediction.

\begin{proof}[Proof of Proposition~\ref{prop:weak-learner}] 
Let $\D$ be a distribution that is realizable by $\H$ and $\mu$.
By the leave-one-out symmetrization argument (Fact~\ref{fact:sym}),
\begin{align*}
\Pr_{(S, (x,y))\sim \D^{n+1} }\bigl[h_{S}(x) \neq y \bigr] 
= \Pr_{(S',I) \sim \D^{n+1} \times \textit{U}(n+1)}\bigl[ h_{S'_{-I}}(x'_I)  \neq y'_I\bigr] ,
\end{align*}
where $h_S = \A_{\H,\mu}(S)$.
It therefore suffices to show that for every sample $S'$ 
that is realizable by $\H$ and $\mu$,
\begin{equation} \label{eq:wl_}
    \Pr_{I \sim \textit{U}(n+1)}\bigl[  h_{S'_{-I}}(x'_I)  \neq y'_I \bigr]
\le \frac{20d_N\log(p)}{n}.
\end{equation}
Fix $S'$ that is realizable by $\H$ and $\mu$ for the rest of the proof.
The class {$\H' = \H|_{(x'_1,\ldots,x'_{n+1})}$} constructed by the algorithm $\A_{\H,\mu}$
for $S'_{-I}$ and $x'_I$ is the same for all values of~$I$.
The class $\H'$ is realizable by $\mu$.
The Natarajan dimension of $\H'$ is at most that of $\H$.
Denote by $\sigma$ the orientation of $\G(\H')$
that the algorithm chooses.
Lemma~\ref{lem:wl:rub} tells us that the maximum out-degree
of $\sigma$ is at most $20 d_N \log(p)$.
Let~$y'$ denote the vertex in $\G(\H')$ defined by $y'_1,\ldots,y'_{n+1}$.
Finally,
\begin{equation*} \label{eq:bound_img}
     \Pr_{I \sim \textit{U}(n+1)}\bigl[  h_{S'_{-I}}(x'_I)  \neq y'_i\bigr]
    =    \frac{\dout(y' ; \sigma)}{n+1} 
\le \frac{20 d_N \log(p)  }{n+1} .\qedhere
\end{equation*} 
\end{proof}

\subsection{The Algorithm} \label{subsec:cs}
We are ready to describe the full algorithm.
It is convenient to think of it as a sample compression scheme.
    
\begin{definition}[Sample Compression Scheme~\citep*{littlestone1986relating}]\label{def:scs}
Let $\H \subseteq \Y^\X$ and let $r\leq n$ be integers. 
    An $n\to r$ \emph{sample compression scheme} 
    consists of a \emph{reconstruction} function
    \[\rho:(\X\times\Y)^r\to \Y^\X\]
    such that for every $\H$-realizable $S\in (\X\times \Y)^n$,
    there exists $S'\in (\X\times \Y)^r$ whose elements appear in $S$
    such that for every $(x,y)$ in $S$ we have
    $h(x)=y$, where $h=\rho(S')$.
\end{definition}

The main goal of this section is to construct a sample compression scheme for 
classes with finite DS dimension.

\begin{theorem}[DS Classes are Compressible]\label{thm:compressionfinal}
Let $\H \subseteq \Y^\X$ be a class with DS dimension $d_{DS}<\infty$
and Natarajan dimension $d_N$.
For every integers $n,t>0$,
there exists an $n\to r$ sample compression scheme for $\H$ with
\begin{align*}
r & \leq  \Bigg(\frac{d_{DS}+t+1}{t+1} (d_{DS}+t)  + 
10^3 d_N\log\Bigg({{d_{DS} + t+1}  \choose t+1} \log(2n)\Bigg)
\Bigg) \log(2n).
\end{align*}
\end{theorem}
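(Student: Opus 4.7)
The plan is to build a two-stage sample compression scheme. Stage one uses the list PAC learner $\mathcal{L}_{\H,t}$ of Algorithm~\ref{algo:list-pac} to construct a single menu $\mu^\ast$ on $\X$ that contains the label of every $(x,y)\in S$. Stage two combines $\mu^\ast$ with the menu-based one-inclusion learner $\A_{\H,\mu^\ast}$ of Algorithm~\ref{algo:weak-pac} and a boosting step to produce a hypothesis that correctly classifies every element of $S$. The two stages will contribute the first and second summands in the bound on $r$, respectively.

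For stage one, I would proceed by a greedy covering argument. Set $S^{(0)} := S$, and at step $i\geq 0$ apply Proposition~\ref{prop:weak-list-pac} to the empirical distribution $\D^{(i)} := \mathrm{Unif}(S^{(i)})$, which is realizable by $\H$ because $S^{(i)}\subseteq S$ is. The list-learner guarantee reads
\[\Pr_{(T,(x,y)) \sim (\D^{(i)})^{d_{DS}+t+1}}\bigl[y\in \mu_T(x)\bigr] \;\geq\; \alpha \;:=\; \frac{t+1}{d_{DS}+t+1},\]
so by averaging there exists a deterministic $T^{(i)}\in (S^{(i)})^{d_{DS}+t}$ with $\Pr_{(x,y)\sim\D^{(i)}}[y\in\mu_{T^{(i)}}(x)]\geq \alpha$. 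Removing the points covered by $\mu_{T^{(i)}}$ from $S^{(i)}$ yields $S^{(i+1)}$ with $|S^{(i+1)}|\leq(1-\alpha)|S^{(i)}|$. After $k_1 \leq \frac{d_{DS}+t+1}{t+1}\log(2n)$ rounds one has $|S^{(k_1)}|<1$, and the aggregated menu $\mu^\ast(x) := \bigcup_i \mu_{T^{(i)}}(x)$ covers every $(x,y)\in S$. The compression cost of stage one is $k_1(d_{DS}+t)$, which equals the first term of the bound, and $\mu^\ast$ has list size at most $p' \leq k_1\binom{d_{DS}+t}{t} = \binom{d_{DS}+t+1}{t+1}\log(2n)$.

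For stage two, the sample $S$ is realizable by both $\H$ and $\mu^\ast$ by construction, so Proposition~\ref{prop:weak-learner} applies to $\A_{\H,\mu^\ast}$. Running it on a uniformly random subsample of $S$ of size $m = \Theta(d_N\log p')$ yields a hypothesis with empirical error at most, say, $1/4$ on $\mathrm{Unif}(S)$. Boosting this weak learner in its sample-compression form (in the spirit of~\citep{schapire2012boosting,david2016supervised}) for $O(\log n)$ rounds drives the empirical error to zero on $S$, using a total of $O(d_N\log(p')\log n)$ examples from $S$, which is bounded by the second term. The final reconstruction function parses the compressed subsample into the $k_1$ stage-one blocks (from which it rebuilds $\mu^\ast$) and the $O(\log n)$ stage-two blocks (from which it rebuilds the boosted hypothesis), and returns the boosted hypothesis.

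The main obstacle I expect lies in stage one: turning the probabilistic guarantee of Proposition~\ref{prop:weak-list-pac} on the empirical distribution $\D^{(i)}$ into the existence of a single deterministic subsample $T^{(i)}\subseteq S^{(i)}$ with coverage at least $\alpha$ requires a careful pigeonhole step, and one must verify that the multiplicative recurrence $|S^{(i+1)}|\leq (1-\alpha)|S^{(i)}|$ really produces the advertised $\log(2n)$ factor. A secondary, more routine subtlety is that the menu-based weak learner $\A_{\H,\mu^\ast}$ must predict on a test point $x$ using only the unlabelled portion of its subsample, the point $x$ itself, and the menu $\mu^\ast$; this locality is inherited from the one-inclusion graph algorithm and is precisely what allows the boosted hypothesis to be reconstructable from the compressed subsample together with $\mu^\ast$.
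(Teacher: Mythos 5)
Your proposal is correct and follows essentially the same two-stage route as the paper: Lemma~\ref{lem:listcompression} (greedy covering with the list learner $\mathcal{L}_{\H,t}$ applied to uniform empirical distributions, followed by removal and iteration until $(1-\alpha)^{k_1}n<1$) and Lemma~\ref{lem:compressionfromdict} (using $\A_{\H,\mu^\ast}$ as a weak learner and combining $O(\log n)$ invocations). The only difference is cosmetic: where you summarize stage two as ``boosting in sample-compression form,'' the paper instantiates it concretely via von Neumann's minimax theorem to find a distribution $\mathcal{P}$ over size-$m$ subsamples with pointwise error $\le 1/4$, then a Chernoff bound plus union bound over $S$ to extract $\ell=O(\log n)$ i.i.d.\ draws whose \emph{plurality} vote is correct on all of $S$ — which is exactly the \cite{david2016supervised} mechanism you cite.
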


When $d_{DS}$ and $t$ are thought of as constants and $n$ as tending
to infinity,
the value of $r$ becomes
$r \leq \tilde O ( (\tfrac{(d_{DS}+t)^2}{t} + d_{DS} t ) \log n)$.
For $t = \lceil d_{DS}^{1/2} \rceil$, this becomes 
$$r \leq \tilde O ( d_{DS}^{3/2} \log n).$$
As we explain in the next section,
a standard 
``compression implies generalization'' 
argument implies that $\tilde O( d_{DS}^{3/2})$ samples are sufficient for PAC learning.

The sample compression scheme consists of
two components.
The first component 
provides list-learning guarantees.
It produces a good menu 
that is passed to the second component. 
The second component is a regular sample compression scheme
but only for menu-realizable samples.
To accommodate this mechanism,
we introduce two variants of sample compression schemes.

\begin{definition}[List Sample Compression Scheme]\label{def:lscs}
    An $n\to r$ \emph{list sample compression scheme
    with menu size~$p$}
consists of {a \emph{reconstruction function} 
    \[\rho:(\X\times\Y)^r\to \{Y \subseteq \Y : |Y| \leq p\}^\X\] 
    such} that for every $\H$-realizable $S\in (\X\times \Y)^n$,    
    there exists $S'\in (\X\times \Y)^r$ 
    whose elements appear in $S$
   such that for every $(x,y)$ in $S$ we have
    $y \in \mu(x)$, where $\mu=\rho(S')$.
\end{definition}

\begin{definition}[Sample Compression Scheme for a Menu]\label{def:lscs}
    An $n\to r$ \emph{sample compression scheme} for a class $\H$
    and a menu $\mu$ 
consists of a \emph{reconstruction function} 
    \[\rho:(\X\times\Y)^r\to \Y^\X\] 
    such that for every $S\in (\X\times \Y)^n$
    that is realizable by both $\H$ and $\mu$,    
    there exists $S'\in (\X\times \Y)^r$ 
    whose elements appear in $S$
   such that for every $(x,y)$ in $S$ we have
    $h(x) = y$, where $h=\rho(S')$.
\end{definition}

The following two lemmas summarize the two components
of the construction.

\begin{lemma}[List Sample Compression Scheme]\label{lem:listcompression}
Let $\H \subseteq \Y^\X$ be a class with DS dimension $d_{DS}<\infty$.
For every integers $n,t>0$, there exists an $n\to r_1$ list sample compression scheme for $\H$ with menu size $p$, where
$$r_1 \leq \frac{d_{DS}+t+1}{t+1} (d_{DS}+t) \log(2n)$$
and
$$p \leq  {{d_{DS} + t +1}  \choose t+1} \log(2n).$$
\end{lemma}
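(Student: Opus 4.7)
The plan is to amplify the weak list PAC learner $\mathcal{L}_{\H,t}$ from Proposition~\ref{prop:weak-list-pac} into a deterministic list compression scheme via a greedy covering / multiplicative-weights-style argument. Write $d = d_{DS}$, $n' = d + t$, $\alpha = (t+1)/(d+t+1)$, and $p' = \binom{d+t}{t}$, so Proposition~\ref{prop:weak-list-pac} gives a length-$n'$ list learner with menu size $p'$ and success probability $\alpha$.

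The first step is to convert the probabilistic guarantee into a combinatorial ``weak covering'' statement: for every $\H$-realizable sample $S \in (\X\times\Y)^n$ and every probability weighting $D$ on the entries of $S$, there exists a length-$n'$ subtuple $S_0$ drawn from entries of $S$ such that $\mu_{S_0} = \mathcal{L}_{\H,t}(S_0)$ covers a $D$-mass of at least $\alpha$. This is obtained by averaging: any $h \in \H$ that realizes $S$ also realizes the reweighted distribution $D$, so $D$ is $\H$-realizable, and Proposition~\ref{prop:weak-list-pac} yields
\[
\E_{S_0 \sim D^{n'}}\bigl[\Pr_{(x,y)\sim D}[y \in \mu_{S_0}(x)]\bigr] \ge \alpha,
\]
from which some specific $S_0$ in the support of $D^{n'}$ achieves coverage at least $\alpha$.

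Next I would iterate this greedily on the fixed input $S$. Set $U_0 = S$ and, at round $i \geq 1$, let $D_i$ be the uniform distribution on the still-uncovered entries $U_{i-1}$ (still $\H$-realizable since $U_{i-1} \subseteq S$). Apply the weak covering statement to produce $S_i \in (\X\times\Y)^{n'}$ whose entries lie in $U_{i-1}$ and for which $\mu_{S_i}$ covers at least an $\alpha$-fraction of $U_{i-1}$; set $U_i = \{(x,y) \in U_{i-1} : y \notin \mu_{S_i}(x)\}$, so $|U_i| \leq (1-\alpha)|U_{i-1}|$. Choosing $N = \lfloor \log(2n)/\alpha \rfloor$ yields $n(1-\alpha)^N < 1$, hence $U_N = \emptyset$ and every entry of $S$ is covered by some $\mu_{S_i}$. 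The compression scheme then outputs $S' = S_1 \oplus \cdots \oplus S_N$, of length
\[
r_1 = N \cdot n' \leq \frac{d+t+1}{t+1}(d+t)\log(2n),
\]
and the reconstruction $\rho$ splits any input of length $N \cdot n'$ into $N$ blocks of size $n'$, applies $\mathcal{L}_{\H,t}$ to each to obtain menus $\mu_1,\ldots,\mu_N$, and returns $\mu(x) = \bigcup_{i=1}^N \mu_i(x)$. The menu size is at most
\[
N \cdot p' \leq \frac{d+t+1}{t+1}\binom{d+t}{t}\log(2n) = \binom{d+t+1}{t+1}\log(2n) = p,
\]
using the identity $\binom{d+t+1}{t+1} = \tfrac{d+t+1}{t+1}\binom{d+t}{t}$, and correctness is immediate from the greedy coverage property.

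The main obstacle is the first step: passing from the probabilistic PAC-style guarantee in Proposition~\ref{prop:weak-list-pac} to a worst-case statement against an arbitrary reweighting of a fixed $\H$-realizable sample. The key observation is that the only hypothesis on the target distribution in Proposition~\ref{prop:weak-list-pac} is $\H$-realizability, and this property is preserved when passing to any probability measure supported on the entries of an $\H$-realizable sample, so the reweighting is compatible with the proposition. Once the weak covering statement is in place, the rest is a standard weak-to-strong amplification, adapted here to produce a unioned menu rather than the more familiar majority vote.
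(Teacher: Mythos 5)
Your proof is correct and follows essentially the same approach as the paper: convert the weak list PAC guarantee of Proposition~\ref{prop:weak-list-pac} into a combinatorial covering statement for finitely supported $\H$-realizable distributions via Fubini/averaging, then run the greedy set-cover loop removing the covered fraction at each round, and union the resulting menus as the reconstruction. The paper states the covering step only for the uniform distribution over the remaining entries rather than for an arbitrary reweighting $D$, but the content, constants, and the final accounting (including the identity $\binom{d+t+1}{t+1}=\tfrac{d+t+1}{t+1}\binom{d+t}{t}$) are the same.
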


\begin{lemma}[Sample Compression Given a Menu]\label{lem:compressionfromdict}
Let $\H \subseteq \Y^\X$ be a class with Natarajan dimension $d_N<\infty$
and let $\mu$ be a $p$-menu.
For every integer $n>0$, there exists an $n\to r_2$ sample compression scheme 
for $\H$ and $\mu$ with 
$$r_2 \leq  10^3 d_N\log(p)\log (2n).$$ 
\end{lemma}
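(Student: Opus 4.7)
The plan is to prove Lemma~\ref{lem:compressionfromdict} by a minimax argument that turns the weak PAC learner of Proposition~\ref{prop:weak-learner} into a sample compression scheme, in the style of~\citet*{david2016supervised}.

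First I would fix the weak learner at an appropriate scale. Set $n_0 = \lceil 120\, d_N\log p\rceil$, so that Proposition~\ref{prop:weak-learner} yields
\[
\E_{S_0\sim\D^{n_0}}\!\bigl[\Pr_{(x,y)\sim\D}[\A_{\H,\mu}(S_0)(x)\neq y]\bigr]\leq \tfrac{1}{6}
\]
for every distribution $\D$ that is realizable by both $\H$ and $\mu$. Next, fix an $(\H,\mu)$-realizable input sample $S = ((x_1,y_1),\ldots,(x_n,y_n))$ of size $n$. Consider the zero-sum game whose Row strategies are multisamples of size $n_0$ drawn with replacement from $S$, whose Column strategies are indices $i \in [n]$, and whose payoff to Row is $\mathbbm{1}[\A_{\H,\mu}(S_0)(x_i) = y_i]$. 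Any mixed Column strategy is a distribution supported on $S$ and is therefore $(\H,\mu)$-realizable, so the previous step provides Row with a mixed response (namely $\D^{n_0}$) achieving expected payoff at least $5/6$. By von Neumann's minimax theorem there exists a distribution $\pi^*$ over multisamples with
\[
\Pr_{S_0\sim\pi^*}\bigl[\A_{\H,\mu}(S_0)(x_i)=y_i\bigr]\;\geq\;\tfrac{5}{6}\qquad\text{for every }i\in[n].
\]

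Now draw $T = \lceil C\log n\rceil$ independent samples $S_1,\ldots,S_T\sim\pi^*$, for an absolute constant $C$. Hoeffding's inequality and a union bound over $i\in[n]$ imply that, for $C$ sufficiently large, with positive probability the majority of $h_1(x_i),\ldots,h_T(x_i)$ equals $y_i$ for every $i\in[n]$ simultaneously, where $h_t := \A_{\H,\mu}(S_t)$. Fix such $S_1,\ldots,S_T$, let the compression be the concatenation $S' = S_1\,S_2\cdots S_T$ of length $T\cdot n_0 = O(d_N\log p\cdot\log n)$, and define the reconstruction by $\rho(S')(x) := \mathrm{maj}(h_1(x),\ldots,h_T(x))$ for every $x\in\X$. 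The function $\rho(S')$ depends only on $S'$ and classifies every point of $S$ correctly. Tracking constants through the Markov bound used to derive the weak learner, the Hoeffding bound used for boosting, and Proposition~\ref{prop:weak-learner} itself yields $r_2 \leq 10^3 d_N \log(p)\log(2n)$.

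The main obstacle is the multiclass aggregation step. In the binary case plain majority voting works trivially; with $p$ possible labels one must rule out that the votes for incorrect labels split in a way that produces an ambiguous or incorrect plurality. This is handled by pushing the weak error strictly below $1/2$ (we achieve $1/6$ via the choice of $n_0$), so the correct label receives strictly more than half the votes and is automatically the unique majority, regardless of how the minority of incorrect votes is distributed over the other $p-1$ labels. A secondary subtlety is that the minimax distribution $\pi^*$ lives over multisamples with replacement, so the compression must permit repetitions; this is already the standard convention in Definition~\ref{def:scs}. Aside from these two points the argument is a direct multiclass instantiation of the classical boosting-via-compression template.
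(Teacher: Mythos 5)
Your argument is essentially the same as the paper's: both fix a weak learner at a sample size where Proposition~\ref{prop:weak-learner} guarantees error strictly below $\tfrac{1}{2}$, apply von Neumann's minimax theorem to a two-player game (examples vs.\ subsamples) to obtain a distribution over subsamples that predicts every example correctly with probability bounded away from $\tfrac{1}{2}$, and then draw $O(\log n)$ iid subsamples and take a plurality/majority vote, amplified via Hoeffding plus a union bound over the $n$ examples. The paper uses error threshold $\tfrac{1}{4}$ with $m = \lceil 100\,d_N\log p\rceil$ and $\ell = \lfloor 8\log(2n)\rfloor$ while you use $\tfrac{1}{6}$, but this is a cosmetic difference in constants; the structure, the use of the menu to bound the out-degree, and the compression-by-concatenation step all match.
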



The two lemmas, which are proved below, complete the proof
of Theorem~\ref{thm:compressionfinal}.

\begin{proof}[Proof of Theorem~\ref{thm:compressionfinal}]
The high-level outline is presented in Figure~\ref{fig:outline}.
Let $S$ be an $\H$-realizable sample of size $n$.
Lemma~\ref{lem:listcompression} 
tells us that there is a reconstruction $\rho_1$ that produces
$p$-menus,
and a sequence $S'$ of $r_1$ examples from $S$ such that 
$S$ is $\mu$-realizable where $\mu = \rho_1(S')$.
Lemma~\ref{lem:compressionfromdict} applied to $\H$ and $\mu$
implies that there is a reconstruction $\rho_2$, and
a sequence $S''$ of $r_2$ examples from $S$ such that
$\rho_2(S'')$ correctly classifies the entire sample~$S$.
The composition of the two schemes
is an $n\to r_1+r_2$ sample compression scheme for~$\H$.
\end{proof}

\begin{figure}

\noindent\makebox[\linewidth]{\rule{\textwidth}{0.8pt}}

{\bf High-level Outline of the Algorithm}

\medskip

realizable case over $\H$ with $d = d_{DS}(\H)< \infty$

\noindent\makebox[\linewidth]{\rule{\textwidth}{0.4pt}}

\begin{flushleft}
{\bf Input:} A sample $S \in (\X \times \Y)^n$.
\end{flushleft}

\begin{algorithmic}[1]
\STATE Find $m' \approx \sqrt{d} \log n$ subsamples $S'_1,\ldots,S'_{m'}$
of $S$, each of size $d$, so that 
the menu~$\mu$ of size $p \approx 2^{\sqrt{d}} \cdot m'$
they define realizes $S$.

\STATE Using the menu $\mu$, find $m'' \approx \log n$ subsamples $S''_1,\ldots,S''_{m''}$
of $S$, each of size $\approx d^{3/2}$, so that 
the majority vote $h$ over the $m''$ functions they define
correctly classifies $S$.
\end{algorithmic}

\begin{flushleft}
{\bf Output:} The function $h:\X \to \Y$.
\end{flushleft}

\noindent\makebox[\linewidth]{\rule{\textwidth}{0.4pt}}

\medskip

\begin{tikzpicture}[node distance=2cm]
\tikzstyle{startstop} = [rectangle, rounded corners, minimum width=2.5cm, minimum height=1cm,text centered, draw=black, fill=cyan!10]
\tikzstyle{arrow} = [thick,->,>=stealth]

\node (s1) [startstop] {input $S$};
\node (sp1) [startstop, below of=s1] {subsample $S'$};
\node (mu1) [startstop, below of=sp1] {menu $\mu$};
\node (spp1) [startstop, right of=mu1,node distance=4cm] {subsample $S''$};
\node (h1) [startstop, below of=spp1] {output $h$};

\draw [arrow] (s1) -- (sp1);
\draw [arrow] (s1)  -| (spp1);
\draw [arrow] (sp1) -- node[anchor=west] {$\rho_1$} (mu1) ;
\draw [arrow] (mu1) -- (spp1);
\draw [arrow] (spp1) -- node[anchor=west] {$\rho_2$} (h1) ;
\draw [arrow] (mu1)  |- (h1);

\end{tikzpicture}
\caption{The outline of the algorithm.
The $m'$ subsamples in step one are found using the
\oig algorithm of the class $\H$.
The $m''$ subsamples in step two are found using the
\oig algorithm of the class $\H$ and the menu $\mu$.}
\label{fig:outline}
\end{figure}
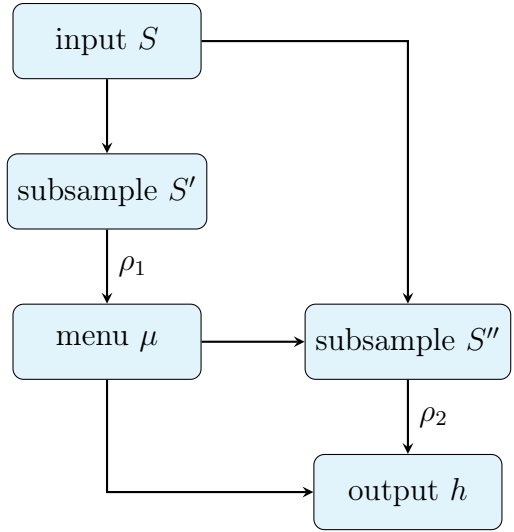

\subsubsection*{The List Compression Scheme}

\begin{proof}[Proof of Lemma~\ref{lem:listcompression}]
We begin by describing the reconstruction function $\rho_1$.
Let $\ell=\lfloor \frac{d_{DS}+t+1}{t+1} \log(2n)\rfloor$.
Given an $\H$-realizable sample $S'$ of size $r_1 = (d_{DS}+t) \ell$,
    partition it into $\ell$
    contiguous subsequences~$S'_1,\ldots, S'_\ell$,
    each of {size $d_{DS}+t$.} Define $\mu = \rho(S')$ as
    \[ \mu(x)=\bigcup_{j=1}^\ell \mu_j(x),\]
    where $\mu_j(x)=\mathcal{L}_{\H}(S'_j)$ is the ${d_{DS}+t \choose t}$-menu outputted
    by Algorithm~\ref{algo:list-pac} on input sample~$S'_j$.
    The menu $\mu$ has list-size $p\leq  {d_{DS}+t+1 \choose t+1} \log(2n)$.

It remains to show that there is $S'$ so that
the reconstruction on $S'$ achieves its goal.
The sample $S'$ is chosen via the probabilistic method.
%
%
Let $U$ denote the uniform distribution over the $n$ examples
in $S$ and let $\alpha=\frac{t+1}{d_{DS}+t+1}$. 
By Proposition~\ref{prop:weak-list-pac} applied to the distribution $U$, 
for a random sample~$S'_1 \sim U^{d_{DS}+t}$, 
in expectation at least~$\alpha n$ of the examples $(x_i,y_i)$ in $S$ satisfy
    \[y_i\in \mu_1(x_i),\]
    where $\mu_1=\mathcal{L}_{\H,t}(S'_1)$.
    In particular, there exists $S'_1$ for which the above holds.
Remove from $S$ all examples $(x_i,y_i)$ such that $y_i\in \mu_1(x_i)$ and repeat the same reasoning on the remaining sample.
    This way at each step $j$ we find a sample $S'_j$ and a menu $\mu_j=\mathcal{L}_{\H,t}(S'_j)$ 
    that covers at least an $\alpha$-fraction of the remaining examples. 
    After $\ell$ steps,
all examples in $S$ are covered because $(1-\alpha)^{\ell}n < 1$.
    Setting $S'$ to be the concatenation of $S'_1,\ldots ,S'_\ell$ finishes the proof.
    \end{proof}

\subsubsection*{Learning From a Menu}
\begin{proof}[Proof of Lemma~\ref{lem:compressionfromdict}]
We begin by describing the reconstruction function $\rho_2$.
Let $\ell= \lfloor 8\log(2n) \rfloor$ and
$m=\lceil 100 d_N\log (p) \rceil$.
    Given a sequence of $r_2= m \ell$ examples $S'$ that are realizable by $\H$ and $\mu$,
    partition it into $\ell$ 
    contiguous sub-sequences~$S'_1,\ldots, S'_\ell$,
    each of size $m$. 
    Define $h(x) = \rho_2(S')$ as
    \[ h(x)=\mathsf{plurality}\bigl(h_1(x),\ldots, h_\ell(x)\bigr)\]
    where $h_j=\A_{\H,\mu}(S'_j)$ is the hypothesis outputted
    by Algorithm~\ref{algo:weak-pac} on input sample $S'_j$,
    and $\mathsf{plurality}(y_1,\ldots, y_\ell)$ is the label that appears
    most frequently (breaking ties arbitrarily).

It remains to explain how to choose $S'$ 
from a sample $S$ that is realizable by 
    $\H$ and $\mu$.
    The existence of $S'$ follows from the probabilistic method.
This time we also rely on von Neumann's minimax theorem~\citep*{neumann1928theorie}. 

We first claim that there exists a distribution $\mathcal{P}$ over sequences $T$ of size $m$ with elements from~$S$
    such that for every example $(x,y)$ in $S$,
    \begin{equation}\label{eq:minimax}
    \Pr_{T\sim \mathcal{P}}\bigl[h_{T}(x)\neq y\bigr]\leq \frac{1}{4},
    \end{equation}
    where $h_T = \A_{\H,\mu}(T)$.
 Consider a zero-sum game between two players Minnie and Max.
    Max's pure strategies are examples $(x,y)$ in $S$.
    Minnie's pure strategies are sequences~$T$ of $m$ elements 
    from $S$.
    The payoff matrix $L$ is defined by
    $L_{(x,y),T} = 1_{h_{T}(x)\neq y}$.
    Let $\mathcal{Q}$ be a mixed strategy of Max.
    Namely, $\mathcal{Q}$ is a distribution over examples in $S$.
The distribution $\mathcal{Q}$ is realizable by both $\H$ and $\mu$.
Proposition~\ref{prop:weak-learner} implies
    \[\Pr_{(T,(x,y))\sim \mathcal{Q}^{m+1}}\bigl[h_{T}(x) \neq y \bigr]\leq \frac{20 d_N \log (p)}{m} \leq \frac{1}{4}.\]
    In words, for every mixed strategy of Max,
    there is a strategy for Minnie with cost at most $\tfrac{1}{4}$.
    By the minimax theorem,
    there is a mixed strategy for Minnie that
    guarantees cost at most $\tfrac{1}{4}$ for
    every strategy of Max.
    This mixed strategy is the required $\mathcal{P}$.
    
%
%
    
 The existence of $S'$ can finally be proved.
 Let $S'_1,\ldots ,S'_\ell$ be i.i.d.\ samples from~$\mathcal{P}$.
Standard concentration of measure implies that for each $(x,y)\in S$,
    \[\Pr \Bigl[\frac{1}{\ell}\sum_{j=1}^\ell1_{h_{S'_j}(x)\neq y} \geq \frac{1}{2}  \Bigr] \leq \exp\Bigl(- \frac{\ell}{8}\Bigr) <\frac{1}{n}.\]
The union bound implies that with positive probability,
for every $(x,y)$ in $S$ we have $\frac{1}{\ell}\sum_{j=1}^\ell1_{h_{S'_j}(x)\neq y} < \frac{1}{2}$.
In particular, there exist $S'_1,\ldots, S'_\ell$
such that the plurality vote
over the $h_{S'_1},\ldots,h_{S'_\ell}$ correctly classifies all of $S$.
The concatenation is the required $S'$.
        \end{proof}

\subsection{Wrapping-up}\label{sec:wrap}
\begin{proof}[Proof of Theorem \ref{thm:ub}]
Theorem~\ref{thm:compressionfinal} with $t = \lceil d^{1/2} \rceil$
states the existence of an $n\to r$ sample compression scheme for $\H$ 
where $r \leq O( d^{3/2} \log (n))$.
%
The analysis of the sample compression scheme 
relies on previous results on multiclass compression.
Theorems 3.1 and 3.3 in \citep*{david2016supervised} imply that if $\H$
    admits an $n\to r$ sample compression scheme, then the same compression scheme is a learning rule $A^{real}$ with the following guarantee.
    For every $\H$-realizable distribution $\D$, every $\delta > 0$
    and every integer $n>0$,    
    with probability at least $1-\delta$ over $S\sim \D^n$, 
    the output hypothesis $h=A^{real}(S)$ satisfies
    \begin{equation}\label{eq:realizablerate}
      \Pr_{(x,y)\sim \D}\bigl[h(x)\neq y\bigr] \leq O\Biggl(\frac{r \log\bigl(\frac{n}{r}\bigr) + \log(1/\delta)}{n}\Biggr).  
    \end{equation}
    In the agnostic case, they prove the existence of a related learning rule $A^{agn}$
    with the following guarantee.
    For every distribution $\D$, every $\delta > 0$,
    and every integer $n>0$,
    with probability at least $1-\delta$ over sampling $S\sim \D^n$, 
    the output hypothesis $h=A^{agn}(S)$ satisfies
    \begin{equation}\label{eq:agnosticrate}
      \Pr_{(x,y)\sim \D}\bigl[h(x)\neq y\bigr] \leq L_{\D}(\H)  + O\Biggl(\sqrt{\frac{r\log\bigl(\frac{n}{r}\bigr) + \log(1/\delta)}{n}}\Biggr).  \qedhere
    \end{equation}
\end{proof}

\section{Learnability $\not\equiv$ Finite Natarajan Dimension}\label{sec:lb}

The goal of this section is to prove that the Natarajan dimension does not characterize PAC learnability (Theorems~\ref{atm:lb} and~\ref{thm:lb}).
That is, to construct a concept class $\H$
that has Natarajan dimension $1$
but DS dimension $\infty$.

\subsection{Outline}

The class $\H$ lives between two opposing conditions.
On one hand,
there should be no non-trivial copy
of the Boolean cube inside $\H$.
On the other hand, 
it should contain pseudo-cubes of arbitrary large dimension.
{Pseudo-cubes of large dimension 
imply that learning $\H$ is difficult.
No copies of the Boolean cube
indicates that ``locally $\H$ looks like it is easy to learn''.
The barrier \an{to} learning $\H$ is not local
but global.
An analogy is a graph of large girth and large chromatic number;
locally the graph is $2$-colorable,
and the coloring-difficulty stems from a global obstacle.}

Our goal is, essentially, to find pseudo-cubes of arbitrary large dimension
that do not contain any non-trivial copy of the Boolean cube.
The proof begins by translating the problem from the realm of concept classes
to the realm of simplicial complexes
(see Section~\ref{sec:PCvsSC}).
We show that any concept class can be identified with a colorful simplicial 
complex (and vice versa).

What about the pseudo-cube condition and the Natarajan dimension
in the realm of simplicial complexes?
The pseudo-cube conditions turns out to be quite natural; it is reminiscent \an{of} the notion
of a {\em pseudo-manifold}.
The Natarajan dimension $1$ condition is almost identical to 
the {\em flag-no-square} condition;
this condition was studied in many works
as a local combinatorial criteria for hyperbolicity.

As the abstract of~\citep*{januszkiewicz2003hyperbolic}
indicates,
simplicial complexes in the spirit we need were 
conjectured not to exist (by Moussong), or at least to require difficult number theory
(by Gromov).
\an{However,}
\cite*{januszkiewicz2003hyperbolic} built 
a simplicial complex that exactly meets our needs
(see Section~\ref{sec:polish}).

The difficulty of the construction 
explains the fact that Natarajan's question was open for so many years.
For example, for $d=2$,
the smallest concept class with Natarajan dimension $1$
we know of has size $6$; see Figure~\ref{fig:2dim}.
For $d=3$, the size grows to $54$; see Figure~\ref{fig:3dimpc}.
For $d=4$, the size jumps to $118,098$.
{This large complex is not the complex suggested
in~\citep*{januszkiewicz2003hyperbolic}.
The high-level structure of the construction is similar,
but the complex we found is smaller.
We found the construction and verified it with a computer
(using~\citep{gap4}).}
See Section~9 of~\citep*{januszkiewicz2003hyperbolic}
for more details on the ``complexity'' of their construction.

\subsection{Pseudo-cubes and Simplicial Complexes}\label{sec:PCvsSC}

We begin with a brief introduction to simplicial complexes.
Simplicial complexes are combinatorial abstractions
of triangulations of topological spaces.
A family $C$ of finite subsets of a set $V$ is called a simplicial complex if it is downward closed.
That is, for every $f\in C$, if $g\subset f$ then $g\in C$.
A member of $C$ is called a simplex or a face. 
The {\it dimension} of a face $f\in C$ is defined to be $\dim(f)=\lvert f\rvert -1$ and the dimension of the complex $C$ is $\dim(C)=\max_{f\in C}\dim(f)$.
A simplicial complex is called {\it pure} if all of its maximal faces have the same dimension.
The {\it $1$-skeleton}
of a simplicial complex $C$ is a graph
whose vertices are the elements of $V$ and 
whose edges are all the $1$-dimensional faces of $C$.
Every face in $C$ thus corresponds to a clique in its $1$-skeleton. 

We also need our complexes to be properly colored.
A {\it proper coloring} of a complex $C$ 
is a proper coloring of the 1-skeleton of $C$ with {$\dim(C)+1$ colors. 
That is, it is an assignment $r:V\to [\dim(C)+1]$} such that $r(u)\neq r(v)$ for every distinct $u,v$ so that $\{u,v\}\in C$. 

Our first goal in this subsection is to express the notion ``pseudo-cube'' in the language of simplicial complexes.
This is captured by the following definitions.
We say that a complex $C$ satisfies {\em replacement}
if for every simplex $f\in C$ and for every vertex $v\in f$ there exists a vertex $u\neq v$ such that $(f\setminus\{v\})\cup\{u\}\in C$.

\begin{definition}[Good Complex]
A simplicial complex $C$ is {\it good} if it is finite, pure, 
has a proper coloring, and satisfies replacement.
\end{definition}

The following proposition summarizes the equivalence between
pseudo-cubes and good simplicial complexes.
Figure~\ref{fig:3dimpc} may help in digesting this equivalence.

\begin{prop}[Concept Classes $\equiv$ Good Complexes]
For every $d$-dimensional good complex $C$ and a proper coloring $r$ of $C$, there is a $(d+1)$-dimensional pseudo-cube $B=B(C,r)$.
Conversely, for every $d$-dimensional pseudo-cube $B$,
there is a $(d-1)$-dimensional good simplicial complex
$C=C(B)$.
\end{prop}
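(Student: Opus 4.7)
My plan is to construct the two maps explicitly and then verify that each axiom on one side corresponds to an axiom on the other. For the forward direction, given a good $d$-dimensional complex $C$ with proper coloring $r : V \to [d+1]$, I would set $\Y = V$ and define
\[ B(C, r) = \{w_f : f \text{ is a maximal face of } C\} \subseteq V^{d+1}, \]
where $w_f(i)$ is the unique vertex of $f$ satisfying $r(w_f(i)) = i$. This is well-defined because every maximal face is a $(d+1)$-clique in the $1$-skeleton and a proper $(d+1)$-coloring of a $(d+1)$-clique must use each color exactly once. Finiteness and non-emptiness of $B$ follow from purity and finiteness of $C$. The pseudo-cube property is where replacement enters: given $w_f$ and a direction $i \in [d+1]$, let $v = w_f(i)$; replacement produces $u \neq v$ with $g := (f \setminus \{v\}) \cup \{u\} \in C$. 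Since $|g| = d+1$, purity makes $g$ a maximal face, and since its vertices are properly colored, the one color missing on $f \setminus \{v\}$ is $r(v) = i$, which forces $r(u) = i$. Hence $w_g$ agrees with $w_f$ on every coordinate except $i$, so $w_g$ is the required $i$-neighbor.

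For the backward direction, given a $d$-dimensional pseudo-cube $B \subseteq \Y^d$, I would take the vertex set
\[ V = \{(i, y) \in [d] \times \Y : w(i) = y \text{ for some } w \in B\}, \]
assign to each word $w \in B$ the simplex $f_w = \{(i, w(i)) : i \in [d]\} \subseteq V$, and let $C(B)$ be the downward closure of $\{f_w : w \in B\}$. The coloring $r : (i,y) \mapsto i$ is proper because any edge in $C(B)$ sits inside some $f_w$, and two vertices of $f_w$ with distinct first coordinates automatically receive distinct colors. Because the $f_w$'s are the maximal faces, $C(B)$ is pure of dimension $d-1$; finiteness follows from finiteness of $B$. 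Replacement transcribes the pseudo-cube axiom directly: given $f \in C(B)$ and $v = (i, y) \in f$, pick any $w \in B$ with $f \subseteq f_w$, apply the pseudo-cube property of $B$ at $w$ in direction $i$ to obtain $w'$ with $w'(i) \neq w(i)$ and $w'(j) = w(j)$ for $j \neq i$, and then $u := (i, w'(i)) \neq v$ satisfies $(f \setminus \{v\}) \cup \{u\} \subseteq f_{w'} \in C(B)$.

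The main subtlety, which I expect to be the only non-routine point, is the interplay of purity and proper coloring with replacement. In the forward direction, the crux is that a replacement of a color-$i$ vertex must again produce a color-$i$ vertex: purity keeps the face size at $d+1$, and proper coloring then forces the new vertex to carry the one color missing from $f \setminus \{v\}$, namely $i$. Once this is isolated, both directions are essentially bookkeeping. A minor design choice worth flagging is why $V = [d] \times \Y$ rather than $V = \Y$ in the backward direction: the entries of a general pseudo-cube word need not be distinct (e.g.\ the standard Boolean cube), so one must tag each alphabet symbol with its coordinate in order to obtain a genuinely simplicial face $f_w$ of size $d$.
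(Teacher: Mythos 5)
Your proposal is correct and takes essentially the same approach as the paper: in both directions you construct exactly the same bijections between maximal faces and words, use the same coloring $r(i,y)=i$, and invoke purity plus proper coloring to force the replacement vertex to carry the missing color. The extra remarks you add (the coordinate-tagging to handle repeated symbols, and the clique/coloring argument for well-definedness) are correct and make explicit points the paper leaves implicit, but they do not change the argument.
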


\begin{remark}
\an{The pseudo-cube $B(C, r)$
and the complex $C(B)$ are explicitly constructed in the proof.}
\end{remark}

\begin{proof}
{\em Good complex $\implies$ pseudo-cube.}
Let $C$ be a good $d$-dimensional complex over $V$ and let $r:V\to [d+1]$ be a proper coloring of $C$.
We define a $(d+1)$-dimensional {pseudo-cube~$B\subseteq V^{d+1}$ as} follows. 
Each face $f\in C$ of maximum size $\lvert f\rvert = d+1$ defines 
the word $(v_1,\ldots,v_{d+1})\in B$
such that for each $i \in [d+1]$,
the vertex $v_i\in f$ is the unique vertex in $f$ with color $r(v_i)=i$.
It remains to verify that $B$ is a pseudo-cube. 
The class $B$ is finite because $C$ is finite.
For every $(v_1,\ldots, v_{d+1})\in B$ and~$i \in [d+1]$,
the following holds.
The set $\{v_1,\ldots v_{d+1}\}$ is a face in $C$.
By the replacement property, there exists $u\neq v_i$ such that $\{v_1,\ldots v_{i-1},u,v_{i+1},\ldots v_{d+1}\}$
is a face in $C$. 
Because $r$ is a proper coloring, it must be that $r(u)=i$.
So, $(v_1,\ldots v_{i-1},u,v_{i+1},\ldots v_{d+1})$ is in $B$ as well.

{\em Pseudo-cube $\implies$ good complex.}
Given a $d$-dimensional pseudo-cube $B\subseteq \Y^d$,
define a simplicial complex $C$ as follows.
The vertex-set of $C$ consists of all $(y,i) \in \Y \times [d]$
so that $y$ appears as the $i$'th letter of some word in $B$.
Each $(y_1,\ldots ,y_d)\in B$ defines 
the maximal face $\{(y_i,i) : i \in [d]\}$ of $C$.
It remains to verify that $C$ is good.
The complex $C$ is finite because $B$ is finite. 
The complex $C$ is pure and all its maximal faces have dimension $d-1$.
Consider the coloring $r:V\to [d]$ defined by $r((y,i))=i$.
It is a proper coloring
because all faces contain at most one vertex of each color.
Because $B$ is a pseudo-cube, for each $i \in [d]$ there exists a word $(y_1,\ldots ,y_{i-1},y'_i,y_{i+1},\ldots, y_d)\in B$
with $y'_i\neq y_i$.
In other words, $C$ satisfies the replacement property.
\end{proof}

%

The remaining of this section is about translating the Natarjan dimension
condition to the language of simplicial complexes.
A {\em square} $v_0\to v_1 \to v_2 \to v_3 \to v_0$ in a simplicial complex $C$
is a sequence of four distinct vertices that form a cycle of length four in the $1$-skeleton of $C$.

\begin{prop}[Natarajan Dimension 
for Colored Complex]\label{prop:goodnat1}
Let $C$ be a $d$-dimensional good complex and let $r$ be a proper coloring of $C$.
Let $B=B(C,r)$ be the pseudo-cube that is defined by $C,r$.
The two following properties are equivalent:
\begin{enumerate}
    \item There exists a square $v_0\to v_1 \to v_2 \to v_3 \to v_0$ in $C$ such that $r(v_0)=r(v_2)$ and $r(v_1)=r(v_3)$.
    \item The Natarajan dimension of $B$ is at least $2$.
\end{enumerate}
\end{prop}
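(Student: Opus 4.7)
The plan is to establish the two implications by directly translating between the language of words in $B$ and the language of simplices and edges in $C$. The central dictionary is this: a pair $(a,b)\in\Y\times\Y$ lies in the projection $B|_{(i,j)}$ for positions $i\neq j$ if and only if $r(a)=i$, $r(b)=j$, and $\{a,b\}$ is an edge of the $1$-skeleton of $C$. The ``only if'' direction is immediate from the definition of $B=B(C,r)$, since each word of $B$ comes from a maximal face and every pair of vertices in a face forms a $1$-face. The ``if'' direction uses purity of $C$: the edge $\{a,b\}$ extends to a maximal face of dimension $d$, whose vertices form a word of $B$ with $i$-th symbol $a$ and $j$-th symbol $b$ (the colors of the other vertices fill the remaining positions).

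For the implication $(1)\Rightarrow(2)$, I would start with a square $v_0\to v_1\to v_2\to v_3\to v_0$ in $C$ with $r(v_0)=r(v_2)=:i$ and $r(v_1)=r(v_3)=:j$. Note that $i\neq j$ because the square has edges $\{v_0,v_1\}$ and $r$ is a proper coloring, and the four vertices are distinct by definition of a square. Applying the dictionary to each of the four edges of the square yields
\[
(v_0,v_1),\ (v_2,v_1),\ (v_2,v_3),\ (v_0,v_3)\in B|_{(i,j)},
\]
which is precisely $\{v_0,v_2\}\times\{v_1,v_3\}$ with $v_0\neq v_2$ and $v_1\neq v_3$. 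Taking $f=(v_0,v_1)$ and $g=(v_2,v_3)$ witnesses $d_N(B)\geq 2$.

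For the converse $(2)\Rightarrow(1)$, I would take a $2$-element N-shattered sequence $(i,j)\in[d+1]^2$ with associated $f,g:[2]\to\Y$ satisfying $f(k)\neq g(k)$ and $\{f(1),g(1)\}\times\{f(2),g(2)\}\subseteq B|_{(i,j)}$. Since all four pairs lie in $B|_{(i,j)}$, the dictionary forces $r(f(1))=r(g(1))=i$, $r(f(2))=r(g(2))=j$, and (again by the dictionary) each of the four pairs $\{f(1),f(2)\}$, $\{g(1),f(2)\}$, $\{g(1),g(2)\}$, $\{f(1),g(2)\}$ is an edge of $C$. Moreover $i\neq j$, since otherwise the shattered alphabet symbols would share a color and could not appear simultaneously as coordinates of a word of $B$ (equivalently, of a maximal face of $C$ with one vertex per color). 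The four vertices $f(1),g(1),f(2),g(2)$ are therefore pairwise distinct, and taking $v_0=f(1),\ v_1=f(2),\ v_2=g(1),\ v_3=g(2)$ gives the desired square with $r(v_0)=r(v_2)=i$ and $r(v_1)=r(v_3)=j$.

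The proof is essentially a bookkeeping exercise once the dictionary between edges-in-$C$ and projections-of-$B$ is set up; the only point requiring care is verifying that the four candidate vertices are distinct and have exactly the alternating color pattern, which follows automatically from $i\neq j$ and the assumption $f(k)\neq g(k)$. I do not anticipate a genuine obstacle here, only the need to be precise about how purity is used to go from an edge of $C$ back to a word of $B$.
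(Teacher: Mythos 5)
Your proof is correct and follows essentially the same route as the paper's: both directions go through the equivalence between a pair $(a,b)$ appearing in $B|_{(i,j)}$ and the set $\{a,b\}$ being an edge of $C$ with $r(a)=i$, $r(b)=j$, using purity to extend an edge to a maximal face and hence a word of $B$. Your packaging of this translation as an explicit "dictionary" claim is a cleaner way to organize the argument, and you are somewhat more careful than the paper in spelling out why $i\neq j$ and why the four vertices are pairwise distinct (both of which are needed for the square/shattering definitions, and which the paper leaves implicit), but there is no substantive difference of approach.
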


\begin{proof}
First, assume that 
there exists a square $v_0\to v_1 \to v_2 \to v_3 \to v_0$ in $C$ such that $i:=r(v_0)=r(v_2)$ and $j:=r(v_1)=r(v_3)$.
Because $r$ is proper, we know $i \neq j$.
Without loss of generality, assume $i < j$. 
It suffices to show that $B\vert_{\{i,j\}}$ contains all 4 patterns
\[(v_0,v_1), (v_2,v_1), (v_0,v_3), (v_2,v_3).\]
This follows because each of the patterns $(v_a, v_b)$ for $a\in\{0,2\}$
and $b\in\{1,3\}$
corresponds to an edge $\{v_a,v_b\}$ in $C$.
This edge is contained in a maximal $d$-dimensional face $f_{a,b}\in C$.
By the definition of $B$, the face $f_{a,b}$ corresponds to a word in $B$ which realizes the pattern $(v_a,v_b)$ on $\{i,j\}$.

In the other direction,
assume that the Natarajan dimension of $B$ is at least $2$.
Let $i<j$ be a pair of coordinates such that 
there exist labels $v_0,v_1,v_2,v_3$
such that the following $4$ patterns belong to $B\vert_{\{i,j\}}$:
\[(v_0,v_1), (v_2,v_1), (v_0,v_3), (v_2,v_3).\]
The definition of $B$ thus implies that $r(v_0)=r(v_2)=i$, that $r(v_1)=r(v_3)=j$,
and that $\{v_0,v_1\},\{v_1,v_2\}, \{v_2,v_3\}, \{v_3,v_0\}\in C$.
So,  $v_0\to v_1\to v_2\to v_3\to v_0$ is the desired square.
\end{proof}

Typically, simplicial complexes are not colorful.
So, it is helpful to have a version of Proposition~\ref{prop:goodnat1}
that does not require a proper coloring.
An {\em empty square} in a simplicial complex $C$ is a square
$v_0\to v_1 \to v_2 \to v_3 \to v_0$
so that both
$\{v_0,v_2\}$ and $\{v_1,v_3\}$ are not edges in the $1$-skeleton of $C$.
In other words, an empty square is a square
so that the induced graph on its vertices
is the same square
(somewhat confusingly this is also known as a {\em full} square in some contexts). 

\begin{corollary}[Natarajan Dimension for Complex]
\label{cor:noSquareisN}
If there are no empty squares in a good simplicial complex $C$ of dimension $d$
then for every proper coloring $r$ of $C$, 
the Natarajan dimension of $B(C, r)$ is at most $1$.
\end{corollary}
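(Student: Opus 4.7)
The plan is to deduce this corollary directly from Proposition~\ref{prop:goodnat1} via its contrapositive. Proposition~\ref{prop:goodnat1} tells us that the Natarajan dimension of $B(C,r)$ is at least $2$ if and only if $C$ contains a square $v_0\to v_1\to v_2\to v_3\to v_0$ whose opposite vertices share colors, namely $r(v_0)=r(v_2)$ and $r(v_1)=r(v_3)$. So to prove the corollary, it suffices to show that any such ``bichromatic'' square must in fact be an empty square; then the hypothesis that $C$ has no empty squares rules out the existence of such a configuration, and the Natarajan dimension of $B(C,r)$ cannot exceed $1$.

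The key observation is that a proper coloring forbids adjacency between vertices of the same color. Concretely, assume for contradiction that there exists a square $v_0\to v_1\to v_2\to v_3\to v_0$ in $C$ with $r(v_0)=r(v_2)$ and $r(v_1)=r(v_3)$. If $\{v_0,v_2\}$ were an edge of the $1$-skeleton, then by definition of a proper coloring we would need $r(v_0)\neq r(v_2)$, contradicting $r(v_0)=r(v_2)$. Hence $\{v_0,v_2\}\notin C$. The same argument applied to $\{v_1,v_3\}$ using $r(v_1)=r(v_3)$ shows $\{v_1,v_3\}\notin C$. Both diagonals are therefore missing from the $1$-skeleton, which is exactly the condition for the square to be empty.

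Putting the two steps together: the hypothesis ``no empty squares in $C$'' prevents the existence of any square with the color pattern required by Proposition~\ref{prop:goodnat1}, so $B(C,r)$ cannot have two coordinates on which all four patterns $(v_0,v_1),(v_2,v_1),(v_0,v_3),(v_2,v_3)$ are realized; equivalently, $d_N(B(C,r))\leq 1$. I do not expect any real obstacle here; the entire argument is a one-line observation about proper colorings, and the heavy lifting has already been done in Proposition~\ref{prop:goodnat1}.
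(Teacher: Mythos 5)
Your proof is correct and matches the paper's argument exactly: both take the contrapositive of Proposition~\ref{prop:goodnat1}, obtain a bichromatic square from the assumption $d_N(B(C,r))\geq 2$, and observe that the proper coloring forces both diagonals to be non-edges, making the square empty. No issues.
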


\begin{proof}
By Proposition~\ref{prop:goodnat1},
if the Natarajan dimension of $B(C, r)$ is at least $2$, 
then there is a square $v_0\to v_1 \to v_2 \to v_3 \to v_0$ in $C$ such that $r(v_0)=r(v_2)$ and $r(v_1)=r(v_3)$.
Because $r$ is a proper coloring,
the square must be empty.
\end{proof}

\subsection{The Simplicial Complex} 
\label{sec:polish}

The goal of this section is to state the construction by~\cite*{januszkiewicz2003hyperbolic} of the simplicial complexes
we need.

How can we build a complex $C$,
that is pure, has a proper coloring and satisfies replacement?
This is quite easy, and we shall return to it below.
The reason is that we did not insist that $C$ is finite.
The challenge is to have all these properties in a finite object.

A baby version of this difficulty appears already in graph theory.
It is fairly easy to build an infinite regular tree,
but constructing finite regular graphs is more challenging.
Group theory provides a fundamental and powerful mechanism
to ``fold'' the infinite tree to a finite regular graph.
If the infinite tree is thought of as a Cayley graph of some group $F$,
and $N$ is a normal subgroup of $F$ of finite index,
then the ``modulo $N$'' operation allows to fold the tree to a finite graph.
Many useful constructions of finite graphs are obtained via this mechanism.

Coming back to an infinite complex that is pure,
properly colored and satisfies replacement,
we can simply start with a face of dimension $d$,
connect it to $d+1$ new faces by adding new vertices, and keep going indefinitely.
This construction corresponds to an infinite regular tree
(see also Example~\ref{ex:tree}).
It is easy to build, but utterly useless for us.
The real difficulty is to ``fold'' it to be finite.
What does ``fold'' even mean?
The solution is again algebraic,
but it uses the more abstract language of coset complexes.

Let $F$ be a group (finite or infinite).
A coset of a subgroup $H \leq F$ is 
a set of the form $g H = \{g h : h\in H\}$.
The {\it coset complex} defined by subgroups $H_1,\ldots, H_d\leq F$
is the simplicial complex $C=C_F(H_1,\ldots, H_d)$ that is defined as follows.
The vertices of $C$ are the cosets of the groups
$H_1,\ldots , H_d$, and a set of cosets $\sigma$
is a simplex in $C$ if and only if the intersection of all cosets in $\sigma$ is non-empty: $\sigma\in C \iff \bigcap_{L\in\sigma}L \neq\emptyset$.
Stated differently, the complex is the {\em nerve} of the set of all cosets.

The following theorem states the existence of the coset complexes we need.

\begin{theorem}[\cite*{januszkiewicz2003hyperbolic}]\label{thm:polish}
For every integer $d >1$, there exists a finite group $F$, and $d$ subgroups $H_1,\ldots, H_d\leq F$ such that the following hold:
\begin{enumerate}
    \item  For every $i\in[d]$, we have $(\cap_{j\neq i} H_j)\setminus H_i\neq\emptyset$.
    \item The coset complex $C_F(H_1,\ldots, H_d)$ does not contain empty squares.
\end{enumerate}
\end{theorem}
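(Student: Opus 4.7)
My plan, following Januszkiewicz-\'Swi\k{a}tkowski, is to first construct an infinite universal example and then pass to a finite quotient that preserves the key combinatorial properties. The geometry of negative curvature plays a decisive role in both stages.

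For the universal version, I would start from a right-angled Coxeter group $W$ with generators $s_1,\ldots,s_n$ and defining nerve $L$ (the flag complex on the generators with a simplex whenever the corresponding generators pairwise commute). The Davis complex $\widetilde{C}$ of $W$ is contractible, admits a proper cocompact $W$-action, and is naturally identified with (a subdivision of) the coset complex $C_W(K_1,\ldots,K_d)$, where the $K_i$ are the spherical parabolic subgroups corresponding to the maximal simplices of $L$. By Moussong's theorem, $\widetilde C$ is CAT($-1$), and in particular contains no flat squares, precisely when $L$ is flag and has no empty 4-cycles in its 1-skeleton. For any such $L$, property (1) in the universal setting is a routine check using the Coxeter normal form (the generator $s_i$ witnesses membership in all parabolics $K_j$ except the one whose defining clique omits $s_i$), and property (2) follows from the no-empty-4-cycles hypothesis on $L$ combined with the hyperbolicity of $\widetilde C$.

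To produce the finite group $F$, I would use residual finiteness of $W$. Coxeter groups are linear over $\mathbb{Z}$ via the Tits representation, so by Selberg's lemma $W$ contains a torsion-free normal subgroup $N$ of finite index, and this can be chosen arbitrarily deep. Setting $F := W/N$ and letting $H_i$ be the image of $K_i$ in $F$, each $K_i$ is finite (spherical parabolic) and $N$ is torsion-free, so $N \cap K_i = \{e\}$ and $H_i \cong K_i$. This transports property (1) verbatim to the finite setting. For property (2), an empty square in $C_F(H_1,\ldots,H_d)$ lifts to a closed path of length $4$ in $\widetilde C$ whose endpoints differ by an element of $N$. Taking $N$ to have injectivity radius larger than an absolute constant depending only on $d$ forces any such lift to close up already in $\widetilde C$, producing an empty square upstairs and contradicting Stage 1.

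The main obstacle is constructing the nerve $L$. The requirements that $L$ be flag, contain no empty 4-cycles, be pure of dimension $d-2$ (so that $\widetilde C$ has the right dimension), and have only finite spherical parabolics at its maximal simplices are in sharp tension when $d$ is large: as the abstract of \cite*{januszkiewicz2003hyperbolic} notes, Moussong conjectured no such $L$ exists in high dimensions, and Gromov expected at best an arithmetic construction requiring substantial number theory. The essential content of Januszkiewicz-\'Swi\k{a}tkowski is an explicit inductive construction of such an $L$ for every $d\geq 2$, built from carefully chosen orbit configurations in hyperbolic Coxeter polytopes. Once $L$ is in hand, the two stages above are comparatively routine; without it, the whole strategy collapses.
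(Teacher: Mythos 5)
Your high-level strategy (construct an infinite hyperbolic ``universal'' object, then pass to a sufficiently deep finite quotient) matches the spirit of Januszkiewicz--\'Swi\k{a}tkowski, and you correctly identify that the whole difficulty is in producing the right combinatorial object in the first place. However, there are two concrete gaps in the way you set this up, and they would break the argument if carried out.

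First, the framework is mismatched. The coset complex $C_F(H_1,\ldots,H_d)$ the theorem needs is pure of dimension $d-1$, carries a proper $d$-coloring by subgroup index, and satisfies the replacement property used later in Proposition~\ref{prop:polishreduce}. You propose to realize this as (a subdivision of) the Davis complex of a right-angled Coxeter group $W_L$, with $K_1,\ldots,K_d$ the spherical parabolics corresponding to the maximal simplices of $L$. But the Davis complex is the poset realization over \emph{all} spherical cosets, not just cosets of $d$ chosen maximal parabolics, and in general it is not the barycentric subdivision of $C_W(K_1,\ldots,K_d)$ unless the face poset of $L$ has exactly the ``boolean lattice $\{0,1\}^{[d]}$ minus top'' shape. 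That is a severe constraint on $L$: for $d\ge 3$ it forces $L$ to be (up to coning) the boundary of a simplex, which is not flag, so no right-angled Coxeter nerve with the needed poset structure exists. This is not a presentation issue; it is the reason the actual construction does \emph{not} start from a Coxeter nerve $L$. Rather, Januszkiewicz--\'Swi\k{a}tkowski build a finite group $F=F_{[d]}$ directly as a \emph{development of an extra retractible complex of finite groups on the poset $\{0,1\}^{[d]}$}: every $A\subseteq[d]$ carries a finite group $F_A$ with $F_A\le F_B$ for $A\subseteq B$, the $F_{\{i\}}=\{1,z_i\}$ are generated by involutions, and one sets $H_i=F_{[d]\setminus\{i\}}$. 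The complex $C_F(H_1,\ldots,H_d)$ is then the \emph{nerve} of a hyperbolic Coxeter group; it is not the Davis or Coxeter complex of one. Your picture inverts the roles of $L$ and the Coxeter group.

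Second, residual finiteness plus Selberg's lemma is not the mechanism by which finiteness is achieved, and it is insufficient on its own. A deep torsion-free quotient of a hyperbolic group preserves girth, but what the construction actually needs to preserve is the \emph{algebraic} intersection pattern $F_A\cap F_B=F_{A\cap B}$ and $F_\emptyset=\{1\}$ (Propositions 3.2, 4.1 and Definitions 4.4, 5.8 in \citep{januszkiewicz2003hyperbolic}). A generic finite-index torsion-free normal subgroup need not respect these intersections. This is precisely what ``extra retractibility'' is for: it supplies homomorphisms to finite groups whose kernels pin down a finite-index normal subgroup so that the quotient retains the intersection property, and Proposition 5.12 there then gives the no-empty-square conclusion. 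Also, as a smaller point, Moussong's criterion for right-angled nerves gives word hyperbolicity of $W_L$, not that the piecewise-Euclidean Davis complex is CAT($-1$); the two are not interchangeable in the lifting argument you sketch.

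In short: you have the right shape of argument and the right assessment of where the difficulty lies, but the universal object is a development of a complex of groups on $\{0,1\}^{[d]}$, not a Davis complex, and the passage to a finite model goes through extra retractibility, not Selberg's lemma. Replacing your Stage~1 and Stage~2 with the extra-retractibility framework is not a cosmetic fix; it is where the content lives.
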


Theorem~\ref{thm:polish} is a consequence of a deep construction by~\cite*{januszkiewicz2003hyperbolic} which combines tools and ideas
from algebra and topology that are beyond the scope of our work. 
In Appendix~\ref{app:polish} we formally derive Theorem~\ref{thm:polish} using results stated in \citep*{januszkiewicz2003hyperbolic}. 
It is rather a formality, because all ideas are already in that paper,
but the exact result we need, unfortunately, is not explicitly stated.
This derivation is {\em not} self-contained and uses concepts that are defined in~\citep*{januszkiewicz2003hyperbolic}. 

Nevertheless, let us provide a simplified and high-level description 
of their approach.
The proof of Theorem~\ref{thm:polish} is by induction on $d$.
The group $F$ is generated by $d$ involutions $Z = \{z_1,\ldots,z_{d}\}$.
The involution condition $z^2 = 1$ corresponds to 
having a {\em single} neighbor in each coordinate in the corresponding concept class.
This is a strong version of the pseudo-cube condition.
The subgroup $H_i$ is generated by 
the $d-1$ involutions $Z \setminus \{z_i\}$.

How can we apply induction?
Imagine that each $H_i$ plays the role of $F$.
So, $H_i$ is finite group and it has $d-1$ subgroups that yield a 
pseudo-cube of dimension $d-1$.
These $d-1$ subgroups of $H_i$ are generated by
$Z \setminus \{z_i,z_j\}$ for each $j \in [d] \setminus \{i\}$.
In other words, we have $d$ constructions for dimension $d-1$
that are somehow ``glued together using algebra''.

All these finite groups naturally live inside one big group $\mathcal{F}$.
This group $\mathcal{F}$ is the free product of the $H_i$'s modulo the ``obvious''
relations that are there because e.g.~$z_1$ is the same
inside $H_2$ and inside $H_3$ (i.e., $\mathcal{F}$ is the free product with amalgamation). 
The group $\mathcal{F}$, however, is infinite.
We obtain an infinite simplicial complex
$C_\mathcal{F} = C_\mathcal{F}(H_1,\ldots,H_d)$.
Again, an infinite object that we need to make finite. 
To do so, we need to carefully identify a normal subgroup
of $\mathcal{F}$ of finite index, so that after we divide by it
``everything still works''.
Where should we look for this magical subgroup?

One key idea is to replace the no empty square condition 
by a stronger algebraic condition that the groups $H_i$
and their subgroups satisfy.
This condition is called {\em extra retractibility}. 
It states the existence of certain homomorphisms
between various subgroups of the $H_i$'s.
Isolating the extra retractibility condition is a major and difficult step.
Even verifying that extra retractibility implies the
no empty square condition is not at all trivial.
But the real question is: what did we gain from this algebraic move?

The gain is that we can try to prove that 
the infinite complex $C_{\mathcal{F}}$ is again extra retractible.
This is not the end of the story, but it is a start.
Proving that $C_{\mathcal{F}}$ is extra retractible uses topology as well as
the ``universality'' of free products.
Topological properties of the complex $C_{\mathcal{F}}$
(e.g., it is connected and simply connected)
allow to represent it in a ``non-obvious'' way.
The inductive hypothesis shows that 
extra retractibility ``locally holds'' on $C_{\mathcal{F}}$.
Universality now implies that it also holds globally,
due to the topological properties.

The infinite complex is extra retractible.
So what?
The algebraic nature of extra retractibility serves as a guide
in the search for the magical normal subgroup.
Extra retractibility states the existence of certain homomorphisms (to finite groups).
We can identify a finite index subgroup $\mathcal{K}$ of $\mathcal{F}$
that is contained in all kernels of these homomorphisms.
Taking the normal core $\mathcal{N} = \bigcap_{g \in \mathcal{F}} g \mathcal{K} g^{-1}$
yields a normal subgroup of finite index (!)
that is contained in ``all kernels''. 
This latter property of $\mathcal{N}$ tells us that even after we divide by it,
extra retractibility still holds.
We can finally choose $F = \mathcal{F}/\mathcal{N}$ and complete the inductive step.

Let us return to the main goal of this section,
deducing the needed concept class
from the construction of~\cite*{januszkiewicz2003hyperbolic}.

\begin{prop}[There is a Good Complex
with No Empty Squares]
\label{prop:polishreduce}
Let $F$ and $H_1,\ldots, H_d$ be as in Theorem~\ref{thm:polish}. 
The coset complex
$C=C_F(H_1,\ldots, H_d)$ has dimension $d-1$, is good and
has no empty squares.
\end{prop}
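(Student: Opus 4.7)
My plan is to verify each of the defining properties of a good complex for $C=C_F(H_1,\ldots,H_d)$ directly from the definition of a coset complex, using the two conclusions of Theorem~\ref{thm:polish} only where necessary. The no-empty-squares conclusion is handed to us by conclusion (2), so the real work is checking that $C$ is finite, pure, $(d-1)$-dimensional, properly colorable, and satisfies replacement.

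The first observation I would establish is a \emph{common representative lemma}: if $\sigma=\{g_1H_{i_1},\ldots,g_kH_{i_k}\}$ is a face, then any $g$ in the (non-empty) intersection $\bigcap_\ell g_\ell H_{i_\ell}$ satisfies $g_\ell H_{i_\ell}=gH_{i_\ell}$ for every $\ell$, so faces can always be written using a single group element. This immediately gives finiteness (since $F$ is finite), an upper bound of $d$ on the size of a face (two distinct cosets of the same $H_i$ are disjoint), and a proper coloring $r(gH_i):=i$ of the $1$-skeleton (two same-colored vertices in a common face must coincide). It also gives dimension exactly $d-1$, since $\{H_1,\ldots,H_d\}$ is a face (the identity lies in every $H_i$).

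For purity, I would take a face $\sigma$, choose a common representative $g$, and for any index $j$ not already represented in $\sigma$ note that $\sigma\cup\{gH_j\}$ still contains $g$ in the intersection, hence is a face. Iterating extends any face to a face of dimension $d-1$. For replacement, I would use conclusion (1) of Theorem~\ref{thm:polish}: given a maximal face, write it as $\{gH_1,\ldots,gH_d\}$ and for the vertex $gH_i$ pick $h\in(\bigcap_{j\neq i}H_j)\setminus H_i$; then $g':=gh$ satisfies $g'H_j=gH_j$ for all $j\neq i$ (since $h\in H_j$) and $g'H_i\neq gH_i$ (since $h\notin H_i$), so $(\sigma\setminus\{gH_i\})\cup\{g'H_i\}$ is the desired replaced face.

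The only step I expect to require any care is the replacement step, because one has to be honest about the role of conclusion (1) of Theorem~\ref{thm:polish}: without it the coset complex need not admit an $i$-replacement at every face, and this is precisely why the Januszkiewicz--\'{S}wi\k{a}tkowski construction takes that condition as an axiom. Conclusion (2) is then invoked in one line to finish: $C$ contains no empty squares by hypothesis, so combined with the above $C$ is a good $(d-1)$-dimensional complex with no empty squares, which is exactly what is asserted.
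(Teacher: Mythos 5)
Your proof is correct and follows essentially the same route as the paper: finiteness from finiteness of $F$, the common-representative observation to get the structure $\{gH_i : i\in I\}$ of faces, purity by extending to a full set of $d$ cosets, the coloring $gH_i \mapsto i$, replacement via item~(1) of Theorem~\ref{thm:polish}, and no empty squares handed over directly by item~(2). One small caveat: the replacement property must hold for \emph{all} faces, not just maximal ones, but your argument applies verbatim with $\{gH_1,\ldots,gH_d\}$ replaced by $\{gH_i : i\in I\}$ for any $I\subseteq[d]$, so this is a one-word fix.
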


\begin{proof}
%
Because $F$ is finite, $C$ is finite as well.

Let us prove that $C$ is pure of dimension $d-1$.
If $\sigma$ is a face,
then there is $g \in \bigcap_{L\in\sigma}L$.
Because every two distinct cosets of the same subgroup are disjoint,
the face $\sigma$ is of the form $\{g H_i : i \in I\}$
for some $I \subseteq [d]$.
The face $\sigma$ is contained in the maximal 
face $\{g H_i : i \in [d]\}$ which has dimension $d-1$.

There is a straightforward proper coloring of $C$ with $d$ colors.
Color each vertex of the form $g H_i$ by the color $i$.
{This is a coloring because a subset of a group
can be a coset of at most one subgroup.}
This is a proper coloring because two distinct cosets of the same subgroup are disjoint.
%

Finally, we prove that $C$ satisfies replacement.
Let $\sigma \in C$. 
As above, we can write $\sigma = \{g H_i : i \in I\}$ for some $I \subseteq [d]$.
Consider a vertex $gH_k$ inside $\sigma$.
By construction of $C$,
let $z \in (\cap_{j\neq k} H_j)\setminus H_k$.
For every $j \neq k$,
we have $g z H_j = gH_j$.
Because $z \not \in H_k$,
we can conclude $g z H_k \neq gH_k$.
It follows that $(\sigma \setminus \{g H_k\}) \cup \{g z H_k\}
= \{g z H_i : i \in I\}$
is also a face.

%
%
%
\end{proof}

\subsection{Wrapping up}\label{sec:wrapup}
\begin{proof}[Proof of Theorem~\ref{thm:lb}]
Theorem~\ref{thm:polish}, Proposition~\ref{prop:polishreduce}
and Corollary~\ref{cor:noSquareisN}
imply that for every $d$, there exists a $d$-dimensional pseudo-cube $B_d\subseteq Y_d^{X_d}$ where $|X_d|=d$ with Natarajan dimension $1$.
We may assume that the label-sets $Y_d$'s are pairwise disjoint,
and that the domains $X_d$'s are pairwise disjoint.

Construct the ``disjoint union'' of all these classes.
Let $\X=\bigcup_{d} X_d$.
Each $h \in B_d$ is a partial map on $\X$
because it is defined only on $X_d$.
Let $\star$ be a new label such that $\star\notin \bigcup_d Y_d$.
Extend each such $h$ by setting it to be $\star$ outside $X_d$. 
Denote by $H_d$ the collection of extensions of maps in $B_d$.
Finally, let 
$$\H=\bigcup_d H_d.$$
By construction, 
the DS dimension of $\H$ is at least $d$ for every integer $d$
because it contains a copy of $B_d$.
%

It remains to prove that the Natarajan dimension of $\H$ is $1$.
It is at least $1$ because $\lvert \H\rvert \geq 2$. 
The last thing to verify is that the Natarajan dimension is less than $2$.
Let $x_1,x_2\in \X$ be a pair of distinct points and assume towards contradiction that $\{x_1,x_2\}$ is N-shattered by $\H$.
If $x_1\in X_{d_1}$ and $x_2\in X_{d_2}$ for $d_1\neq d_2$,
then every function $h\in \H$ satisfies that $\star\in\{h(x_1),h(x_2)\}$ and therefore $\{x_1,x_2\}$ is not N-Shattered by $\H$.
The last remaining case is that $x_1,x_2\in X_{d}$ for the same $d$. 
In this case, every function $h\in \H\setminus H_d$ satisfies that $h(x_1)=h(x_2)=\star$ and there is no function $h\in \H$ such that $\{h(x_1),h(x_2)\}= \{\star, y\} $ for $y\neq \star$.
It follows that $\{x_1,x_2\}$ must be N-shattered by $H_d$ which is a contradiction because the Natarajan dimension of $B_d$ is $1$.
\end{proof}

\bibliographystyle{plainnat}
\bibliography{bib}

\begin{thebibliography}{28}
\providecommand{\natexlab}[1]{#1}
\providecommand{\url}[1]{\texttt{#1}}
\expandafter\ifx\csname urlstyle\endcsname\relax
  \providecommand{\doi}[1]{doi: #1}\else
  \providecommand{\doi}{doi: \begingroup \urlstyle{rm}\Url}\fi

\bibitem[Alon et~al.(2021)Alon, Hanneke, Holzman, and Moran]{alon2021theory}
Noga Alon, Steve Hanneke, Ron Holzman, and Shay Moran.
\newblock A theory of {PAC} learnability of partial concept classes.
\newblock \emph{arXiv:2107.08444}, 2021.

\bibitem[Ben-David et~al.(1995)Ben-David, Cesabianchi, Haussler, and
  Long]{bendavid1995characterizations}
Shai Ben-David, Nicolo Cesabianchi, David Haussler, and Philip~M Long.
\newblock Characterizations of learnability for classes of \{0,...,n\}-valued
  functions.
\newblock \emph{Journal of Computer and System Sciences}, 50\penalty0
  (1):\penalty0 74--86, 1995.

\bibitem[Blumer et~al.(1989)Blumer, Ehrenfeucht, Haussler, and
  Warmuth]{blumer:89}
Anselm Blumer, Andrzej Ehrenfeucht, David Haussler, and Manfred~K. Warmuth.
\newblock Learnability and the {Vapnik-Chervonenkis} dimension.
\newblock \emph{Journal of the ACM}, 36\penalty0 (4):\penalty0 929--965, 1989.

\bibitem[Brukhim et~al.(2021)Brukhim, Hazan, Moran, and
  Schapire]{brukhim2021multiclass}
Nataly Brukhim, Elad Hazan, Shay Moran, and Robert~E. Schapire.
\newblock Multiclass boosting and the cost of weak learning.
\newblock In \emph{NIPS}, 2021.

\bibitem[Daniely and Shalev-Shwartz(2014)]{daniely2014optimal}
Amit Daniely and Shai Shalev-Shwartz.
\newblock Optimal learners for multiclass problems.
\newblock In \emph{COLT}, pages 287--316, 2014.

\bibitem[Daniely et~al.(2012)Daniely, Sabato, and
  Shalev{-}Shwartz]{DanielySS12}
Amit Daniely, Sivan Sabato, and Shai Shalev{-}Shwartz.
\newblock Multiclass learning approaches: {A} theoretical comparison with
  implications.
\newblock In \emph{NIPS}, pages 494--502, 2012.

\bibitem[Daniely et~al.(2015{\natexlab{a}})Daniely, Sabato, Ben-David, and
  Shalev-Shwartz]{daniely2015multiclass}
Amit Daniely, Sivan Sabato, Shai Ben-David, and Shai Shalev-Shwartz.
\newblock Multiclass learnability and the {ERM} principle.
\newblock \emph{The Journal of Machine Learning Research}, 16:\penalty0
  2377--2404, 2015{\natexlab{a}}.

\bibitem[Daniely et~al.(2015{\natexlab{b}})Daniely, Schapira, and
  Shahaf]{daniely2015inapproximability}
Amit Daniely, Michael Schapira, and Gal Shahaf.
\newblock Inapproximability of truthful mechanisms via generalizations of the
  vc dimension.
\newblock In \emph{STOC}, pages 401--408, 2015{\natexlab{b}}.

\bibitem[David et~al.(2016)David, Moran, and Yehudayoff]{david2016supervised}
Ofir David, Shay Moran, and Amir Yehudayoff.
\newblock Supervised learning through the lens of compression.
\newblock In \emph{NIPS}, pages 2784--2792, 2016.

\bibitem[GAP(2021)]{gap4}
GAP.
\newblock The gap group, gap -- groups, algorithms, and programming.
\newblock 2021.

\bibitem[Haussler(1992)]{Haussler92}
David Haussler.
\newblock Decision theoretic generalizations of the {PAC} model for neural net
  and other learning applications.
\newblock \emph{Inf. Comput.}, 100\penalty0 (1):\penalty0 78--150, 1992.

\bibitem[Haussler(1995)]{haussler1995sphere}
David Haussler.
\newblock Sphere packing numbers for subsets of the boolean n-cube with bounded
  {V}apnik-{C}hervonenkis dimension.
\newblock \emph{Journal of Combinatorial Theory, Series A}, 69\penalty0
  (2):\penalty0 217--232, 1995.

\bibitem[Haussler and Long(1995)]{HausslerL95}
David Haussler and Philip~M. Long.
\newblock A generalization of {S}auer's lemma.
\newblock \emph{J. Comb. Theory, Ser. {A}}, 71\penalty0 (2):\penalty0 219--240,
  1995.

\bibitem[Haussler et~al.(1994)Haussler, Littlestone, and
  Warmuth]{haussler1994predicting}
David Haussler, Nick Littlestone, and Manfred~K Warmuth.
\newblock Predicting $\{$0, 1$\}$-functions on randomly drawn points.
\newblock \emph{Information and Computation}, 115\penalty0 (2):\penalty0
  248--292, 1994.

\bibitem[Januszkiewicz and
  {\'S}wi{\k{a}}tkowski(2003)]{januszkiewicz2003hyperbolic}
Tadeusz Januszkiewicz and Jacek {\'S}wi{\k{a}}tkowski.
\newblock Hyperbolic coxeter groups of large dimension.
\newblock \emph{Commentarii Mathematici Helvetici}, 78\penalty0 (3):\penalty0
  555--583, 2003.

\bibitem[Littlestone and Warmuth(1986)]{littlestone1986relating}
Nick Littlestone and Manfred Warmuth.
\newblock Relating data compression and learnability.
\newblock \emph{Unpublished manuscript}, 1986.

\bibitem[Natarajan(1988)]{Natarajan88up}
Balas~K. Natarajan.
\newblock Some results on learning.
\newblock \emph{Unpublished manuscript}, 1988.

\bibitem[Natarajan(1989)]{natarajan1989learning}
Balas~K. Natarajan.
\newblock On learning sets and functions.
\newblock \emph{Machine Learning}, 4\penalty0 (1):\penalty0 67--97, 1989.

\bibitem[Natarajan and Tadepalli(1988)]{NatarajanT88}
Balas~K. Natarajan and Prasad Tadepalli.
\newblock Two new frameworks for learning.
\newblock In \emph{ICML}, pages 402--415, 1988.

\bibitem[Pollard(1990)]{Pollard90}
David Pollard.
\newblock Empirical processes: Theory and applications.
\newblock \emph{NSF-CBMS Regional Conference Series in Probability and
  Statistics}, 2:\penalty0 1--86, 1990.

\bibitem[Rubinstein et~al.(2006)Rubinstein, Bartlett, and
  Rubinstein]{rubinstein2006shifting}
Benjamin Rubinstein, Peter Bartlett, and J~Hyam Rubinstein.
\newblock Shifting, one-inclusion mistake bounds and tight multiclass expected
  risk bounds.
\newblock In \emph{NIPS}, pages 1193--1200, 2006.

\bibitem[Schapire and Freund(2012)]{schapire2012boosting}
Robert~E Schapire and Yoav Freund.
\newblock \emph{Boosting: Foundations and algorithms}.
\newblock Cambridge University Press, 2012.

\bibitem[Shalev-Shwartz and Ben-David(2014)]{shalev2014understanding}
Shai Shalev-Shwartz and Shai Ben-David.
\newblock \emph{Understanding machine learning: From theory to algorithms}.
\newblock Cambridge University Press, 2014.

\bibitem[Valiant(1984)]{Valiant84}
Leslie~G. Valiant.
\newblock A theory of the learnable.
\newblock In \emph{STOC}, pages 436--445, 1984.

\bibitem[Vapnik(1989)]{Vapnik89}
Vladimir Vapnik.
\newblock Inductive principles of the search for empirical dependences (methods
  based on weak convergence of probability measures).
\newblock In \emph{COLT}, pages 3--21, 1989.

\bibitem[Vapnik and Chervonenkis(1968)]{Vapnik68}
Vladimir Vapnik and Alexey Chervonenkis.
\newblock On the uniform convergence of relative frequencies of events to their
  probabilities.
\newblock In \emph{Proc. USSR Acad. Sci.}, 1968.

\bibitem[Vapnik and Chervonenkis(1974)]{vapnik:74}
Vladimir Vapnik and Alexey Chervonenkis.
\newblock \emph{Theory of Pattern Recognition}.
\newblock Nauka, Moscow, 1974.

\bibitem[von Neumann(1928)]{neumann1928theorie}
John von Neumann.
\newblock Zur theorie der gesellschaftsspiele.
\newblock \emph{Mathematische Annalen}, 100\penalty0 (1):\penalty0 295--320,
  1928.

\end{thebibliography}

\appendix

\section{The Simplicial Complex}\label{app:polish}

This parts explains how to deduce
Theorem~\ref{thm:polish} from the results that are stated in~\citep*{januszkiewicz2003hyperbolic}.
This appendix uses definitions and theorems from that paper.
The key idea is to construct a {\it development} $L_d$
of an {\it extra retractible} complex of finite groups on the poset
$\{0,1\}^{[d]}$;
see Section 6 in~\citep{januszkiewicz2003hyperbolic}.
The complex of groups consists of a monotone mapping from subsets of $[d]$ to finite groups. Each $A\subseteq [d]$ is assigned a finite group $F_A$ such that $F_A \leq F_B$ whenever $A\subseteq B$.  
Extra retractibility further implies that
\begin{align}
    \forall A,B\subseteq [d] \ \ \ F_A\cap F_B & = F_{A\cap B}\label{eq:cap}\\
    F_{\emptyset} & =\{1\}\label{eq:unit}
\end{align}
Equation~\ref{eq:cap} follows from Propositions 3.2 and 4.1 in \citep{januszkiewicz2003hyperbolic}.
Equation~\ref{eq:unit} follows because an extra retractible complex of groups is {\it reduced}; see Definitions~4.4 and~5.8 in \cite{januszkiewicz2003hyperbolic}.

In the construction, the groups are generated by involutions.
For each $i\in[d]$ the group~$F_{\{i\}}$ 
is $\{1,z_i\}$ where $z_i^2=1$.
To prove Theorem~\ref{thm:polish}, we set 
\begin{center}
$F=F_{[d]}$ and $H_i=F_{[d]\setminus\{i\}}$.
\end{center}
With these choices, the development $L_d$ is isomorphic to the coset complex $C=C(H_1,\ldots, H_d)$.

{To justify Item 1, we shall prove} that $z_i\in (\cap_{j\neq i}H_j)\setminus H_i$.
Because $z_i\in F_{\{i\}}$
and by Equation~\ref{eq:cap},
\[F_{\{i\}}\cap H_i = F_{\{i\}}\cap F_{[d]\setminus\{i\}} = F_{\emptyset}=\{1\} \implies z_i\notin H_i.\]
On the other hand, for $j \neq i$,
\[z_i\in F_{\{i\}}\subseteq F_{[d]\setminus\{j\}}=H_j.\]

Finally, Proposition 5.12 in \citep{januszkiewicz2003hyperbolic} asserts that $C$ contains no empty squares. 

\section{Orientations for infinite graphs}
\label{app:comapct}

\an{Here we complete the proof of Lemma~\ref{lem:orientation}
for infinite graphs.}
Let $\G=(V,E)$ be the one-inclusion graph of $\H$.
Let $\Z$ be the set of pairs $z=(v, e) \in V \times E$ so that $v \in e$.
Let $\K = \{0,1\}^\Z$.
Tychonoff's theorem says that $\K$ is compact
with respect to the product topology.

An orientation corresponds to choosing for each $e \in E$
a single $v \in V$.
In other words, each orientation can be thought of as 
an element of $\K$,
where $\kappa_{(v, e)} = 1$
means that $e$ is oriented towards $v$, and
$\kappa_{(v, e)} = 0$
means that $e$ is not oriented towards $v$.

For every $v \in V$, let 
$A_v \subseteq \K$ be the set of all $\kappa \in \K$
so that there are at most $d$ edges $e \ni v$ so that
$\kappa_{(v, e)} =0$.
For $j \in [d+1]$,
let $B_{v,j} \subseteq \K$ be the set of all $\kappa \in \K$
so that for the edge $e = e_j$ that is in the $j$'th direction of $v$,
there is at most one $u$ so that $\kappa_{(u, e)} =1$.

The complement of the set $A_v$ is open because
it is
$\{\kappa \in \K : \forall i \in [d+1] \ \kappa_{(v,e_i)} =0\}$,
where $e_i$ is the edge in the $i$'th direction of $v$.
The complement of the set $B_{v,j}$ is open because
it is the union over all sets $\{w_1,w_2\}$
of two vertices that are contained in $e = e_j$
of $\{\kappa \in \K : \kappa_{(w_1,e)} =
\kappa_{(w_2,e)} =1 \}$.
The set $\Sigma_v = A_v \cap \bigcap_{j \in [d+1]}B_{v,j}$ is hence closed.

We now claim that for every finite
$U \subset V$ the set $\bigcap_{v \in U} \Sigma_v$ is non-empty.
The finite hyper-graph $\G_U$ that $\G(\H)$ induces on $U$
has an orientation $\sigma$ with maximum out-degree at most $d$.
The orientation $\sigma$ defines an element $\kappa$ in $\bigcap_{v \in U} \Sigma_v$ as follows.
There are two types of edges:
edges in $\G(\H)$ that correspond to 
edges in $\G_U$, and edges that ``disappear'' with the projection to~$U$.
The former type of edges are oriented in $\kappa$ exactly as in $\sigma$.
The latter type of edges are edges $e$
{that contain at most one vertex in $U$ and possibly
other vertices outside $U$.
If $e$ contains a single vertex $u$ from $U$}
then orient $e$ towards~$u$,
and otherwise orient $e$ to an arbitrary element (not in $U$).

Compactness now implies that the intersection of 
all the $\Sigma_v$'s is non-empty.
In particular, there is $\kappa^*$ in $\bigcap_{v \in V} \Sigma_v$.
This $\kappa^*$ can be thought of as a partial orientation,
because every edge $e$ contains some vertex
(and so there is at most one $u \in e$ so that $\kappa^*_{(u, e)}=1$).
For each $v \in V$, there are at most $d$ edges $e \ni v$ so that
$\kappa^*_{(v, e)} =0$.
Complete $\kappa^*$ to a full orientation $\sigma^*$
by arbitrarily orienting all edges that are not oriented in $\kappa^*$.
The out-degree of $\sigma^*$ is still at most $d$,
because the final move from $\kappa^*$ to $\sigma^*$ 
{does not increase out-degrees.}

\end{document}